\documentclass[11pt,a4paper]{article}

\usepackage[a4paper,margin=1in]{geometry}
\usepackage{fontspec}
\usepackage{amsmath,amssymb,amsthm}
\usepackage{unicode-math}
\usepackage{graphicx}
\usepackage{booktabs}
\usepackage{array}
\usepackage{longtable}
\usepackage{float}
\usepackage{xcolor}
\usepackage{caption}
\usepackage[hidelinks,colorlinks=true,citecolor=blue!60!black,linkcolor=blue!50!black,urlcolor=blue!70!black]{hyperref}
\usepackage{enumitem}
\usepackage[numbers,sort&compress]{natbib}
\usepackage{microtype}
\usepackage{authblk}
\usepackage{tikz}
\usetikzlibrary{arrows.meta,positioning,calc,fit,backgrounds}

\theoremstyle{plain}
\newtheorem{theorem}{Theorem}

\newtheorem{proposition}{Proposition}
\theoremstyle{definition}
\newtheorem{definition}{Definition}
\theoremstyle{remark}

\newcommand{\Sh}{\widehat S}
\newcommand{\ASYM}{\mathrm{ASYM}}
\newcommand{\ICC}{\mathrm{ICC}}
\newcommand{\Astep}{\text{A}\!\to\!\text{B}}
\newcommand{\shuf}{\textsc{shuf}}
\newcommand{\only}{\textsc{only}}
\newcommand{\blocked}{\textsc{blocked}}
\definecolor{cV}{RGB}{31,110,168} 
\definecolor{cA}{RGB}{201,120,20} 
\definecolor{cD}{RGB}{158,50,120} 
\definecolor{cGrey}{RGB}{110,110,110}
\definecolor{cOK}{RGB}{20,110,70}
\definecolor{cNO}{RGB}{175,35,35}

\title{\textbf{The Free-Recipe Limit}\\[4pt]
{\large Every recipe effect measures which premise of an idealised learner broke}}

\newif\ifanon
\anonfalse                 
\ifanon
  \author[ ]{Anonymous Author(s)}
  \affil[ ]{Paper under double-blind review}
\else
  \author[1]{Wenhui Chen\thanks{\texttt{mc35092@um.edu.mo}}}
  \author[2]{Jianlin Chen\thanks{\texttt{202330450231@mail.scut.edu.cn}}}
  \author[1]{Ziyao Lin\thanks{\texttt{mc35081@um.edu.mo}}}
  \author[1]{Chi Man Vong\thanks{Corresponding author. \texttt{cmvong@um.edu.mo}}}
  \affil[1]{University of Macau}
  \affil[2]{South China University of Technology}
\fi
\date{}

\ifanon
  \newcommand{\ourline}{this line of work}
  \newcommand{\acompanion}{A concurrent manuscript}
  \newcommand{\thecompanion}{The underdetermination paper}
\else
  \newcommand{\ourline}{our family of papers}
  \newcommand{\acompanion}{A companion manuscript}
  \newcommand{\thecompanion}{The companion underdetermination paper}
\fi

\begin{document}
\maketitle

\begin{abstract}
\noindent
Fix a corpus and send \emph{recipe search} to infinity: try every order of the skills, every
arrangement from blocked to interleaved, every composition, and keep the best. Two quantities
decide what that search was worth: the diameter of the \emph{reachable set} it explores, and the
resolution at which anyone can tell two endpoints apart. Where the diameter falls below the
resolution, no quantity of search converts into a decision, and the signature is not an absence
of winners but \textbf{winners that do not survive re-running}. We call this the
\textbf{recipe-search wall} and measure it with $761$ fine-tuning runs on $12$ base models
($0.5$B--$14$B, three pretraining families) over competition-mathematics skills, in one setting
throughout: base checkpoints, supervised fine-tuning under AdamW, exact-match scoring at $k{=}4$,
single-valued answers. Within one coherent domain at fixed volume the three classical freedoms
average $0.010$--$0.021$ against a $0.019$ floor, and the largest contrast, $0.0619$, clears a
three-seed resolution and then reads $+0.010$ and $-0.015$ on two reruns. Run-to-run variation,
measured on all six pairs, is a property of the cell rather than a constant of the protocol: a
pair fixed in advance resolves $\delta=0.031$, the pair a search selects resolves $0.144$, and
two of the six reverse sign between independent runs. Buying that resolution back costs about
four independent runs per endpoint. Scored on the capability both arms were trained for,
interleaving does beat blocking ($+0.0483$, $t=8.06$, $6$ of $6$ pairs), and $76\%$ of that is
forgetting of the skill trained \emph{first}, a coordinate the customary last-skill score cannot
see. \textbf{The departure with a systematic answer is coherence.} Halve one pooled corpus at
random and let the halves write answers under incompatible but equally correct conventions:
arrangement stops moving capability (both corpora inside a floor at eight seeds) and starts
moving \emph{allocation} between conventions, by two orders of magnitude over the
same-convention control ($-0.268$ and $-0.443$ against $-0.005$), and writing the convention
into the input at training time switches the phenomenon off. Difficulty without dispute reads
$0.06\sigma$ and a genuine domain boundary $0.5\sigma$: heterogeneity is not disagreement. The
switch replicates on a \emph{second} pretraining family, pre-registered before the run: Qwen3-8B
gives $D=-0.0875$ at $4.58$ floors with all five seeds negative against an inert control, so it
now spans two families and four scales. It also survives the standard our own run-to-run
measurement imposes on order: rebuilding the corpus twice from the same pool under different
split seeds and re-executing the protocol gives replicate means $-0.0985$, $-0.1258$ and
$-0.0825$, all six new seeds negative, and a re-execution spread of $0.087$ against the $0.144$
carried by the order cell a search selects. Every firing conflict was still constructed; the one we
drew from real public data turned out to pair conventions our own scorer treats as equal, which
is a result about exact-match evaluation rather than about arrangement. Order is a transient whose sign
crosses zero three times inside a single run, consistent with the two-term budget expansion we
derive, so endpoint-only curriculum comparisons carry their budget as a hidden parameter.
Volume, the one lever nobody calls a recipe, is the one that reliably pays ($+0.0463$ per
doubling). A public scorecard grades all $26$ pre-registered claims: $18$ supported, $5$ failed,
$2$ untested, $1$ mixed.
\end{abstract}

\section{One Thought Experiment, and the Premise That Turned Out to Matter}\label{sec:limit}

Every post-training pipeline answers three questions it rarely asks out loud: which skills to
train together, whether to run one blocked stage per skill or interleave them, and which skill
goes first. The search space is $O(n!)$, each point costs a training run, and the literature
offers no stable guidance. Curriculum and interleaving studies report that order matters, does
not matter, or matters in reverse
\citep{bengio2009curriculum,rohrer2007interleaving,kornell2008interleaving,beyondrandom2025,
goodcurriculum2025,curriculum2026dynamics}, while data-mixing work optimises proportions with
real gains \citep{doremi,doge,gu2025datamixphase}.

Instead of adding one more measurement to that pile, we start from the setting in which the
answer is exact.

\paragraph{The exhaustive search.} Pin the corpus and the per-run compute, and give a team
unlimited patience for \emph{recipe search}: they train every order of the $n$ skills, every
arrangement from fully blocked to fully interleaved, every composition, and keep the best. Send
that search to infinity and two quantities decide what it was worth. Both are defined below, so
that the wall between them is a statement rather than a metaphor; the first is a property of
the system, the second of the instrument, and neither is estimated from the other.

\begin{definition}[Recipe set and reachable set]\label{def:reach}
Fix a corpus, a base model and a training budget. A \emph{recipe} $r$ is a choice of order,
arrangement and composition, drawn from a finite set $\mathcal{S}$ of size $m$. Training under
$r$ and evaluating gives a real-valued endpoint $Y(r)=\mu(r)+\varepsilon_r$, where
$\mu(r)=\mathbb{E}[Y(r)]$ is the recipe's true value and the $\varepsilon_r$ are independent,
mean-zero, with standard deviation $\sigma$ fixed by the seed and the run. The \emph{reachable
set} is $\mathcal{R}=\{\mu(r):r\in\mathcal{S}\}$ and its \emph{diameter} is
$\operatorname{diam}\mathcal{R}=\max_{r}\mu(r)-\min_{r}\mu(r)$.
\end{definition}

\begin{definition}[Resolution]\label{def:res}
A team compares two recipes by running each $k$ times and taking the difference of means, so the
contrast has standard error $\sigma_c=\sigma\sqrt{2/k}$. Its \emph{resolution} $\delta$ is the
smallest true gap it will declare at its chosen power: at $80\%$ power and two-sided $5\%$,
$\delta = 2.80\,\sigma_c$. Resolution is a property of the decision procedure and the number of
runs bought, not of the system.
\end{definition}

\noindent Search buys something only when $\operatorname{diam}\mathcal{R}>\delta$. Below that the
ranking a team recovers is their own noise, and no further search repairs it: the search is not
\emph{underpowered} but \emph{unresolvable}. The failure mode this predicts is specific, so it
is worth stating exactly.

\begin{proposition}[The recipe-search wall]\label{prop:wall}
Let $\hat r=\arg\max_{r\in\mathcal{S}} Y(r)$ be the winner a search returns, and let
$\hat\Delta = Y(\hat r)-Y(\check r)$ be its margin over the observed worst $\check r$. Write
$\Delta=\operatorname{diam}\mathcal{R}$ and let $\varepsilon_r$ be Gaussian. Then
\begin{enumerate}[leftmargin=1.6em,itemsep=1pt,topsep=2pt]
\item \emph{(the margin is manufactured)} $\;\mathbb{E}[\hat\Delta]\;\ge\;\Delta
 \;+\; 2\sigma\bigl(1-o(1)\bigr)\sqrt{2\log m}$, so a search over $m$ recipes reports a
 margin that grows with $m$ at fixed $\Delta$, and grows even when $\Delta=0$;
\item \emph{(and it does not replicate)} if $\Delta=0$, an independent repetition of the whole
 search re-selects the same $\hat r$ with probability exactly $1/m$, and preserves the sign
 of the contrast between any fixed pair with probability exactly $1/2$. Both quantities are
 continuous in $\Delta/\sigma_c$, so they degrade smoothly to chance as the wall is
 approached from above.
\end{enumerate}
\end{proposition}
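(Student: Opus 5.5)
The plan is to treat the two parts separately. Part~2 is a symmetry argument. Part~1 is a Gaussian extreme-value computation, and it needs a caveat that I flag below.

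\textbf{Part 2 first, since it is the cleanest.} When $\Delta=0$ all $\mu(r)$ coincide, so $Y(r)=\mu+\varepsilon_r$ with the $\varepsilon_r$ i.i.d.\ $N(0,\sigma^2)$. The vector $(Y(r))_r$ is then exchangeable, and ties occur with probability zero. Hence $\hat r$ is uniform on $\mathcal{S}$.

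An independent repetition draws a second winner $\hat r'$, also uniform and independent of $\hat r$. Therefore
\[
\Pr(\hat r'=\hat r)=\sum_{r}\Pr(\hat r=r)\Pr(\hat r'=r)=m\cdot m^{-2}=1/m .
\]
For a fixed pair $(r,s)$, the contrast $Y(r)-Y(s)\sim N(0,2\sigma^2)$ in each repetition. It is symmetric and continuous, so each repetition is positive with probability $1/2$, independently of the other, and the sign is preserved with probability $\tfrac14+\tfrac14=1/2$. The same computation holds with $k$-run means, with $\sigma_c$ replacing $\sigma\sqrt2$.

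For continuity in $\Delta/\sigma_c$, I would write both probabilities as Gaussian integrals. Re-selection is $\sum_r p_r(\mu/\sigma)^2$ with $p_r=\Pr(\hat r=r)$. Sign agreement is $\Phi(d)^2+\Phi(-d)^2$ with $d=(\mu(r)-\mu(s))/\sigma_c$. The integrands are continuous in the standardized means and bounded by $1$, so dominated convergence gives continuity. Both probabilities return to $1/m$ and $1/2$ as the standardized gaps go to $0$.

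\textbf{Part 1.} Write $Z_r=\varepsilon_r/\sigma$ and $a_m=\mathbb{E}\max_r Z_r$. The classical extreme-value estimate, via a union bound above and a Sudakov/second-moment bound below, gives $a_m=(1-o(1))\sqrt{2\log m}$. By symmetry $\mathbb{E}\min_r Z_r=-a_m$.

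\emph{Case $\Delta=0$.} Here $\mathbb{E}\hat\Delta=\sigma(\mathbb{E}\max Z-\mathbb{E}\min Z)=2\sigma a_m$. This is exactly the claim, and it exhibits growth in $m$ at $\Delta=0$.

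\emph{General $\Delta$.} Using $\max_r Y(r)\ge \mu_{\min}+\sigma\max_r Z_r$ and $\max_r Y(r)\ge Y(r^\ast)$, where $r^\ast$ attains $\mu_{\max}$, and the mirror inequalities for the minimum, one gets
\[
\mathbb{E}\hat\Delta\ge \max\bigl(\Delta,\;2\sigma a_m\bigr).
\]
More usefully, if the maximizing and minimizing values are each attained by a set of size $m'$ with $m'\to\infty$ comparably to $m$ (for instance, the recipes split into two clusters at the extremes), then
\[
\mathbb{E}\max Y\ge \mu_{\max}+\sigma a_{m'}, \qquad \mathbb{E}\min Y\le \mu_{\min}-\sigma a_{m'} .
\]
These give the additive form $\Delta+2\sigma(1-o(1))\sqrt{2\log m}$, because $\log m'\sim\log m$.

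\textbf{Main obstacle.} The stated additive inequality is false without some such condition. With a single isolated top recipe and $\Delta\gg\sigma\sqrt{\log m}$, the winner is essentially forced. Then $\mathbb{E}\hat\Delta\approx\Delta+\sigma a_{m-1}$, only one extreme-value term rather than two.

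I would therefore prove the exact statement at $\Delta=0$. For $\Delta>0$ I would prove it under the clustering hypothesis, and otherwise state the weaker bound $\max(\Delta,2\sigma a_m)$. That weaker bound still yields the qualitative conclusion that the reported margin grows with $m$ at fixed $\Delta$ whenever $\Delta\lesssim 2\sigma a_m$, which is the regime the wall concerns.
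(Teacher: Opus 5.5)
Your Part~2 and your $\Delta=0$ computation in Part~1 are the paper's own argument: exchangeability, a uniform $\arg\max$, a fair-coin sign, continuity from the Gaussian c.d.f., and the expected range of $m$ i.i.d.\ Gaussians. Your objection to the additive form is correct, and it applies to the paper as well. The paper's sketch ends Part~1 with \emph{adding non-constant $\mu$ can only increase the observed spread in expectation}. That sentence gives $\mathbb{E}\hat\Delta\ge 2\sigma a_m$ for every $\mu$; it does not give $\Delta+2\sigma a_m$. Your isolated-top configuration refutes the additive bound when it is read uniformly in $\Delta/\sigma$. So does $\mu_r=rL$ with $L\to\infty$, where $\mathbb{E}\hat\Delta-\Delta\to 0$. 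You should qualify \emph{false}, though. At fixed $\Delta/\sigma$ with $m\to\infty$, the quantity $\Delta/(2\sigma\sqrt{2\log m})$ can be absorbed into the $o(1)$. In that reading the stated inequality holds, and the $\Delta$ term carries no content.

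The genuine gap is your fallback bound $\mathbb{E}\hat\Delta\ge\max(\Delta,2\sigma a_m)$, because the inequalities you cite do not give it.
\begin{itemize}
\item Pairing $\max_rY(r)\ge\mu_{\min}+\sigma\max_rZ_r$ with its mirror $\min_rY(r)\le\mu_{\max}+\sigma\min_rZ_r$ gives only $\mathbb{E}\hat\Delta\ge 2\sigma a_m-\Delta$.
\item Pairing $Y(r^\ast)$ with $Y(r_\ast)$, where $r_\ast$ attains $\mu_{\min}$, gives $\Delta$.
\item The two mixed pairings give $\sigma a_m$.
\end{itemize}
At $\Delta=\sigma a_m$ the best of these is $\sigma a_m$, half the term you claim.

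The missing step is the one the paper asserts without proof. For fixed noise, the range $R(x)=\max_r x_r-\min_r x_r$ is convex in $x$, so $f(\mu)=\mathbb{E}R(\mu+\varepsilon)$ is convex. Since $R(-x)=R(x)$ and $\varepsilon\stackrel{d}{=}-\varepsilon$, the function $f$ is also even. Hence $f(0)\le\tfrac{1}{2}f(\mu)+\tfrac{1}{2}f(-\mu)=f(\mu)$, which is $\mathbb{E}\hat\Delta\ge 2\sigma a_m$ for all $\mu$. An equivalent route uses the i.i.d.\ noise directly: $\mu\mapsto\mathbb{E}\max_rY(r)$ is convex and permutation-invariant, so it is at least $\bar\mu+\sigma a_m$. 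Symmetrically, $\mathbb{E}\min_rY(r)\le\bar\mu-\sigma a_m$.

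For the fixed-$\Delta$ asymptotic reading you do not even need this step. Your own weaker bound already suffices, since $2\sigma a_m-\Delta=\Delta+2\sigma\sqrt{2\log m}\,(1-o(1))$ whenever $\Delta/\sigma$ is held fixed.
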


\begin{proof}[Proof sketch]
(i) $\hat\Delta \ge \max_r \varepsilon_r - \min_r \varepsilon_r$ whenever the $\mu$ are constant,
and the expected range of $m$ i.i.d.\ mean-zero Gaussians is $2\sigma\sqrt{2\log m}(1+o(1))$;
adding non-constant $\mu$ can only increase the observed spread in expectation. (ii) At
$\Delta=0$ the $Y(r)$ are exchangeable, so $\arg\max$ is uniform on $\mathcal{S}$ and two
independent searches agree with probability $1/m$; for a fixed pair the contrast is symmetric
about zero, so its sign is a fair coin, and repetitions are independent. Continuity in
$\Delta/\sigma_c$ is immediate from the Gaussian c.d.f.
\end{proof}

\noindent The wall is therefore joint between a system and the instrument that would measure it,
both of its sides are measurable, and Proposition~\ref{prop:wall} names its operational signature:
\emph{not an absence of winners but winners that do not replicate}. A search below the wall still
returns a best recipe, with a margin that looks larger the harder one searched, and re-running the
comparison returns a different one. \S\ref{sec:budget} reports exactly that: a winner at $0.062$
over $m=6$ pairs, reading $+0.010$ and $-0.015$ on two further runs of the identical protocol.

\paragraph{One setting throughout.} A wall is a joint statement about a
system and an instrument, so it is worth saying at once which of each. Every number in this
paper comes from base (non-instruct) checkpoints, supervised fine-tuning under AdamW,
exact-match scoring at $k{=}4$, and competition-mathematics skills whose answers are single
values. Nothing here is measured on code, on dialogue, on multilingual mixtures, on preference
optimisation, or on a memoryless optimiser, and \S\ref{sec:license} states the remaining limits
at the same resolution as the claims.

\paragraph{The patient student.} Why should
$\operatorname{diam}\mathcal{R}$ be small at all? A student receives two textbooks, $A$ and
$B$. Three idealisations: (i)~\emph{patience}, the student may re-read until nothing changes;
(ii) \emph{coherence}, the books never disagree, so no exercise appears in both with different
answers and no convention in one is contradicted by the other; (iii) \emph{room}, the student
never has to forget one thing to hold another. In that limit every recipe question dissolves.
Finish $A$ then $B$, alternate chapters, shuffle pages into one pile: the endpoint is the same
student. $\mathcal{R}$ collapses to a point, so order, arrangement and (holding the pile fixed)
composition are \emph{exactly free}. The only thing never free is how many pages the pile
contains.

Real fine-tuning is not that limit, so $\mathcal{R}$ is a set and the paper's first job is to
measure its span against $\delta$. The largest recipe contrast we observe on any single corpus
is $0.062$, which \emph{clears} the $0.054$ a three-seed design resolves; re-run twice more
under an identical protocol it reads $+0.010$ and $-0.015$. The apparent winner is the search's
own noise, and \S\ref{sec:budget} prices it. Only one of the two quantities is negotiable: at
fixed $\Delta/\sigma_c$, Proposition~\ref{prop:wall} is information-theoretic and no search
strategy crosses it, whereas $\delta$ belongs to the decision procedure, so anything lowering
$\sigma_c$ moves the wall. It is a price list rather than a prohibition, and what it forbids is
certifying a winner below the resolution one has paid for. The rest of the paper breaks each
idealisation on purpose and asks which of
them enlarges $\mathcal{R}$ enough to clear the wall. One does, by an order of magnitude.

This is not merely a parable. A linear learner trained by gradient descent in the lazy regime
\citep{jacot2018ntk,chizat2019lazy} makes each idealisation a hypothesis and each ``dissolves''
a theorem (\S\ref{subsec:theory}). The reframing we take from it is the paper's organising
claim:
\begin{quote} \emph{A measured recipe effect is never a primitive. It is a measurement of which premise of the free-recipe limit broke, equivalently of what enlarged the reachable set.}
\end{quote}
The three premises do not break equally, and a paper giving them equal billing would be
describing its own table of contents rather than its results. What we found, in the order of how
much of it survived its controls: \emph{Disagreement inside one corpus moves a quantity that
accuracy alone cannot see.} Across two different skills, arrangement moves capability:
interleaving beats blocking by $+0.0483$ at $t=8.06$, in $17$ of $18$ (pair, seed) cells, and
$76\%$ of that is the first skill being overwritten, which is forgetting, named as such. What
this paper adds is what happens inside a \emph{single} distribution. Halve one pooled corpus at
random and change only whether the halves write answers under incompatible conventions:
capability stays inside a floor, while the share committed to one convention moves two orders of
magnitude above its own same-convention control, in a direction that control cannot produce, and
writing the convention into the input returns both (\S\ref{sec:null}).

\emph{Patience breaks into a transient rather than a signature.} At finite budget the order-
carrying term grows one power of the budget faster than the symmetric one, so the sign must
move, and it does, twice outside the noise band inside a single run. The effect does not survive
being re-run: the grid's largest order contrast clears the resolution and then reverses on
replication (\S\ref{sec:budget}). Order is not a small effect but an unresolvable one, which is
a different claim with a different remedy.

\emph{Room we could motivate and could not demonstrate.} The predicted sign change appears along
a capacity axis and its own volume-matched control removes it (\S\ref{sec:capacity}). We report
the sequence because the arithmetic that caught it is reusable, and grade the claim
\textsc{mixed} on the scorecard rather than quietly dropping it.
\begin{figure}[t]
\centering 
\begin{tikzpicture}[font=\small,>=Stealth, band/.style={draw=black!70,fill=black!3,sharp corners,inner xsep=8pt,inner ysep=7pt, align=center,text width=14.9cm}, col/.style={draw=none,fill=none,sharp corners,inner xsep=6pt,inner ysep=6pt, align=left,text width=4.4cm,anchor=north west}, vd/.style={draw=#1!65,fill=#1!12,sharp corners,inner xsep=6pt,inner ysep=4pt, align=center,text width=4.4cm,anchor=north west,font=\bfseries}, fl/.style={->,semithick,black!45}] \node[band] (lim) {\textbf{\textsc{the free-recipe limit}}\quad patience $+$ coherence $+$ room\\[2pt] $D=0$ exactly \ \textcolor{black!40}{$\bullet$}\ order-free \ \textcolor{black!40}{$\bullet$}\ $V\ge0$ \ \textcolor{black!40}{$\bullet$}\ one endpoint for every recipe\\[2pt] {\footnotesize not free even here: \textbf{volume}, $+0.0463$ per doubling, $2.4\times$ the floor}}; \node[col=cD,below=9mm of lim.south west,anchor=north west] (c1) {{\scriptsize\itshape\textcolor{cD!75!black}{break coherence \ \S\ref{sec:null},\,\S\ref{sec:conflict}}}\\[1pt] \textbf{capability $\to$ allocation}\\[2pt] \rule{\linewidth}{0.4pt}\\[3pt]
\begin{tabular}{@{}l@{\;\;}l@{}} allocation & $-0.27$, $-0.44$\\ control & $-0.005$\\ capability & inside a floor\\ \emph{keyed} & both $\to0$\\
\end{tabular}
}; \node[col=cA,right=5mm of c1.north east,anchor=north west] (c2)
{{\scriptsize\itshape\textcolor{cA!75!black}{break patience \ \S\ref{sec:budget}}}\\[1pt]
\textbf{sign moves with budget}\\[2pt] \rule{\linewidth}{0.4pt}\\[3pt]
\begin{tabular}{@{}l@{\;\;}l@{}} crossings & $3$ inside one run\\ best cell & $0.062$\\ re-run & $+0.010$, $-0.015$\\
\end{tabular}
}; \node[col=cV,right=5mm of c2.north east,anchor=north west] (c3)
{{\scriptsize\itshape\textcolor{cV!75!black}{break room \ \S\ref{sec:capacity}}}\\[1pt]
\textbf{$V$ changes sign}\\[2pt] \rule{\linewidth}{0.4pt}\\[3pt]
\begin{tabular}{@{}l@{\;\;}l@{}} predicted & $11/12$ cells\\ volume-matched & gone, $1.1\sigma$\\
\end{tabular}
}; 
fit, so the three verdicts stay comparable at a glance however the column text changes.
\node[fit=(c1)(c2)(c3),inner sep=0pt,draw=none] (cols) {}; 
above are top-aligned and of unequal depth, so the 
down to the common lower edge. Ragged box bottoms 
accident; this is the fix that does not require 
need.
\begin{pgfonlayer}{background} \foreach \n/\c in {c1/cD,c2/cA,c3/cV}{ \filldraw[fill=\c!5,draw=\c!65] (\n.north west) rectangle (\n.south east |- cols.south);}
\end{pgfonlayer}
\node[vd=cD,anchor=north west,text=cOK] (v1) at ([yshift=-1.6mm]c1.north west |- cols.south) {survives every control}; \node[vd=cA,anchor=north west,text=cA!70!black] (v2) at (c2.north west |- v1.north) {real, not resolvable}; \node[vd=cV,anchor=north west,text=cNO] (v3) at (c3.north west |- v1.north) {motivated, not shown}; \draw[fl] (lim.south -| c1.north) -- (c1.north); \draw[fl] (lim.south -| c2.north) -- (c2.north); \draw[fl] (lim.south -| c3.north) -- (c3.north);
\end{tikzpicture}
\caption{\textbf{One limit, three ways out of it, three different fates.} Each column breaks one premise \emph{on purpose}; its second line is what the theory says should then move, its table the measurement, its last row the verdict its own controls returned. Breaking coherence does not switch an absent effect on: it changes \emph{which quantity} arrangement moves, and a key written into the input at training time returns both to zero. Allocation is quoted against its own same-convention control, capability and the floors in units of the $0.0191$ contrast floor; $0.062$ is the grid's largest order contrast, beside what the identical protocol returned on re-running.}
\label{fig:map}
\end{figure}
\subsection{The limit as theorems}
\label{subsec:theory}
\paragraph{What this apparatus is for.} Figure~\ref{fig:map} is the map: one limit, three ways
out of it, three different fates. The theorems below are a first-order, lazy-regime
account, displayed because they generate the paper's predictions and say which experiment is
worth running, not because they are load-bearing on their own. They buy \emph{signs, existences
and scope boundaries}: which quantity a departure should move, in which direction, under which
broken premise. They do not buy locations or magnitudes, and that is a finding rather than a
hedge, since every location-level prediction we derived from them died
(\S\ref{sec:license}: six deaths against sixteen supported claims).

A skill $X$ is a finite sample set whose inputs span $\Sh_X$, with orthogonal projection $P_X$;
targets decompose as $w^\star_X=\bar w+\kappa_X$ with a shared core $\bar w$ and a
skill-specific convention $\kappa_X$. Coherence is $\kappa_A=\kappa_B$. Gradient descent on
skill $X$ moves parameters only inside $\Sh_X$, and at convergence acts as
$\theta\mapsto(I-P_X)\theta+P_Xw^\star_X$. Write $\delta:=\theta_0-\bar w$,
$W:=\Sh_{A\cup B}\cap\Sh_B^\perp$ (the directions $A$ supplies outside $B$'s span), and let
$e_S$, $e_J$ be the error vectors of the blocked ($\Astep$) and interleaved arms on $B$.

\begin{theorem}[The limit]\label{thm:limit}
Under patience (each stage to convergence) and coherence:
\begin{enumerate}[label=(\alph*),leftmargin=1.6em,itemsep=1pt,topsep=2pt]
\item \emph{Arrangement:} $e_S-e_J=P_W(I-P_A)\,\delta$. This vanishes identically on $B$'s
training set, vanishes exactly for orthogonal skills, and is otherwise doubly suppressed: it
carries only what stage~$1$ \emph{failed} to learn, re-projected into $W$ at cost
$c=\|P_BP_A\|_{\mathrm{op}}$.
\item \emph{Order:} the directional contrast is carried by the commutator $[P_B,P_A]\,\delta$,
whose norm is $\max_i\cos\theta_i\sin\theta_i\le\tfrac12$ over principal angles: zero for
orthogonal skills \emph{and} for nested ones. Under coherent targets the recency terms it
compares are both $O(\varepsilon)$ in the unexplored mass, so order is free in the limit.
\item \emph{Composition:} without a capacity constraint, $e_O-e_J=P_W\delta$: co-training can
only add directions, so its value $V\ge0$. Observed $V<0$ at matched budget is itself evidence
that some premise broke.
\end{enumerate}
\end{theorem}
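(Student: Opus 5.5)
The plan is to reduce every arm to a product of affine "stage maps" and then work in error coordinates, where coherence makes all maps linear. By the standing assumption, a stage trained to convergence on skill $X$ acts as $T_X\theta=(I-P_X)\theta+P_Xw^\star_X$. This is the lazy-regime fact that gradient descent on a least-squares objective never leaves $\theta_0+\Sh_X$ and converges to the interpolant closest to $\theta_0$. I will take it as given rather than re-derive it.

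Under coherence, $\kappa_A=\kappa_B=:\kappa$, so both skills share the target $w^\star=\bar w+\kappa$. Write the error as $\phi=\theta-w^\star$; the stage map becomes linear, $\phi\mapsto(I-P_X)\phi$. Absorbing the common convention into the core, the initial error is $\delta$. The arms then read:
\begin{itemize}
\item blocked $\Astep$: $e_S=(I-P_B)(I-P_A)\delta$;
\item interleaved, i.e.\ one stage on the pooled set whose inputs span $\Sh_{A\cup B}$: $e_J=(I-P_{A\cup B})\delta$;
\item $B$ only: $e_O=(I-P_B)\delta$.
\end{itemize}
Every claim is then an identity or estimate about these three operators.

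\emph{Part (a).} The key step is the orthogonal splitting $\Sh_{A\cup B}=\Sh_B\oplus W$, which gives $I-P_B=(I-P_{A\cup B})+P_W$. Substituting into $e_S$ gives
\[
e_S=(I-P_{A\cup B})(I-P_A)\delta+P_W(I-P_A)\delta .
\]
Since $\Sh_A\subseteq\Sh_{A\cup B}$, we have $P_{A\cup B}P_A=P_A$, so the first term is exactly $e_J$. This yields $e_S-e_J=P_W(I-P_A)\delta$.

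The special cases then follow directly:
\begin{itemize}
\item B's training inputs lie in $\Sh_B\perp W$, so $\langle x,P_W(\cdot)\rangle=0$ for every such input $x$, and the contrast vanishes on $B$'s training set.
\item For orthogonal skills $W=\Sh_A$, so $P_W(I-P_A)=P_A(I-P_A)=0$.
\item In general the vector $(I-P_A)\delta$ is what stage $1$ left unlearned.
\end{itemize}
For the ``doubly suppressed'' clause I would write $W$ as the range of $(I-P_B)P_A$, restricted to $\Sh_A$ and closed under span. Then bound the re-projection through the principal-angle decomposition of the pair $(\Sh_A,\Sh_B)$. That same decomposition is what I would use to exhibit $c=\|P_BP_A\|_{\mathrm{op}}$ as the controlling constant.

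\emph{Part (b).} Subtracting the two blocked orders gives
\[
(I-P_B)(I-P_A)-(I-P_A)(I-P_B)=P_BP_A-P_AP_B=[P_B,P_A],
\]
so the directional contrast is $[P_B,P_A]\delta$. For its norm I would invoke the canonical (Jordan/Halmos) decomposition of two subspaces. The ambient space splits into invariant pieces: $\Sh_A\cap\Sh_B$, pieces where one projection is zero, and two-dimensional blocks indexed by principal angles $\theta_i\in(0,\pi/2)$. On the first two kinds of piece the projections commute. On a two-dimensional block, with $P_A=\mathrm{diag}(1,0)$ and $P_B$ the projection onto $(\cos\theta_i,\sin\theta_i)$, a $2\times2$ computation gives a commutator of norm $\cos\theta_i\sin\theta_i$. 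Taking the maximum over blocks gives the norm, bounded by $\tfrac12$. It is zero exactly when every angle is $0$ or $\pi/2$, which covers the orthogonal and nested cases.

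The recency statement is the step I expect to be the main obstacle, because ``unexplored mass'' is not yet defined precisely. My first attempt would define $\varepsilon:=\|(I-P_{A\cup B})\delta\|$ plus the stage-$1$ residual measured off the other skill's span. I would then show that each recency term is a product of a projection and $(I-P_X)\delta$, so each is $O(\varepsilon)$; any quantitative version would then be read off from the same block decomposition.

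\emph{Part (c).} Using the same splitting as in (a),
\[
e_O-e_J=\bigl[(I-P_B)-(I-P_{A\cup B})\bigr]\delta=P_W\delta .
\]
Because $P_W\delta\perp(I-P_{A\cup B})\delta$, Pythagoras gives
\[
V=\|e_O\|^2-\|e_J\|^2=\|P_W\delta\|^2\ge0 .
\]
The contrapositive is the stated diagnostic: an observed $V<0$ at matched budget is incompatible with convergence, coherence and an unconstrained span, so at least one of those premises broke.
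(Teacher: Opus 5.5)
\paragraph{Where your proposal matches the paper, and where the gap is.} Your handling of (a), of the commutator identity and its norm in (b), and of $e_O-e_J=P_W\delta$ in (c) follows the paper's own route. You compose the convergence maps, use $P_{A\cup B}=P_B+P_W$ and $P_{A\cup B}P_A=P_A$, and read norms off the two-subspace decomposition. Your sketch of the ``doubly suppressed'' clause closes the same way. In each two-dimensional block, $W$ is the line orthogonal to the $\Sh_B$-line and $(I-P_A)$ projects onto the line orthogonal to $\Sh_A$. So $\|P_W(I-P_A)\|_{\mathrm{op}}=\max_i\cos\theta_i\le c$ over the angles in $(0,\pi/2)$, which gives $\|P_W(I-P_A)\delta\|\le c\,\|(I-P_A)\delta\|$.

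The genuine gap is the recency clause of (b). The paper's $\varepsilon$ is not a norm of $\delta$. Its $\varepsilon_B$ is the mass of $B$'s population lying outside its own training span $\Sh_B$. That is the factor that gates $V$ and $D$, because both live in $W\subseteq\Sh_B^\perp$; it is why (a) vanishes on $B$'s training set, and why machine check~17 finds $|V|,|D|\propto\varepsilon_B$. Against that $\varepsilon$, bounding a recency term by ``a projection times $(I-P_X)\delta$'' controls it only by the stage-1 residual, which does not shrink with unexplored mass. Your bound closes only because you fold $P_Y(I-P_X)\delta$ into $\varepsilon$, and that makes ``order is free'' true by definition rather than proved.

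\paragraph{The missing step.} What is missing is a split of each skill's risk along $\Sh_X\oplus\Sh_X^\perp$. In the arm that trains $X$ last, the $X$-error $(I-P_X)(I-P_Y)\delta$ lies in $\Sh_X^\perp$, so it reaches $X$'s risk only through $\varepsilon_X$. In the arm that trains $X$ first, the $X$-error has training-span component $P_X(I-P_Y)(I-P_X)\delta=-P_XP_Y(I-P_X)\delta=-P_X[P_X,P_Y]\delta$. This is forgetting in the realizable (coherent) case: it does not scale with $\varepsilon_X$, and its norm is at most $\max_i\cos\theta_i\sin\theta_i\,\|(I-P_X)\delta\|$. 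So at $\varepsilon_X=0$ the recency term for $X$ equals the risk of $P_X[P_X,P_Y]\delta$ under $X$'s on-span population.

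The clause can therefore only be established as ``a commutator-carried piece on the training span, plus a part that vanishes with $\varepsilon$''. The first piece vanishes for orthogonal and nested skills. That decomposition is what links the two sentences of (b). Without it, ``both $O(\varepsilon)$'' does not follow, and for generic principal angles it does not hold term by term.

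\paragraph{A smaller point in (c).} Writing $V=\|e_O\|^2-\|e_J\|^2$ silently takes $B$'s risk to be Euclidean. Since $e_O,e_J\in\Sh_B^\perp$, Pythagoras gives $V=\varepsilon_B\|P_W\delta\|^2\ge0$ only if $B$'s off-span mass is isotropic, so state that assumption. Otherwise the cross term can win. Take $\Sh_B=\mathrm{span}(u_1)$, $\Sh_A=\mathrm{span}(u_2)$, $\delta=u_2+u_3$, and let $B$'s unexplored direction be $u_2-u_3$. Then $e_O=u_2+u_3$ and $e_J=u_3$, so $V=0-1<0$ with no capacity constraint anywhere.
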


\begin{figure}[t]
\centering
\begin{tikzpicture}[scale=1.0,>=Stealth,line join=round,font=\small]
\begin{scope}
 \node[anchor=south west,font=\bfseries\small] at (0,4.9) {(a) the two sample spans};
 \fill[cV!12] (0.1,0.9) -- (4.3,0.9) -- (5.0,1.75) -- (0.8,1.75) -- cycle;
 \draw[cV!60] (0.1,0.9) -- (4.3,0.9) -- (5.0,1.75) -- (0.8,1.75) -- cycle;
 \node[cV!80!black,anchor=west] at (0.25,1.32) {$\Sh_B$};
 \fill[cA!14] (1.6,0.35) -- (3.2,0.35) -- (4.5,4.3) -- (2.9,4.3) -- cycle;
 \draw[cA!70] (1.6,0.35) -- (3.2,0.35) -- (4.5,4.3) -- (2.9,4.3) -- cycle;
 \node[cA!80!black,anchor=east] at (2.82,3.95) {$\Sh_A$};
 \fill[cD!25] (2.35,1.78) -- (3.85,1.78) -- (4.45,4.25) -- (2.95,4.25) -- cycle;
 \draw[cD!70,densely dashed] (2.35,1.78) -- (3.85,1.78) -- (4.45,4.25) -- (2.95,4.25) -- cycle;
 \node[cD!85!black] at (3.4,2.9) {$W$};
 \node[cD!85!black,font=\scriptsize,anchor=north west,align=left] at (4.55,4.35)
 {$W=\Sh_{A\cup B}\cap\Sh_B^{\perp}$};
 \node[cD!85!black,font=\scriptsize,anchor=north west,align=left,text width=2.4cm] at (4.6,3.78)
 {the only place transfer can live};
 \draw[cGrey,thin] (2.55,1.32) -- (2.95,2.05);
 \node[cGrey,font=\scriptsize,anchor=east] at (2.62,1.78) {$c$};
 \node[cGrey,font=\scriptsize,anchor=west] at (0.0,0.1)
 {$c=\|P_BP_A\|_{\mathrm{op}}$ (cosine of the \emph{smallest} principal angle)};
\end{scope}
\begin{scope}[shift={(8.6,0)}]
 \node[anchor=south west,font=\bfseries\small] at (-1.1,4.9) {(b) where the three arms land};
 \coordinate (th0) at (1.60,3.60);
 \coordinate (eO) at (0.30,1.05);
 \coordinate (eJ) at (2.70,1.05);
 \coordinate (eS) at (3.65,1.30);
 \fill[cD!8] (-0.30,0.55) rectangle (4.20,1.95);
 \draw[cD!45,densely dashed] (-0.30,0.55) rectangle (4.20,1.95);
 \node[cD!85!black,font=\scriptsize,anchor=south east] at (4.20,2.02) {$W\subset\Sh_B^{\perp}$};
 \draw[cGrey,->,thin] (th0) -- (eO);
 \draw[cGrey,->,thin] (th0) -- (eJ);
 \draw[cGrey,->,thin] (th0) -- (eS);
 \fill[cGrey] (th0) circle (1.9pt);
 \node[cGrey,anchor=south] at ($(th0)+(0,0.12)$) {$\theta_0$};
 \fill[cGrey] (eO) circle (1.9pt);
 \fill[cV] (eJ) circle (1.9pt);
 \fill[cD] (eS) circle (1.9pt);
 \draw[cV,line width=1.5pt,->] (eO) -- (eJ);
 \draw[cD,line width=1.5pt,->] (eJ) -- (eS);
 \node[cV!85!black,font=\scriptsize,anchor=north] at (1.30,0.96) {$V=P_W\delta$};
 \node[cD!85!black,font=\scriptsize,anchor=north] at (3.38,0.96) {$D=P_Wu$};
 \node[font=\scriptsize,anchor=north,align=center] at (0.30,0.42) {\only\\B alone};
 \node[font=\scriptsize,anchor=north,align=center] at (2.70,0.42) {\shuf\\interleaved};
 \node[font=\scriptsize,anchor=west,align=left] at (4.32,1.30) {\blocked\\$\Astep$};
\end{scope}
\node[anchor=north west,font=\scriptsize,align=left,text width=15.3cm] at (-0.1,-0.35)
 {Both contrasts live in the same subspace $W$ and are gated by the same unexplored mass
 $\varepsilon_B$, but $D$ pays two factors $V$ does not: it sees only A's \emph{unlearned
 residual} $u=(I-P_A)\delta$, and that residual is orthogonal to $\Sh_A$, so re-entering $W$
 costs a further factor $c$. Lengths are schematic: at matched volume the
 measured $|D|/|V|$ is $1.39$, i.e.\ the suppression this panel illustrates is not what the
 data shows (\S\ref{sec:null}, \S\ref{sec:license}).};
\end{tikzpicture}
\caption{\textbf{Why arrangement is doubly suppressed in the limit.} (a) Each skill's gradient
updates are trapped in the span of its own training inputs, so the only directions through which
$A$ can change $B$'s population risk are those $A$ supplies \emph{outside} $B$'s span,
$W=\Sh_{A\cup B}\cap\Sh_B^{\perp}$ (dashed). (b) Composition value $V$ rides on the full leakage
$P_W\delta$, while the arrangement contrast $D$ rides on $P_W(I-P_A)\delta$, the same subspace
but only what stage~1 \emph{failed} to learn, projected back in at cost $c$: two multiplicative
penalties, neither of which goes away with more data. Neither of the two measurements that would
confirm that ordering supports it (at matched volume $|D|/|V|=1.39$, and the sufficient
condition is unreachable for subspaces from one base model, \S\ref{sec:license}), so the panel
states the mechanism the theory proposes, not the measurement.}
\label{fig:geometry}
\end{figure}

Figure~\ref{fig:geometry} draws why arrangement is doubly suppressed in that limit. The proofs
are four lines each, compositions of the convergence map, and every statement, along
with the finite-budget and conflict extensions below, is verified numerically by $25$ machine
checks with hard-coded tolerances. Four of the twenty-five \emph{rejected drafts of our own
statements}, and the versions here are the post-rejection ones
(\S\ref{sec:license}).\footnote{The itemised table is in
the project repository together with the full scorecard.}

Two extensions turn the limit into predictions about its own failure. First, at finite budget
$T$ each projection becomes the spectral filter $M_X(T)=(I-\eta\Sigma_X)^T$, and the
arrangement operator $\Delta(T)=M_B M_A-M_{A\cup B}^2$ expands (small $\eta$, equal sample
sizes) into
\begin{equation}\label{eq:expansion}
\Delta(T)\;\approx\;\underbrace{-\tfrac{\eta^{2}T}{4}\,(\Sigma_A-\Sigma_B)^{2}}
_{\text{symmetric, }O(T)}
\;+\;\underbrace{\tfrac{\eta^{2}T^{2}}{2}\,[\Sigma_B,\Sigma_A]}
_{\text{order-carrying, }O(T^{2})},
\end{equation}
so the only order-carrying term grows one power of $T$ faster than the symmetric one, and an
effect set by their competition \emph{must} change sign along the budget axis
(Figure~\ref{fig:transient}; in simulation the sign flips in $72\%$ of random non-commuting
pairs). Second, if coherence fails ($\kappa_A\neq\kappa_B$), the interleaved arm converges to a
sample-weighted compromise on the contested directions while the blocked arm's final stage takes
them outright, giving
\begin{equation}\label{eq:conflict}
D_{\mathrm{bulk}}=-\Big\|\Sigma_B^{1/2}P_{\mathrm{con}}\,
\tfrac{n_A}{n_A+n_B}(\kappa_B-\kappa_A)\Big\|^{2}<0:
\end{equation}
under disagreement, blocked training must \emph{win}. Both are sign and existence predictions
only, and their epistemic status should be stated as plainly as their content. Full SFT is
measurably not lazy (activation-subspace overlap $0.88$--$0.90$ against LoRA-$r8$'s $0.985$,
\S\ref{sec:conclusion}) and the optimiser used throughout is not memoryless
(\S\ref{sec:budget}), so Eqs.~\eqref{eq:expansion}--\eqref{eq:conflict} are formalised
intuition with an exact domain, not quantitative models of the experiment. Nothing in this
paper derives a magnitude.

\begin{figure}[t]
\centering
\begin{tikzpicture}[font=\small,>=Stealth]
\begin{scope}
 \node[anchor=south west,font=\bfseries\scriptsize] at (-0.55,3.35)
 {(a) the arrangement effect is a transient};
 \draw[cGrey,thin] (-0.25,0) -- (6.4,0);
 \draw[cGrey,thin] (-0.25,0) -- (-0.25,3.15);
 \node[cGrey,font=\scriptsize,rotate=90,anchor=south] at (-0.85,1.6)
 {risk(interleaved) $-$ risk(blocked)};
 \node[cGrey,font=\scriptsize,anchor=north] at (3.1,-0.55) {budget $T$ (log scale)};
\draw[cD,line width=1.2pt] (0.000,0.611) -- (0.406,1.095) -- (0.944,1.970) -- (1.350,2.277) -- (1.756,1.524) -- (2.294,0.145) -- (2.700,0.001) -- (3.106,0.000) -- (3.644,0.000);
\fill[cD] (0.000,0.611) circle (1.9pt);
\fill[cD] (0.406,1.095) circle (1.9pt);
\fill[cD] (0.944,1.970) circle (1.9pt);
\fill[cD] (1.350,2.277) circle (1.9pt);
\fill[cD] (1.756,1.524) circle (1.9pt);
\fill[cD] (2.294,0.145) circle (1.9pt);
\fill[cD] (2.700,0.001) circle (1.9pt);
\fill[cD] (3.106,0.000) circle (1.9pt);
\fill[cD] (3.644,0.000) circle (1.9pt);
\draw[cGrey,densely dashed] (1.350,0) -- (1.350,2.277);
\node[cD!85!black,font=\scriptsize,anchor=south,align=center] at (1.350,2.397) {interior maximum\\at $T=10$};
\draw[cGrey,thin] (0.000,0) -- (0.000,-0.090);
\node[cGrey,font=\scriptsize,anchor=north] at (0.000,-0.12) {1};
\draw[cGrey,thin] (0.406,0) -- (0.406,-0.050);
\draw[cGrey,thin] (0.944,0) -- (0.944,-0.090);
\node[cGrey,font=\scriptsize,anchor=north] at (0.944,-0.12) {5};
\draw[cGrey,thin] (1.350,0) -- (1.350,-0.050);
\draw[cGrey,thin] (1.756,0) -- (1.756,-0.090);
\node[cGrey,font=\scriptsize,anchor=north] at (1.756,-0.12) {20};
\draw[cGrey,thin] (2.294,0) -- (2.294,-0.050);
\draw[cGrey,thin] (2.700,0) -- (2.700,-0.090);
\node[cGrey,font=\scriptsize,anchor=north] at (2.700,-0.12) {100};
\draw[cGrey,thin] (3.106,0) -- (3.106,-0.050);
\draw[cGrey,thin] (3.644,0) -- (3.644,-0.090);
\node[cGrey,font=\scriptsize,anchor=north] at (3.644,-0.12) {500};
 \node[cGrey,font=\scriptsize,anchor=west,text width=3.0cm] at (3.35,2.35)
 {zero at $T{=}0$, off a log axis, and again once both arms reach the same projection};
\end{scope}
\begin{scope}[shift={(8.25,0)}]
 \node[anchor=south west,font=\bfseries\scriptsize] at (-0.5,3.35)
 {(b) which term dominates depends on $T$};
 \draw[cGrey,thin] (-0.25,0) -- (5.5,0);
 \draw[cGrey,thin] (-0.25,0) -- (-0.25,3.15);
 \node[cGrey,font=\scriptsize,rotate=90,anchor=south] at (-0.85,1.6)
 {$\|$commutator$\|$ / $\|$dissimilarity$\|$};
 \node[cGrey,font=\scriptsize,anchor=north] at (2.4,-0.55) {budget $T$ (log scale)};
\draw[cA,line width=1.2pt] (0.000,0.085) -- (0.950,0.171) -- (1.900,0.342) -- (2.850,0.684) -- (3.800,1.367) -- (4.750,2.735);
\fill[cA] (0.000,0.085) circle (1.9pt);
\draw[cGrey,thin] (0.000,0) -- (0.000,-0.09);
\node[cGrey,font=\scriptsize,anchor=north] at (0.000,-0.12) {1};
\fill[cA] (0.950,0.171) circle (1.9pt);
\draw[cGrey,thin] (0.950,0) -- (0.950,-0.09);
\node[cGrey,font=\scriptsize,anchor=north] at (0.950,-0.12) {2};
\fill[cA] (1.900,0.342) circle (1.9pt);
\draw[cGrey,thin] (1.900,0) -- (1.900,-0.09);
\node[cGrey,font=\scriptsize,anchor=north] at (1.900,-0.12) {4};
\fill[cA] (2.850,0.684) circle (1.9pt);
\draw[cGrey,thin] (2.850,0) -- (2.850,-0.09);
\node[cGrey,font=\scriptsize,anchor=north] at (2.850,-0.12) {8};
\fill[cA] (3.800,1.367) circle (1.9pt);
\draw[cGrey,thin] (3.800,0) -- (3.800,-0.09);
\node[cGrey,font=\scriptsize,anchor=north] at (3.800,-0.12) {16};
\fill[cA] (4.750,2.735) circle (1.9pt);
\draw[cGrey,thin] (4.750,0) -- (4.750,-0.09);
\node[cGrey,font=\scriptsize,anchor=north] at (4.750,-0.12) {32};
\draw[cGrey,densely dashed] (-0.15,0.135) -- (4.950,0.135);
\node[cGrey,font=\scriptsize,anchor=west] at (5.000,0.135) {ratio $=1$};
 \node[cA!85!black,font=\scriptsize,anchor=west,text width=2.3cm] at (0.15,2.45)
 {above the line the order-carrying term rules};
\end{scope}
\end{tikzpicture}
\caption{\textbf{What breaking patience does, computed rather than assumed.} Both panels run
gradient descent step by step in the linear model. (a) The blocked-versus-interleaved gap
vanishes at both ends of the budget axis and peaks in between: a mid-training phenomenon, which
is why endpoint-only designs report it inconsistently. (b) The order-carrying commutator term
of Eq.~\eqref{eq:expansion} grows one power of $T$ faster than the symmetric term, so the
balance between them, and with it the sign of the order effect, is a function of the budget.}
\label{fig:transient}
\end{figure}

\paragraph{What this framework owes its neighbours.} That the commutator is the right object for
order is not ours: \citet{sweeney2026liebracket} predicts pairwise order at $98.1\%$ accuracy
from a Lie bracket of gradient fields at one step per domain, and their own accuracy decay with
budget ($93.0\%\to81.5\%\to65.3\%$ at $1/20/50$ steps per domain in pretraining,
$98.1\%\to73.1\%$ at $1/20$ in instruction SFT) already \emph{shows} budget dependence. What we
add is narrower: the sign reverses \emph{within} a single run, Eq.~\eqref{eq:expansion} predicts
that it must, and the consequence that an order claim without a stated budget names one point on
a curve follows from the mechanism rather than from the decay alone. The projection algebra
itself is classical \citep{evron2022catastrophic}; new here is the \emph{comparison across
arms}, where the blocked-versus-interleaved difference is an identity rather than a bound and
the conflict term of Eq.~\eqref{eq:conflict} is a switch with a testable sign.

\paragraph{What a reported pairwise accuracy also claims, unstated, about its own labels.}
Deferring to that decay is not the only thing our measurements let us do with it. A pairwise
order accuracy is scored against a \emph{measured} sign, and a measured sign is a random variable
whose reproducibility we have now priced. Write $p$ for how often a predictor agrees with the
true sign and $q$ for how often one training run recovers it; the reported agreement is
$A=pq+(1-p)(1-q)$, which is maximised at $p=1$, so $q\ge A$ always. \textbf{A reported accuracy is
therefore a lower bound on the reproducibility of the reporter's own labels, whether or not they
measured it.} Read that way, $98.1\%$ at one step per domain asserts that those order signs
re-run identically at least $96.3\%$ of the time, and $73.1\%$ at twenty steps asserts at least
$60.7\%$. We measured the same quantity directly at a comparable SFT budget, over all six skill
pairs with independent re-runs of the identical protocol (\S\ref{sec:budget}): \textbf{four of six
pairs keep their sign, $66.7\%$, which caps a perfect predictor at $78.9\%$.}

Two readings survive and we can separate neither, which is the point. Their labels may simply be
more reproducible than ours, at a smaller budget on different corpora, in which case the decay
they report is the budget dependence they read it as and Eq.~\eqref{eq:expansion} explains its
mechanism. Or the decay is the reliability of the label falling as the budget grows, in which
case a predictor's score is tracking its criterion's noise and needs no mechanism at all. The
implied floor on their own labels does fall from $96.3\%$ to $60.7\%$ across the range they
report, which is what the second reading predicts and the first does not require.
\emph{This costs us as much as them}: our own Magnus-truncation account of that decay now has a
competitor we cannot exclude, and we state it here rather than keep the tidier version. What
separates them is one number that no order study we know of reports, ours included until
\S\ref{sec:budget}: re-run the same arms and publish how often the sign survives. The
idealisation-first structure, stating the limit in which the phenomenon is exactly zero and then
measuring departures, follows the Capability Convergence Hypothesis \citep{chen2026cch}, and the
debt is structural rather than stylistic: CCH reads inference through a bounded state channel
between an unbounded stream and a prediction, with \emph{disagreement in the stream} making the
choice of access channel visible in behaviour. That premise returns here with the parameters as
the state and the data path as the stream, and \S\ref{sec:conflict}'s switch is its
training-time signature. \acompanion\ develops the channel-level account in full
\citep{dpd2026}, cited for provenance and never for evidence: every number and every control in
what follows is in this paper or its appendices.

\section{Related Work}\label{sec:related}

\emph{Sequential training and forgetting.} This is the line the paper has to answer to. Our
headline half is that blocking loses capability to the skill it trains first, which is
catastrophic interference \citep{mccloskey1989catastrophic,french1999catastrophic}, forty years
old, with a literature of mitigations \citep{kirkpatrick2017ewc,lopezpaz2017gem}, transfer
metrics \citep{diazrodriguez2018metrics}, LLM-era measurements
\citep{wu2024cllmsurvey,conklin2026forgetting,forgettingmech2026}, an alternating-projection
analysis that is our own mechanism \citep{evron2022catastrophic}, and a non-monotone dependence
on task similarity \citep{lee2021continual}. Reporting $+0.0483$ at $t=8$ and stopping would be
a rediscovery, and we say so where we report it.

The claim is what the same contrast does next. Forgetting is a statement about \emph{sequence}:
train two things in order and the earlier one degrades. It has no parameter for whether the two
things \emph{disagree}, and so no way to predict that the capability term falls under resolution
exactly when they do while a second term, which convention the model commits to, takes over and
moves two orders of magnitude further; nor can it predict the control that decides the case, a
key written into the input at training time, changing no schedule and no budget, returning both
terms to zero. Similarity-dependent accounts \citep{lee2021continual} come closest, since
disagreement is a kind of similarity, but they predict a magnitude varying with overlap rather
than a change in \emph{which quantity} the arrangement contrast moves. The distinction is
testable and we tested it: our domain-boundary row makes two sources maximally dissimilar with
nothing to dispute and leaves arrangement inert at $0.5\sigma$, while numeral-against-spelled
makes them maximally similar and fires at $5.16$ floors. Dissimilarity is not the axis;
disagreement is. That is what is new, together with the cross-arm identity that lets both
regimes be read on one coordinate.

\emph{Curriculum and interleaving.} The divided empirical record
\citep{bengio2009curriculum,rohrer2007interleaving,kornell2008interleaving,
beyondrandom2025,goodcurriculum2025,curriculum2026dynamics,interleaved2025,
implicitcurriculum2026,lrdecay2025curriculum} is, on this account, what sampling a transient at
one endpoint per study must produce; the missing covariate is the budget, not a better
difficulty metric. A meta-analysis of human interleaved practice finds it generally beats
blocking, with between-category similarity as the moderator \citep{brunmair2019interleaving};
our coherent grid reproduces that direction, and our conflict rows show what the moderator
becomes when similarity is pushed to contestation of the same inputs, namely that the sign flips
and the moved quantity changes.

\emph{Order prediction.} \citet{sweeney2026liebracket} reaches the commutator first and escapes
the $n!$ search first (\S\ref{subsec:theory}); our contribution is to give their score a
measurable domain of validity. A one-step gradient score is the leading local term, which
presumes that term keeps its sign over the horizon the ordering will be used on, and
\S\ref{sec:budget} measures that presumption directly: the sign turns over three times inside a
single run, and their own decay from $93.0\%$ to $65.3\%$ as pretraining steps per domain rise
from one to fifty is the same horizon becoming visible from the inside. Rank with their scores
if you like, but an ordering is licensed only at a stated budget, and the volume matching no
published multi-source order study reports (ours included, until it inverted our own conclusion)
is a precondition for certification rather than a refinement of it. The same author's
optimiser-memory result \citep{sweeney2026shuffle} supplies at once the sharpest alternative
reading of our transient and the validation of our memoryless mechanism, engaged as a confound
in \S\ref{sec:budget} and \S\ref{sec:conflict}. Concurrent perturbation-window work
independently finds order sensitivity that exists but resists localisation
\citep{pairwisefragile2026}.

\emph{Data mixing and recipe search.} Mixing work optimises proportions
\citep{doremi,doge,gu2025datamixphase,midtraining2025survey}; we add that at fixed proportions
and volume the remaining recipe freedoms are inside noise unless the mixture disagrees with
itself. The word ``recipe'' now names a second object: concurrent work searches executable
pipelines of selection, filtering and mixing operators over a raw pool
\citep{wu2026recipesearch,datachef2026}, whereas ours hold the pool fixed. The searches are
complementary, but Proposition~\ref{prop:wall} prices the step both end with, certifying a
winner, which places the wall in a statistics literature older than this problem: searches over
near-equivalent alternatives return confident winners whose margins are the selection
\citep{sculley2018winnerscurse,cooper2021hpo,bastani2026winner}, and seed-level replication
studies document the per-run noise funding those margins \citep{bui2025seeds}.
\citet{dataorg2026} report that reorganising existing training data improves stability and
performance, which reads as the opposite of our arrangement null and is not: their interventions
act at sample level across a large heterogeneous mixture and are evaluated as single endpoints,
ours at stage level within one coherent domain at matched per-source volume, with each winner
re-run. What our result forbids is expecting such gains \emph{inside one coherent domain at
fixed volume}, and trusting any of them without re-running the winner.

\emph{Disagreement is studied on three sides, and ours is the fourth.}
\citet{dsouza2025disagreement} catalogue where instruction-tuning labels disagree;
\citet{coninstruct2026} ask whether a model \emph{notices} conflicting constraints at inference
time; \citet{incompletelearning2026} and \citet{vcd2026} take conflict as something to be
\emph{resolved}. All three treat disagreement as a defect to be found, flagged or repaired.
\S\ref{sec:conflict} asks the question none of them does: holding the disagreement fixed and
repairing nothing, what does the \emph{arrangement} of the same rows do? Because the answer is
that it changes which quantity arrangement moves, we read allocation and capability separately
rather than reporting one accuracy. Closest in shape to our dose axis is the proportion-varying
line in the knowledge-conflict literature \citep{gekhman2024newknowledge}, which varies how much
of a fine-tuning set contradicts what the model already holds; ours varies how much of a corpus
contradicts \emph{itself}, and adds a learnability guard, since a minority convention below its
own learnability edge produces a null that means nothing. Per-step gradient-conflict methods
\citep{yu2020pcgrad,liu2021cagrad,navon2022nashmtl} resolve contested directions inside the
optimiser step; our conflict lives at the stage timescale, and our separating cell shows none of
the stage-level effect is batch-level.

\emph{The family.} The Capability Convergence Hypothesis \citep{chen2026cch} sets
inference-time capability by access structure, distinguishing a compressive state channel, a
verbatim index channel and the access-complete class holding both, with disagreement in the
stream as what makes the channel choice visible. This paper is its recipe-level member,
and \citet{dpd2026} the path-level one. All three state one practice: an idealisation with exact
premises, pre-registered departures, and a public scorecard.

\emph{Which paper the switch belongs to, since two report it.} The conflict switch appears in this paper
and in \citet{dpd2026}, and a reader is entitled to know which claim rests on which. \textbf{This
paper owns its \emph{extent}}: that the switch exists at all, that it is the one departure with a
systematic answer among the three classical freedoms, and how far it reaches. Everything about
reach is measured here and nowhere else, namely four scales in two pretraining families, three
independent convention constructions, a dose axis, matched same-convention controls at every
point, and the difficulty and domain-boundary nulls that show the trigger is disagreement rather
than heterogeneity. \textbf{\citet{dpd2026} owns its \emph{mechanism}}: why blocking moves
allocation rather than capability, which it derives from an averaging limit in which every block
length collapses to one point and only the stopping phase survives, and which we use here as an
explanation but do not test. Neither result is the other's evidence. If the mechanism paper's
averaging limit were wrong, the rows in Table~\ref{tab:conflict} would stand and lose their
explanation; if these rows failed to replicate, that paper would keep its theorem and lose its
subject. We state the split rather than let the overlap read as two independent confirmations of
one thing, which is exactly what a family of papers is tempted to let happen.

\section{The Instrument}\label{sec:design}

\paragraph{Design.} For an ordered skill pair $(A,B)$ we train five arms (\only\ for each skill,
both blocked orders, and the interleaved shuffle of the union) and evaluate every arm on both
skills: ten measurements per pair. The contrasts map one-to-one onto Theorem~\ref{thm:limit}:
\begin{align*}
V &:= \mathrm{acc}_B(\shuf)-\mathrm{acc}_B(\text{B \only}) &&\text{composition,}\\
D &:= \mathrm{acc}_B(\shuf)-\mathrm{acc}_B(\Astep) &&\text{arrangement ($D<0$: blocked wins),}\\
\ASYM &:= \mathrm{TOTAL}(\Astep)-\mathrm{TOTAL}(\text{B}\!\to\!\text{A}) &&\text{order,}
\end{align*}
with $\mathrm{TOTAL}$ the sum of both skills' accuracies on a sequential arm.

\paragraph{Models, skills, protocol.} Figure~\ref{fig:appdesign} lays out the instrument, five training arms and ten measurements. $12$ base (non-instruct) models from three families
(Qwen3 at $1.7/4/8/14$B, Qwen2.5 at $0.5/1.5/3/7/14$B, Llama-3 at $1/3/8$B
\citep{qwen3,qwen25,llama3herd}), chosen base because instruction-tuned checkpoints have already
been through an unknown recipe, the confound under study. Skills are single-topic slices of
competition mathematics (algebra, calculus, geometry, combinatorics, plus physics) from
OpenR1-Math \citep{openr1math,hendrycks2021math}; training is full-parameter SFT
\citep{zhao2024swift} under the framework's default AdamW ($\beta_1{=}0.9$, $\beta_2{=}0.999$;
the optimiser is stateful, which \S\ref{sec:budget} confronts directly), except where a capacity
constraint is imposed via LoRA rank \citep{hu2022lora}; the interleaved corpus is re-drawn per
training seed, so seed-to-seed spread includes data-order variation; evaluation is $k{=}4$
samples on $120$ held-out problems per skill \citep{kwon2023vllm}, exact match on the boxed
answer. The grid analysed here is $761$ fine-tuning runs and $1922$ scored evaluations, drawn
from a released base of $1233$ runs and $3046$ evaluations that also contains the other
experiments of this line; both figures are reported so that a reader checking the archive
against the paper finds the larger number explained. \emph{Counting rule:} a run is one directory
holding the output of one trainer invocation, whether or not its checkpoint still exists, since
checkpoints are deleted once their results are recorded; a scored evaluation is one
(checkpoint, target) pair carrying an accuracy at the final checkpoint of its run, so the further
$2277$ intermediate-checkpoint evaluations of the trajectory families are counted separately
rather than folded in, each measuring one run repeatedly. The split between the two figures is by
experiment family, each family assigned to exactly one side and none unassigned; the assignment
is released with the counting script rather than described. Every number is produced by an analyzer
rather than transcribed by hand, and the result base, analyzers, $25$ machine checks and the scorecard are
packaged for release with the paper.

\begin{figure}[H]
\centering
\begin{tikzpicture}[font=\scriptsize,>=Stealth,line join=round,
 corpus/.style={draw=#1!70,fill=#1!12,rounded corners=1pt,minimum width=1.9cm,
 minimum height=0.5cm,inner sep=2pt},
 stage/.style={draw=cGrey!75,fill=cGrey!6,rounded corners=1pt,minimum width=1.5cm,
 minimum height=0.48cm,inner sep=2pt},
 ckpt/.style={draw=black!65,fill=black!5,rounded corners=1pt,minimum width=1.45cm,
 minimum height=0.44cm,inner sep=1pt},
 arm/.style={anchor=east,font=\scriptsize\scshape,cGrey!140,inner sep=1pt},
 obsv/.style={draw=#1,fill=#1!8,rounded corners=1pt,inner sep=4pt,align=left},
 fl/.style={->,cGrey,thin},
 feed/.style={->,draw=#1!80,line width=0.6pt,rounded corners=2.5pt}]

\node[corpus=cV] (ca) at (0.95,5.55) {skill $A$: $622$ rows};
\node[corpus=cA] (cb) at (0.95,4.95) {skill $B$: $622$ rows};
\node[cGrey,font=\scriptsize,anchor=south] at (0.95,5.85) {volume-matched corpora};

\def\cA{3.85}\def\cB{5.75}\def\cC{7.65}
\node[arm] at (2.62,4.05) {only$_A$}; \node[stage] (oa) at (\cA,4.05) {train $A$};
 \node[ckpt] (k1) at (\cC,4.05) {$\theta_{\only_A}$};
\node[arm] at (2.62,3.35) {only$_B$}; \node[stage] (ob) at (\cA,3.35) {train $B$};
 \node[ckpt] (k2) at (\cC,3.35) {$\theta_{\only_B}$};
\node[arm] at (2.62,2.65) {blocked $A\!\to\!B$};
 \node[stage] (s1) at (\cA,2.65) {train $A$}; \node[stage] (s2) at (\cB,2.65) {train $B$};
 \node[ckpt] (k3) at (\cC,2.65) {$\theta_{A\to B}$};
\node[arm] at (2.62,1.95) {blocked $B\!\to\!A$};
 \node[stage] (t1) at (\cA,1.95) {train $B$}; \node[stage] (t2) at (\cB,1.95) {train $A$};
 \node[ckpt] (k4) at (\cC,1.95) {$\theta_{B\to A}$};
\node[arm] at (2.62,1.25) {shuf};
 \node[stage,minimum width=3.4cm] (sh) at (4.8,1.25) {train shuffled $A\cup B$};
 \node[ckpt] (k5) at (\cC,1.25) {$\theta_{\shuf}$};

\draw[feed=cV] (ca.east) -- ++(0.88,0) |- (oa.west);
\draw[feed=cA] (cb.east) -- ++(0.69,0) |- (ob.west);
\draw[feed=cV] (ca.east) -- ++(0.88,0) |- (s1.west);
\draw[feed=cA] (cb.east) -- ++(0.69,0) |- (t1.west);
\draw[feed=cV] (ca.east) -- ++(0.88,0) |- ([yshift=2.6pt]sh.west);
\draw[feed=cA] (cb.east) -- ++(0.69,0) |- ([yshift=-2.6pt]sh.west);
\draw[fl] (oa) -- (k1); \draw[fl] (ob) -- (k2);
\draw[fl] (s1) -- (s2); \draw[fl] (s2) -- (k3);
\draw[fl] (t1) -- (t2); \draw[fl] (t2) -- (k4); \draw[fl] (sh) -- (k5);

\node[draw=cGrey!60,fill=cGrey!4,rounded corners=1pt,minimum height=3.25cm,minimum width=1.5cm,
 align=center] (ev) at (9.6,2.65) {evaluate on\\\emph{both} skills\\[2pt]$k=4$\\$n=120$};
\foreach \k in {k1,k2,k3,k4,k5} \draw[fl] (\k.east) -- (\k.east -| ev.west);
\node[cGrey,font=\scriptsize,anchor=north,align=center] at (9.6,0.92)
 {ten measurements};

\node[obsv=cV,anchor=north west,text width=4.0cm] at (0.0,0.35)
 {$V=\mathrm{acc}_B(\shuf)-\mathrm{acc}_B(\only_B)$\\[1pt]
 \textbf{composition}\\ arms differ in volume:\\ $1244$ against $622$ rows};
\node[obsv=cD,anchor=north west,text width=3.5cm] at (4.63,0.35)
 {$D=\mathrm{acc}_B(\shuf)-\mathrm{acc}_B(A{\to}B)$\\[1pt]
 \textbf{arrangement}\\ same data, same volume};
\node[obsv=cA,anchor=north west,text width=3.9cm] at (8.76,0.35)
 {$\ASYM=\mathrm{TOT}(A{\to}B)-\mathrm{TOT}(B{\to}A)$\\[1pt]
 \textbf{order}\ \ ($\mathrm{TOT}$ sums both skills)\\ same data, same volume};
\end{tikzpicture}
\caption{\textbf{The instrument: five training arms, ten measurements.} Five arms are built from
two volume-matched corpora, and every checkpoint is evaluated on \emph{both} skills, which makes
the design ten measurements rather than five and makes the order contrast computable at all. The
three contrasts differ in one respect that matters: $D$ and $\ASYM$ compare arms that saw
identical data in identical amounts, while $V$'s two arms differ by a factor of two in volume
unless matched deliberately, which is why \S\ref{sec:null} reports composition twice.}
\label{fig:appdesign}
\end{figure}

\paragraph{The noise convention.} Single-run seed noise is $\hat\sigma=0.0234$; every
contrast standard error scales it by $\sqrt2$ (same-seed arms share checkpoints and evaluation
sets, so errors correlate positively, which we measured rather than assumed) and divides by
$\sqrt{\text{seeds}}$, giving a \emph{contrast floor} of $0.0191$ at three seeds. The minimum
effect detectable at $80\%$ power against that floor is $\approx2.8$ floors, $0.054$ accuracy
points. Every null below is reported against this MDE, not as a bare $p$-value.

\paragraph{What that noise is made of.} A floor that does not say what
it is measuring prices every remedy at the training rate, so we decompose it. Cross-seed
variation of a single cell is training stochasticity \emph{plus} evaluation sampling at $k{=}4$
generations per problem, and the second term needs no new runs to estimate: the problem set is
held fixed across arms and seeds, making problem sampling a common offset that cannot contribute
to a contrast, and each problem's $k$ pass fractions are stored. With
$\mathrm{acc}=\frac1n\sum_i\hat p_i$ and $\hat p_i=\mathrm{Bin}(k,p_i)/k$,
$\operatorname{Var}(\mathrm{acc}\mid\text{problems})=\frac{1}{n^2}\sum_i p_i(1-p_i)/k$, with
$\hat p(1-\hat p)\,k/(k-1)$ unbiased for $p(1-p)$. Pooled over the $379$ (family, arm, target)
cells with three or more seeds, the cross-seed s.d.\ is $0.0142$, of which evaluation sampling
accounts for $0.0132$: \textbf{$86\%$ of the variance}, leaving training stochasticity at
$0.0053$. Three consequences follow, in increasing order of weight. The frozen
$\hat\sigma=0.0234$ is conservative by $1.6\times$, so every null here is stated against a floor
larger than the instrument's own. Most of the noise inside one protocol is bought down at the
evaluation rate, since quadrupling $k$ costs minutes and would take the MDE from $0.033$ to
$0.019$ (Table~\ref{tab:noisedecomp}) where a fourth training seed costs hours and buys less.
And, binding hardest, \emph{none of it touches the wall}: the run-level s.d.\ of $0.0364$ setting
$\delta_{\text{run}}$ is measured between three replications of a whole protocol, each already a
three-seed mean, and per-cell seed noise predicts only about a fifth of its variance. 

\begin{table}[H]
\centering\small
\caption{\textbf{Most of the noise inside one protocol is evaluation sampling, and it is the
cheap half.} Single-run s.d.\ is the pooled cross-seed value; the contrast floor is that s.d.\
carried through $\sqrt2$ for a two-arm contrast and $\sqrt3$ for three seeds, the paper's
convention. Only the generation term falls with $k$; the training term, $0.0053$, is the
$k\to\infty$ limit and is what more evaluation cannot buy. The paper's own floor is the frozen
$0.0191$, larger than every row, and all nulls are reported against it.}
\label{tab:noisedecomp}
\begin{tabular}{lccc}
\toprule
samples per problem & single-run s.d. & contrast floor & MDE at $80\%$ \\
\midrule
$k=4$ \ \emph{(as run)} & $\mathbf{0.0142}$ & $\mathbf{0.0116}$ & $\mathbf{0.0326}$ \\
$k=8$ & $0.0107$ & $0.0088$ & $0.0246$ \\
$k=16$ & $0.0085$ & $0.0069$ & $0.0194$ \\
$k=32$ & $0.0071$ & $0.0058$ & $0.0162$ \\
$k\to\infty$ \ \emph{(training only)} & $0.0053$ & $0.0043$ & $0.0122$ \\
\bottomrule
\end{tabular}

\end{table}

\noindent Replication
evidence says the floor is conservative for seed noise (pooled within-cell s.e.\ $0.0125$) and
optimistic for run-to-run variation (\S\ref{sec:budget}), so we keep it and read every
``$\times$ floor'' as a lower bound on the true uncertainty.

\paragraph{Volume matching is a precondition.} Natural corpus sizes
($962$ algebra rows against $622$ calculus) confound every order contrast, because flipping the
order also flips which skill gets the longer final stage. On unmatched data our order analysis
gave one ordering; matched to $622$ rows per stage it gave the \emph{opposite} ordering
(\S\ref{sec:budget}). All headline contrasts below are volume-matched, and no published
\emph{multi-source} order study we know of, including \citet{sweeney2026liebracket}, reports
matching per-source volume. That indictment reaches only the design it names: difficulty-sorted
curricula permuting one fixed corpus preserve volume by construction, and the confound is
specific to sequential training over sources of unequal size.

\paragraph{Reliability as instrument calibration.} Before interpreting any contrast we estimate
its $\ICC(1)$ over training seeds \citep{shrout1979icc} on $187$ (model, pair, target) triples
at five seeds (Table~\ref{tab:reliability}): composition $0.803$ ($F{=}14.63$), order $0.524$,
arrangement $0.197$ with $F{=}1.82$ against $F_{.05}{=}2.00$, so \emph{no significant reliable
variance}, its between-cell dispersion half its within-cell noise, its sign a near coin flip
($97{+}/81{-}$, $p{=}0.26$). We treat this ranking as a property of the instrument on this
population rather than a law: the geometry that motivates it does not license it here
(\S\ref{sec:license}), and the ranking is computed on natural corpus sizes. Under volume
matching, $\ASYM$'s reliability moves with the budget, from $0.860$ where training barely moves
the model to $0.149$ where it teaches, so a single ``reliability of order'' does not exist
independently of corpus and budget.

\begin{table}[t]
\centering\small
\caption{\textbf{Instrument calibration: reliability of each observable} ($\ICC(1)$, one-way
ANOVA over training seeds; $187$ triples for $V$/$D$, $105$ for order; natural corpus sizes).
Composition is the only recipe contrast whose structure reliably survives reseeding;
arrangement carries no significant reliable variance. This is what the observables \emph{can}
measure, reported before what they \emph{did} measure.}
\label{tab:reliability}
\begin{tabular}{lcccccc}
\toprule
observable & ICC(1) & $F$ & $F_{.05}$ & $\hat\sigma_{\text{within}}$ & $\hat\sigma_{\text{between}}$ & signif. \\
\midrule
$V$ (co-training value) & 0.803 & 14.63 & 2.00 & 0.0232 & 0.0469 & yes \\
$\mathrm{FWT}=V-D$ & 0.820 & 16.20 & 2.00 & 0.0207 & 0.0442 & yes \\
$\mathrm{ASYM}$ (directional order) & 0.524 & 4.62 & 2.53 & 0.0268 & 0.0281 & yes \\
$\mathrm{RECENCY}(B)$ & 0.481 & 4.04 & 2.53 & 0.0187 & 0.0180 & yes \\
$\mathrm{RECENCY}(A)$ & 0.295 & 2.37 & 2.53 & 0.0204 & 0.0132 & no \\
$D$ (arrangement penalty) & 0.197 & 1.82 & 2.00 & 0.0202 & 0.0100 & no \\
\bottomrule
\end{tabular}

\end{table}

\section{What Arrangement Moves: Forgetting Across Skills, Allocation Within a Contested Corpus}
\label{sec:null}

The free-recipe limit predicts that within a coherent domain, at fixed volume, all three
decisions sit at zero. Measured on the coordinate the contrasts of \S\ref{sec:design} were
defined on, the accuracy of the skill each contrast targets, that is what we see: composition
$0.0101$, arrangement $0.0140$, order $0.0205$ against a $0.0191$ floor
(Table~\ref{tab:matchedthree}), none resolvable at this design's power.

\textbf{That reading does not survive changing the coordinate, and the failure is ours to
report.} Those contrasts score one skill at a time. Scoring the endpoint the way a practitioner
would receive it, on both skills the arms were trained on, which is the coordinate the order
contrast already uses, gives a different and much sharper answer:

\begin{center}\small
\begin{tabular}{lccc}
\toprule
$\shuf$ minus $\blocked$, six volume-matched pairs $\times$ three seeds & mean & $t$ & sign \\
\midrule
first skill (trained first under blocking, then overwritten) & $+0.0428$ & $8.41$ & $17/18$ \\
second skill (trained last under blocking), \emph{the coordinate $D$ scores} & $+0.0132$ & $2.00$ & $14/18$ \\
\textbf{both skills summed} & $\mathbf{+0.0483}$ & $\mathbf{8.06}$ & $\mathbf{17/18}$ \\
\bottomrule
\end{tabular}
\end{center}

Interleaving beats blocking on total capability by $2.5$ floors, in $6$ of $6$ pairs, against
\emph{both} blocked directions, at $t=8$. Arrangement is not free in a coherent domain. The
null we would have reported is an artefact of three choices compounding: scoring one skill,
comparing only against the same-direction blocked arm, and averaging absolute values so that a
systematic sign cancels.

\emph{The mechanism is the one this paper's own retention identity names.} Blocking's cost is
paid by the skill trained \emph{first} and then overwritten: $76\%$ of the total effect sits
there, at $t=8.41$, while the second skill, the only one $D$ scores, moves at $t=2.00$. $D$ was
defined to isolate what arrangement does to the skill trained last, and it does that faithfully;
what it cannot see is forgetting, which is where arrangement actually acts. A contrast can be
correctly computed and still be the wrong coordinate.

\paragraph{The objection this invites.} ``Interleaving
beats blocking because blocking forgets'' is catastrophic forgetting, discovered in $1989$
\citep{mccloskey1989catastrophic}, and a reader is entitled to stop here. We accept the
identification: the result above \emph{is} forgetting, and $76\%$ of it sits on the skill that
gets overwritten.

The answer is a second measurement, built so that it does not compare across distributions. The
tempting test, running the identical contrast on a corpus whose halves disagree, is confounded:
moving from the coherent grid to the conflicted corpora changes coherence \emph{and} whether the
halves contain different content, and forgetting needs the second to have anything to overwrite
(\S\ref{sec:license} gives the argument in full).

\emph{So the load-bearing comparison never leaves one distribution.} Hold the pooled corpus, the
volume, the budget and the scoring fixed, and change only whether its two halves demand
incompatible outputs (Table~\ref{tab:tworegimes}, lower block). Against its own
same-distribution control, conflict leaves capability inside a floor and swings allocation by
two orders of magnitude: $-0.268$ and $-0.443$ against the control's $-0.005$, which is $84$ and
$139$ times the coherent grid's $+0.0032$ and about fifty and ninety times the control's own
value. Writing the convention into the input returns allocation to $0.25\times$ the coherent
anchor and capability to $0.02$ floors on $[-0.63,+0.60]$. Forgetting has no term that switches
on when two sources disagree, none that a key at write time could disable, and nothing to say
about \emph{which} of two conventions a model commits to at fixed total accuracy.

The multipliers here and in Table~\ref{tab:tworegimes} use the \emph{coherent} anchor, on
eighteen observations rather than the control's eight; the two agreed to four decimals at three
seeds and separate by a factor of $1.6$ at eight, and the argument is unchanged under either.
Ratios with denominators this small state a scale rather than measure a multiplier, which is why
the absolute values are reported alongside them.

\begin{table}[H]
\centering\small
\caption{\textbf{The load-bearing comparison is the lower block, which never leaves one
distribution.} Rows 2--5 are random halves of one pooled corpus differing only in whether the
halves demand incompatible outputs and whether a write-time key is present. Row 1 trains two
\emph{different} skills, so it is not a controlled contrast against them; it is how the effect was
found and it calibrates the ratios. Both contrasts are the shuffled arm minus the mean of the
blocked arms. The capability column carries two names because it is not one object across the
regimes (\S\ref{sec:license}): $\mathrm{TOTAL}=\mathrm{acc}_X+\mathrm{acc}_Y$, two skills summed,
on the coherent row, and $\mathrm{SOLVE}=\mathrm{acc}_A+\mathrm{acc}_B$, one problem set under
each convention, on the conflicted ones, with
$\mathrm{SHARE}=\mathrm{acc}_B/(\mathrm{acc}_A+\mathrm{acc}_B)$. Intervals are two-sided $95\%$
$t$ on the seed spread. Qwen2.5-7B at the learning budget; conflicted rows and the control carry
eight seeds, the coherent grid eighteen (pair, seed) cells. A separate code path reproduces $+0.0483$, $t=8.06$, $17/18$
from a separate code path.}
\label{tab:tworegimes}
\begin{tabular}{lccccc}
\toprule
& & \multicolumn{2}{c}{capability} & \multicolumn{2}{c}{allocation} \\
\cmidrule(lr){3-4}\cmidrule(lr){5-6}
corpus & $n$ & contrast & $95\%$ CI, floors & $\Delta\mathrm{SHARE}$ & $\times$ coherent \\
\midrule
\textbf{coherent} --- four maths skills, six pairs & $18$
  & $\mathbf{+0.0483}$ & $\mathbf{[+1.87,+3.19]}$ & $+0.0032$ & $1\times$ \\
\midrule
conflicted --- numeral against spelled & $8$
  & $+0.0016$ & $[-0.61,+0.78]$ & $\mathbf{-0.2677}$ & $\mathbf{84\times}$ \\
conflicted --- synthetic $+1$ on half the answers & $8$
  & $-0.0052$ & $[-1.77,+1.22]$ & $\mathbf{-0.4431}$ & $\mathbf{139\times}$ \\
\midrule
the same conflict, \emph{keyed} at write time & $8$
  & $-0.0003$ & $[-0.63,+0.60]$ & $-0.0008$ & $0.25\times$ \\
control --- both halves numerals, no conflict & $8$
  & $-0.0099$ & $[-1.40,+0.37]$ & $-0.0051$ & $1.6\times$ \\
\bottomrule
\end{tabular}

\end{table}

\begin{figure}[t]
\centering
\resizebox{\linewidth}{!}{%
\begin{tikzpicture}[font=\small,>=Stealth]
\fill[cGrey!12] (0.952,-0.340) rectangle (2.538,2.820);
\draw[cGrey!55,densely dashed] (1.745,-0.340) -- (1.745,2.820);
\draw[cGrey!55,densely dashed] (10.200,-0.340) -- (10.200,2.820);
\node[anchor=east,font=\scriptsize] at (-0.15,2.480) {coherent, six skill pairs \ \textcolor{black!45}{$n{=}18$}};
\draw[cD,line width=0.9pt] (3.228,2.480) -- (4.275,2.480);
\draw[cD,line width=0.9pt] (3.228,2.390) -- (3.228,2.570);
\draw[cD,line width=0.9pt] (4.275,2.390) -- (4.275,2.570);
\fill[cD] (3.751,2.480) circle (2.3pt);
\draw[cD,line width=3.2pt] (10.239,2.480) -- (10.264,2.480);
\node[cD!75!black,font=\scriptsize,anchor=west] at (10.384,2.480) {$+0.0032$};
\node[anchor=east,font=\scriptsize] at (-0.15,1.860) {conflicted: numeral vs spelled \ \textcolor{black!45}{$n{=}8$}};
\draw[cA,line width=0.9pt] (1.261,1.860) -- (2.363,1.860);
\draw[cA,line width=0.9pt] (1.261,1.770) -- (1.261,1.950);
\draw[cA,line width=0.9pt] (2.363,1.770) -- (2.363,1.950);
\fill[cA] (1.816,1.860) circle (2.3pt);
\draw[cA,line width=3.2pt] (10.239,1.860) -- (8.184,1.860);
\node[cA!75!black,font=\scriptsize,anchor=east] at (8.064,1.860) {$-0.2677$};
\node[anchor=east,font=\scriptsize] at (-0.15,1.240) {conflicted: synthetic $+1$ \ \textcolor{black!45}{$n{=}8$}};
\draw[cA,line width=0.9pt] (0.341,1.240) -- (2.712,1.240);
\draw[cA,line width=0.9pt] (0.341,1.150) -- (0.341,1.330);
\draw[cA,line width=0.9pt] (2.712,1.150) -- (2.712,1.330);
\fill[cA] (1.531,1.240) circle (2.3pt);
\draw[cA,line width=3.2pt] (10.239,1.240) -- (6.837,1.240);
\node[cA!75!black,font=\scriptsize,anchor=east] at (6.717,1.240) {$-0.4431$};
\node[anchor=east,font=\scriptsize] at (-0.15,0.620) {the same, \emph{keyed} at write time \ \textcolor{black!45}{$n{=}8$}};
\draw[cV,line width=0.9pt] (1.245,0.620) -- (2.221,0.620);
\draw[cV,line width=0.9pt] (1.245,0.530) -- (1.245,0.710);
\draw[cV,line width=0.9pt] (2.221,0.530) -- (2.221,0.710);
\fill[cV] (1.729,0.620) circle (2.3pt);
\draw[cV,line width=3.2pt] (10.239,0.620) -- (10.233,0.620);
\node[cV!75!black,font=\scriptsize,anchor=west] at (10.353,0.620) {$-0.0008$};
\node[anchor=east,font=\scriptsize] at (-0.15,0.000) {control: no disagreement \ \textcolor{black!45}{$n{=}8$}};
\draw[cGrey,line width=0.9pt] (0.634,0.000) -- (2.038,0.000);
\draw[cGrey,line width=0.9pt] (0.634,-0.090) -- (0.634,0.090);
\draw[cGrey,line width=0.9pt] (2.038,-0.090) -- (2.038,0.090);
\fill[cGrey] (1.332,0.000) circle (2.3pt);
\draw[cGrey,line width=3.2pt] (10.239,0.000) -- (10.200,0.000);
\node[cGrey!75!black,font=\scriptsize,anchor=west] at (10.320,0.000) {$-0.0051$};
\draw[cGrey,thin] (0.000,-0.450) -- (4.600,-0.450);
\draw[cGrey,thin] (0.159,-0.450) -- (0.159,-0.540);
\node[cGrey,font=\scriptsize,anchor=north] at (0.159,-0.570) {-2};
\draw[cGrey,thin] (0.952,-0.450) -- (0.952,-0.540);
\node[cGrey,font=\scriptsize,anchor=north] at (0.952,-0.570) {-1};
\draw[cGrey,thin] (1.745,-0.450) -- (1.745,-0.540);
\node[cGrey,font=\scriptsize,anchor=north] at (1.745,-0.570) {0};
\draw[cGrey,thin] (2.538,-0.450) -- (2.538,-0.540);
\node[cGrey,font=\scriptsize,anchor=north] at (2.538,-0.570) {1};
\draw[cGrey,thin] (3.331,-0.450) -- (3.331,-0.540);
\node[cGrey,font=\scriptsize,anchor=north] at (3.331,-0.570) {2};
\draw[cGrey,thin] (4.124,-0.450) -- (4.124,-0.540);
\node[cGrey,font=\scriptsize,anchor=north] at (4.124,-0.570) {3};
\node[cGrey,font=\scriptsize,anchor=north] at (2.300,-0.820) {$\Delta$ capability, in $0.0191$ floors};
\draw[cGrey,thin] (6.400,-0.450) -- (10.700,-0.450);
\draw[cGrey,thin] (7.168,-0.450) -- (7.168,-0.540);
\node[cGrey,font=\scriptsize,anchor=north] at (7.168,-0.570) {$-0.4$};
\draw[cGrey,thin] (7.936,-0.450) -- (7.936,-0.540);
\node[cGrey,font=\scriptsize,anchor=north] at (7.936,-0.570) {$-0.3$};
\draw[cGrey,thin] (8.704,-0.450) -- (8.704,-0.540);
\node[cGrey,font=\scriptsize,anchor=north] at (8.704,-0.570) {$-0.2$};
\draw[cGrey,thin] (9.471,-0.450) -- (9.471,-0.540);
\node[cGrey,font=\scriptsize,anchor=north] at (9.471,-0.570) {$-0.1$};
\draw[cGrey,thin] (10.239,-0.450) -- (10.239,-0.540);
\node[cGrey,font=\scriptsize,anchor=north] at (10.239,-0.570) {$0$};
\node[cGrey,font=\scriptsize,anchor=north] at (8.550,-0.820) {$\Delta$ allocation (share committed to one convention)};
\node[font=\small\bfseries,anchor=south] at (2.300,2.900) {what the model can do};
\node[font=\small\bfseries,anchor=south] at (8.550,2.900) {which convention it commits to};
\end{tikzpicture}}
\caption{\textbf{The switch, as a crossing.} The same five corpora on the two quantities
arrangement can move. \emph{Left:} the capability contrast in units of the $0.0191$ floor, with
its two-sided $95\%$ $t$ interval; the shaded band is $\pm1$ floor, inside which nothing is
resolvable. Only the coherent grid leaves it, all four other intervals containing zero.
\emph{Right:} which of two conventions the model commits to, where the pattern inverts. The
dashed line is the control's own allocation ($-0.0051$), the coherent grid and the keyed corpus
sit on it, and the two live conflicts run off to $-0.27$ and $-0.44$. Read together the panels
are the section's claim, that breaking coherence changes \emph{which} quantity is moved rather
than making an absent effect appear; neither says it alone.}
\label{fig:switch}
\end{figure}

Three readings follow from the lower block, and Figure~\ref{fig:switch} draws them as one
crossing. \emph{Allocation starts.} The control moves the
share by $-0.0051$ at eight seeds, which is nothing; give the two halves incompatible
conventions and it moves $-0.2677$ and $-0.4431$, in a direction the control cannot produce,
since it is now the blocked arm that commits. \emph{Capability stays inside a floor throughout},
at $0.52$ floors in the control, $0.09$ and $0.27$ in the two conflicted corpora and $0.02$
under the key, with both conflicted intervals containing zero, $[-0.61,+0.78]$ floors on
numeral-against-spelled and $[-1.77,+1.22]$ on the synthetic corpus. That is a change from the
three-seed reading, where numeral-against-spelled gave $+0.0146$ on $[+0.01,+1.52]$ and excluded
zero by less than its own rounding; \S\ref{sec:license} reports what happened and why we said in
advance that it might. \emph{Both vanish together when the conflict is keyed}: writing the
convention into the input, so the halves no longer make incompatible demands on the same inputs,
returns allocation to $0.25\times$ the control and capability to $0.02$ floors on
$[-0.63,+0.60]$, the tightest row in the table. One intervention, on the data rather than on the
schedule, removes the whole phenomenon.

Forgetting cannot produce that pattern within one distribution. It has no term that switches on
when two sources disagree, none that a key at write time could disable, and no account of which
of two conventions a model commits to at fixed total accuracy. What both settings share is the
free-recipe limit's own account: arrangement acts only through what one stage failed to hold
after the next one ran, and what fails to be held differs, being content when the halves are
different skills and a convention when they are the same distribution under incompatible rules.
That last sentence is a reading rather than a controlled result, and \S\ref{sec:license} says
so; the controlled result is the lower block alone. \S\ref{sec:conflict} is where it is made to
survive its own controls: difficulty without disagreement, domain distance without disagreement,
three scales, two convention families, and the timescale at which the effect actually lives.

\textbf{Order is a different case: it does not survive its own selection.} Search over the six
matched pairs and keep the best, and the winner is algebra--combinatorics at $|\ASYM|=0.0619$,
which clears the $0.054$ this design resolves. Re-run the identical protocol twice more and that
cell reads $+0.0096$ and $-0.0146$, so the sign reverses and the magnitude falls by a factor of
four (\S\ref{sec:budget}). This is not an underpowered null but a \emph{winner's curse}
\citep{sculley2018winnerscurse,bastani2026winner}: the search returns a winner above the
resolution, and the winner is the selection. Composition at fixed volume, at $0.0101$, is
simply small.

So the three decisions are not one story. One has a systematic answer that the paper's own
headline contrast was blind to; one produces winners that do not replicate; one is small. Nor
does the first simply beat volume. The $+0.0483$ is a sum over two skills, about $+0.024$ each,
against volume's $+0.0463$ on a single skill, so volume still pays roughly twice as much per
skill on a coordinate where the comparison is fair. The reason to keep reading past that
arithmetic is not the size of the arrangement effect but what happens to it next.

\begin{table}[t]
\centering\footnotesize
\caption{\textbf{Every recipe decision is inside single-run noise once volume is fixed; volume
is not.} The right-hand block carries the claim: three contrasts at the learning budget
($3$ epochs, lr $3{\times}10^{-5}$, Qwen2.5-7B, $622$ rows per stage) against the $0.0191$ floor,
none resolvable. The last row is not a recipe decision and is the only quantity clearing the
floor. \emph{The left-hand block is context and not a controlled comparison, and an earlier
version of this table presented it as one:} the natural-size family is also a one-epoch budget
(step $39$ against the matched family's $117$), so size and budget move together, and each row
averages over a different set of pairs because the shuffled arm exists at natural sizes for only
two of the six. Every entry is computed rather than transcribed.}
\label{tab:matchedthree}
\begin{tabular}{llcccccl}
\toprule
 & contrast & \multicolumn{2}{c}{natural sizes, $1$ ep} & \multicolumn{2}{c}{matched $622$, $3$ ep} & $/$floor \\
\cmidrule(lr){3-4}\cmidrule(lr){5-6}
 & & mean $|\cdot|$ & pairs & mean $|\cdot|$ & pairs & \\
\midrule
composition & $\shuf$ against $\only_B$ & $0.0913$ & $2$ & $0.0101$ & $2$ & $0.53$ \\
order & $\Astep$ against $\text{B}\!\to\!\text{A}$ & $0.0347$ & $6$ & $0.0205$ & $4$ & $1.07$ \\
arrangement & $\shuf$ against $\blocked$ & $0.0396$ & $2$ & $0.0140$ & $6$ & $0.73$ \\
\midrule
\emph{volume} & $622$ rows against $311$, one skill & \multicolumn{2}{c}{---} & $0.0463$ & $4$ skills & $2.43$ \\
\bottomrule
\end{tabular}

\end{table}

What does clear the floor is the axis the limit never claimed was free. Doubling one skill's own
data ($311$ rows against $622$ of the same skill, four skills, three seeds, nothing else
varying) gains $0.0463$, $2.4\times$ the floor, all four skills in the same direction
(Table~\ref{tab:volume}), under a read-out rule frozen in the job script before the runs
existed. Three qualifications travel with it. The return is strongly heterogeneous (algebra and
calculus $\approx0.07$, geometry $0.017$, so data buys most where the model is already
learning); it is one doubling at one point on the curve, and nothing here prices a second; and
in long chain-of-thought reasoning SFT, repetition of existing data has been reported to beat
adding new data \citep{repetition2026}, so ``volume pays'' is measured here for \emph{new
same-skill rows at one doubling} and does not adjudicate new against repeated at other scales
or formats.

\begin{table}[t]
\centering\small
\caption{\textbf{Doubling one skill's own data, with nothing else varying} (Qwen2.5-7B, three
seeds; per-skill contrast floor $0.0191$; pre-registered read-out V1). }
\label{tab:volume}
\begin{tabular}{lrrrr}
\toprule
skill & $311$ rows & $622$ rows & gain & $/$floor \\
\midrule
algebra & $0.4875$ & $0.5556$ & $+0.0681$ & $3.56$ \\
calculus & $0.4438$ & $0.5146$ & $+0.0708$ & $3.71$ \\
geometry & $0.2833$ & $0.3007$ & $+0.0174$ & $0.91$ \\
combinatorics & $0.2646$ & $0.2937$ & $+0.0291$ & $1.52$ \\
\midrule
mean & & & $+0.0463$ & $2.43$ \\
\bottomrule
\end{tabular}

\end{table}

\paragraph{Both sides of the wall.} A wall claim must
measure the system quantity and the instrument quantity, and must report the cells that appear
to violate it. A mean of absolute contrasts is not a diameter, so the honest span of
$\mathcal{R}$ is the largest single-corpus contrast, $0.0619$. Against it stand two resolutions:
$\delta_{\text{seed}}=0.054$ for a three-seed design, and, for a practitioner comparing two
recipes with one run each, $\delta_{\text{run}}^{\text{med}}=0.031$ or
$\delta_{\text{run}}^{\text{sel}}=0.144$ depending on whether the cell was fixed in advance or
returned by a search (\S\ref{sec:budget}, Table~\ref{tab:runvar}). The grid's largest contrast
therefore sits \emph{between} the two, and it is precisely the cell that does not replicate:
re-run twice under an identical protocol, that $-0.0619$ reads $+0.0096$ and $-0.0146$. The
wall's prediction is not that a search finds nothing but that the search's best cell clears the
optimistic threshold, fails the realistic one, and does not survive re-running, which is what
the data shows in the one place a reader would look for a counterexample. It is also the one
derived bound in this paper shown to bite; the geometric ordering condition is violated in $36$
of $36$ measured pairs (\S\ref{sec:license}), and we report it as inoperative rather than
dressing it as support.

\paragraph{What this does and does not license.} It does not license ``arrangement effects do
not exist''. \S\ref{sec:conflict} shows the same instrument reading $|D|=0.014$ here reads
$0.23$ when coherence is broken, so the instrument is not blunt: the domain is coherent, and
what conflict does is enlarge $\mathcal{R}$ past the wall rather than sharpen the instrument.
Nor does it license ``stop measuring'', which the power analysis does not support. What it
licenses is a default and a priority. At corpus scales and budgets like these, an unscreened
blocked-versus-interleaved ablation inside one coherent domain will with high probability
measure its own noise; the measured lever is volume; and the screening question deciding whether
arrangement deserves runs at all is the one \S\ref{sec:conflict} isolates, \emph{would the two
corpora ever label the same input differently?}

\subsection{Breaking Coherence: Disagreement Is a Switch}\label{sec:conflict}

Equation~\eqref{eq:conflict} predicts that when two halves of a corpus make incompatible demands
on the same inputs, arrangement stops being free and blocked training wins ($D<0$), and the
prediction was pre-registered with its direction. What it is a prediction \emph{about} is
settled in \S\ref{sec:null}. This section is where the claim has to survive being something
other than a property of these particular corpora, and six controls decide it: difficulty
without disagreement, domain distance without disagreement, two convention families, three
model scales, a dose axis, and the timescale at which the effect lives. Every corpus below
splits one pool into two halves of matched size, holding questions, split, row counts and budget
identical by construction so that only the stated property differs, and four properties are
asserted before any GPU time is spent: user prompts byte-identical between control and conflict,
the convention shift exact on all $864$ pairs, the two halves disjoint, and the evaluation golds
carrying the same questions under the shifted convention. The one exception is the
domain-boundary row, which mixes two intact corpora because a domain boundary cannot be built by
splitting a pool (Table~\ref{tab:conflict}).

\begin{table}[t]
\centering\small
\caption{\textbf{Arrangement is switched on by disagreement, not by difficulty, domain distance,
one model or one convention family.} $D=\mathrm{acc}_B(\shuf)-\mathrm{acc}_B(\Astep)$ against
the $0.0191$ floor; \emph{guard} is $\mathrm{acc}_B(\only_B)$, which must clear the floor for a
row to be interpretable, and one dose point failed it and is excluded rather than read as zero.
Qwen2.5-7B, three seeds, learning budget, except where a scale or seed count is named. Source: the conflict result families. The synthetic $+1$ rows are the rebuilt instrument. Every row is a single execution of the protocol, so the rows are comparable to each other; the headline conflict row was additionally executed twice more from the same pool under different split seeds, giving $-0.1258$ and $-0.0825$ beside its $-0.0985$ (\S\ref{sec:budget}).}
\label{tab:conflict}
\begin{tabular}{llrrr}
\toprule
premise broken? & the two halves & $D$ & $/$floor & guard \\
\midrule
no, $8$ seeds & both halves numerals (control) & $-0.0086$ & $0.45$ & $0.3339$ \\
no & both halves \emph{spelled}: harder, no dispute & $+0.0012$ & $0.06$ & $0.1983$ \\
no & both halves Arabic (control) & $-0.0037$ & $0.20$ & $0.3121$ \\
no & algebra $+$ \emph{physics}: domain boundary, no dispute & $+0.0096$ & $0.50$ & $0.1642$ \\
\midrule
\textbf{yes}, $8$ seeds & numeral against spelled, \emph{both correct} & $\mathbf{-0.0985}$ & $\mathbf{5.16}$ & $0.2091$ \\
\textbf{yes} & Arabic against Roman, \emph{both correct} & $\mathbf{-0.1180}$ & $\mathbf{6.18}$ & $0.1532$ \\
\textbf{yes} & synthetic $+1$ on half the answers & $\mathbf{-0.2346}$ & $\mathbf{12.3}$ & --- \\
\midrule
\textbf{yes}, $3$B & numeral against spelled & $-0.0263$ & $1.38$ & $0.0459$ \\
no, $3$B & both halves numerals & $-0.0117$ & $0.61$ & $0.2725$ \\
\textbf{yes}, $14$B & numeral against spelled & $\mathbf{-0.2037}$ & $\mathbf{10.67}$ & $0.2579$ \\
no, $14$B & both halves numerals & $-0.0071$ & $0.37$ & $0.3983$ \\
\midrule
\textbf{yes}, dose $0.75$ & three quarters of one half disagree & $\mathbf{-0.0804}$ & $\mathbf{4.21}$ & $0.1554$ \\
\textbf{yes}, dose $0.50$ & half of one half disagrees & $\mathbf{-0.0425}$ & $\mathbf{2.23}$ & $0.0771$ \\
--- , dose $0.25$ & excluded: minority convention below its learnability edge & --- & --- & $0.0000$ \\
\bottomrule
\end{tabular}
\end{table}

Five statements, each carried by its own row against its own matched control.

\textbf{The switch fires on disagreement between two \emph{correct} conventions.} The same
problems with the boxed answer written as a numeral in one half and spelled out in the other:
$D=-0.0985$ at $5.16$ floors, all \emph{eight} seeds negative, control at $0.45$ floors, and
conflict minus control $-0.0720$ at $2.7\sigma$. Two further executions of the whole protocol,
built from the same pool under different split seeds and read out under a rule frozen beforehand,
give $-0.1258$ and $-0.0825$, so the three-execution mean is $-0.1023$ with replicate spread
$0.0219$ (\S\ref{sec:budget}). Nothing here is wrong by anyone's standard;
two conventions merely make incompatible demands on the same input span, which is exactly
$\kappa_A\neq\kappa_B$.

\textbf{Difficulty without disagreement does nothing.} Spelling \emph{both} halves gives the
harder convention throughout with nothing to dispute: $D=+0.0012$, $0.06\sigma$. The switch is
disagreement, not difficulty.

\textbf{Domain distance without disagreement does nothing.} Algebra co-trained with physics, a
mixture whose parts differ in everything except that each agrees with itself, passes its
learnability gate (physics-only $0.1642$ against zero-shot $0.1379$) and leaves arrangement
inert at $0.5\sigma$. \emph{Heterogeneity is not disagreement.} Mixing different things is free;
mixing contradicting things is not.

\textbf{It is not one convention family, one scale, or one dose.} Roman against Arabic numerals
replicates at $-0.1180$, $6.18$ floors on five seeds. Across scale the switch forms a ladder
moving with the guard, which is the ordering the mechanism predicts, since the more learnable
the minority convention the more there is to commit to: $1.38\sigma$ at $3$B with the convention
barely learnable ($0.0459$), $5.16$ floors at $7$B at guard $0.2091$, and $-0.2037$ at
$10.67\sigma$ at $14$B at guard $0.2579$, with the control there cleanly inert ($-0.0071$,
$0.37\sigma$, three seeds) and conflict minus control $-0.1966$ at $7.3\sigma$. The admitted
dose points also order: $-0.0425$ at dose $0.50$ ($2.23\sigma$, guard $0.0771$), $-0.0804$ at
$0.75$, $-0.0985$ at $1.0$, while the $0.25$ dose is \emph{excluded by the guard, not measured
as zero}. We decline to draw a dose--response curve through them, because the construction
confounds the disagreeing fraction with the minority convention's volume: the guard falls as the
dose falls, so part of any gradient is learnability rather than disagreement. That confounding
term is not an argument but a measurement, and it is a line. The guard reads $0.0000$, $0.0771$,
$0.1554$ and $0.2091$ at doses $0.25$, $0.50$, $0.75$ and $1.0$, which is
$0.282\,\mathrm{dose}-0.066$ with $R^2=0.993$ and residuals under $0.01$, crossing zero at dose
$0.234$. So the excluded row is not a separate phenomenon: it sits on the crossing, and
``below its learnability edge'' is a located number rather than a judgement. It is also why the
$D$ curve stays declined, since the term that would contaminate it runs linearly across the whole
range those three points span. The synthetic
$+1$ conflict, contradictory by construction, gives the ceiling at $-0.2346$, $12.3$ floors on
eight seeds.

\textbf{And it is not the evaluation siding with the last stage.} Every $D$ above scores the
convention the blocked arm wrote \emph{last}, and a referee is entitled to call that deck
stacked. Re-scored per problem at the \emph{better} of the two conventions, a rule with no
favourite, blocking's win vanishes where both conventions are correct at $7$B ($-0.0084$ and
$+0.0308$, both inside two floors, against the committed $-0.0985$ and $-0.1180$). \emph{Two
rows do not vanish, and we flag them here rather than in the appendix:} at $14$B and under the
synthetic conflict the agnostic contrast still fires, at $2.60$ and $4.65$ floors. Read
carelessly that says blocking bought capability. In both rows, though, the solve-sum stays low
($1.64$ and $0.27$ floors) while the best-convention-per-problem score drops, which is what
splitting $k{=}4$ samples across two conventions \emph{within one problem} produces and not what
solving fewer problems produces: reallocating samples between conventions conserves the sum and
lowers the maximum. For blocking to have bought capability the shuffled arm would have to solve
fewer problems, and it does not. Appendix~\ref{app:agnostic} tabulates all five rows and
Table~\ref{tab:agnostic} separates the hypotheses row by row.

\paragraph{Not the optimiser's memory: the switch lives at the stage timescale.} AdamW's
momentum is a low-pass filter on the gradient stream with time constant $1/(1-\beta_1)=10$
optimiser steps, so the contested directions could be averaging away \emph{inside the optimiser
state}, making the switch an artefact of exactly the stateful-optimiser kind
\citet{sweeney2026shuffle} warn about. We built the axis that decides this, with the
momentum-timescale hypothesis and its decision rule frozen before the runs: corpora of
alternating same-source blocks of exactly $L$ consecutive optimiser steps, $L$ from a single
step to one contiguous pass of a source per epoch, trainer shuffling off, everything else
identical. If optimiser memory were the mechanism the response turns on at $L^\star\approx10$.
It does not turn on there or anywhere near it. Through a $27$-fold range of block length,
spanning the momentum window and twice an independently measured residence time, every arm sits
at the interleaved arm's own allocation ($0.407$--$0.449$ against its $0.413$ on the synthetic
corpus; $0.40$--$0.44$ drifting $+0.031$ across a $45$-fold range on the natural one). The first
arm to move is one contiguous pass of a source per epoch ($0.492$), and full stage separation
sits far above ($0.869$), so the response turns on at the corpus scale and nowhere below it,
the opposite of what a $10$-step buffer predicts. The same sweep retires the batch-composition
reading: pure batches in \emph{random} order land with the shuffled arm, $3\%$ of the span. The
switch is invisible at every timescale a mini-batch or an optimiser buffer can see and fires
only when a source is contiguous at the \emph{stage} scale, which is what
Eq.~\eqref{eq:conflict} models.

\paragraph{What the switch moves: mostly allocation, not capability.} On the synthetic corpus
the two conventions are mutually exclusive, so $\mathrm{acc}_A+\mathrm{acc}_B$ measures what the
model solves at all, and across the twelve arms it spans $0.487$--$0.539$, a range of $2.7$
floors that is not zero but is a seventh of what the allocation does, while the $B$-share runs
$0.04\to0.41\to0.87$ from $A$-only through shuffled to blocked. On the spelled corpus at eight
seeds the effect is reallocation almost in full, the total moving $0.0016$ ($0.09$ floors) while
$\mathrm{acc}_B$ moves $0.0985$; on the Roman corpus at five seeds only about half is (total
$0.0465$, $2.62$ floors). The gap between the two cautions against reading either fraction too
precisely at these sample sizes, the spelled corpus's own having been four fifths before its
extension. What arrangement buys under conflict is therefore mostly \emph{which convention the
model commits to}, in a fraction depending on the conflict. This is the sober reading of the
$12.3$ floors and still the practically relevant one, since exact-match consistency of an output
convention is what production post-training cares about. \acompanion
\citep{dpd2026} takes the averaging account itself as its subject, and no claim here depends on
it. Relatedly, \citet{incompletelearning2026} isolates internal SFT-data inconsistency as a
distinct per-sample failure mode; our design shows the same inconsistency can be made to matter
or not \emph{by arrangement alone}, at fixed data.

\paragraph{How often does this happen without anyone arranging it?} Every corpus in
Table~\ref{tab:conflict} was made to disagree by us, and the objection that follows is the right
one. Its empirical half costs no training run: take public corpora practitioners actually pool,
match problems by text, and compare gold answers. Table~\ref{tab:wildsurvey} does that over
$164{,}984$ shared problems. Two golds can differ in two ways, and the distinction is this
paper's thesis rather than a detail. \emph{Convention} means they denote one value written two
ways ($\frac{3}{4}$ against $0.75$); \emph{value} means they denote different values, which is
label noise. A deliberately narrow canonicaliser decides, and what it cannot decide is reported
as undecided rather than promoted into either column.

The base rates say the screening question of \S\ref{sec:conclusion} has an answer depending on
\emph{what} is being pooled, and the ordering is not the obvious one. Between two independently
labelled public corpora, convention conflict is close to absent, $9$ instances in $92{,}359$
shared problems and $12$ in $7{,}169$; between two \emph{derivation pipelines} over the same
problems it is two hundred times more common, $1{,}284$ in $65{,}456$, because a tool-integrated
pipeline evaluates in Python and emits $0.75$ where a prose derivation writes $\frac{3}{4}$. But
that third row is not simply ``different pipelines disagree''. OpenR1 is also a pipeline, is also
paired against the tool-integrated corpus, and comes from a different organisation, yet its rate
is $31$ in $14{,}362$, nine times lower, the difference being that OpenR1's answers pass through
a canonicaliser (\texttt{math\_verify}) and NuminaMath-CoT's do not. What raises the rate is
therefore not that two pipelines produced the labels but that \emph{neither normalised the
answer's form}, a property a team can check of its own corpora before mixing them and a cheaper
intervention than anything in \S\ref{sec:conclusion}'s decision procedure. The last row changes
the reading once more: inside a \emph{single} public corpus, on problems it happens to list
twice, value disagreement outnumbers convention disagreement roughly twenty to one. The dominant
pathology of real mixtures is noise rather than contested convention, which bounds how much of
production data cleaning this section speaks to.

\begin{table}[t]
\centering\small
\caption{\textbf{How often two public corpora disagree about the same problem, and in which of
the two ways.} Problems are matched on normalised text; \emph{convention} means the two golds
denote the same value written differently, \emph{value} means they do not, and \emph{undecided}
is what the canonicaliser declined to rule on, reported rather than folded into either column.
Convention conflict is near-absent between independent labellings and two orders of magnitude
more common between two derivation pipelines.}
\label{tab:wildsurvey}
\setlength{\tabcolsep}{3pt}\footnotesize
\begin{tabular}{llrrrrr}
\toprule
corpora & relation & shared & same & \textbf{convention} & value & undec. \\
\midrule
NuminaMath-CoT vs OpenR1-220k & two independent labellings & $92{,}359$ & $90{,}599$ & $\mathbf{9}$ & $51$ & $1{,}700$ \\
NuminaMath-CoT vs GSM8K & two independent labellings & $7{,}169$ & $4{,}928$ & $\mathbf{12}$ & $148$ & $2{,}081$ \\
OpenR1-220k vs NuminaMath-TIR & two pipelines, two organisations & $14{,}362$ & $13{,}641$ & $\mathbf{31}$ & $70$ & $620$ \\
NuminaMath-CoT vs NuminaMath-TIR & two pipelines, one curator & $65{,}456$ & $52{,}626$ & $\mathbf{1{,}284}$ & $396$ & $11{,}150$ \\
\midrule
NuminaMath-CoT with itself & duplicate problems & $42{,}535$ & $33{,}812$ & $\mathbf{161}$ & $3{,}153$ & $5{,}409$ \\
\bottomrule
\end{tabular}

\end{table}

\paragraph{The missing cells.} Three gaps bound what Table~\ref{tab:conflict} licenses,
and we state them at the same volume as the firing rows.

\emph{Every firing conflict was built, and the attempt to find one in the wild returned an
instrument result rather than a row.} The corpus existed and \emph{we did not build the
disagreement}: the $1{,}137$ problems of Table~\ref{tab:wildsurvey}'s third row, in two disjoint
halves of $443$, with the conflict arm untouched public data and the \emph{control} carrying our
edit, so style differs identically in both arms and only convention differs between them. Forty
arms later the registered verdict is \textbf{W3}: the control
moves at $4.45$ floors, so the frozen rule permits no read-out. The mechanism is not the one W3
anticipated and was found by reading the arm table rather than the verdict. Every arm of both
conditions sits at allocation share $0.500$ to $0.506$ with
$\mathrm{acc}_A\approx\mathrm{acc}_B$ inside every cell, because the scorer's first branch tests
\emph{mathematical} equality and this corpus is by definition a set of pairs whose golds denote
one value written two ways: \textbf{$250$ of $250$ evaluation gold pairs are equal under our own
scorer}. The build-time preflight did assert
that the golds differ, as \emph{strings}, and passed on all $250$. One assertion, the wrong
operator, forty trainings downstream.

\emph{The learnability guard could not have caught it, and that is the transferable part.} It
read $0.3912$, twenty times the floor, because $\mathrm{acc}_B$ is high when the model solves the
problems rather than when it adopts a minority convention. Every conflict guard we know of, ours
included, asks whether the minority content was learned and computes that with the same scorer
the effect will be measured with; the wild row is what that circularity looks like when it
closes. \emph{A guard computed with the scorer cannot certify that the scorer can see the axis
the guard is about.} The failure is contained, checked rather than assumed: numeral against
spelled, Arabic against Roman and the synthetic $+1$ give $0/292$, $0/276$ and $0/342$
mathematically equal pairs, and only the wild corpus fails. The repair costs no training and
belongs in front of any experiment on contested data rather than in an erratum:

\begin{quote}
\textbf{Separability precondition.} Train on convention $A$ alone and score that single model on
\emph{both} golds. Require $\mathrm{acc}_A/\mathrm{acc}_B \ge 2$. Only then read any mixed arm.
\end{quote}

\noindent Applied to every row this paper publishes it passes at $16.6\times$ to infinite, and the
same-convention controls fail it as they must, which is the sign that it is measuring the intended
thing. It is now a preflight in this project rather than a read-out, and we recommend it wherever
a metric might canonicalise the axis under study, which includes any exact-match evaluation of
mathematics, code or structured output.

\emph{So the cell stays open with a sharper specification} (\S\ref{sec:conclusion}): a wild
conflict corpus must pair conventions that are not numerically equal, which excludes
exact-form-against-decimal, $1{,}284$ of the $1{,}284$ real pipeline disagreements we found. The
commonest real-world convention conflict available to us is one an exact-match metric cannot
score, which is a finding about evaluation rather than about arrangement.

\emph{Read across \ourline\ it is not only a cost.} \thecompanion
\citep{er2026} states twice that the wall it derives is escaped \emph{only} by writing the
convention into the input. This run is a measured counterexample: maximal convention entropy
given the input, no key anywhere, and no allocation cost at all, because the scorer accepts both
forms. Canonicalising the metric is a third escape, and the premise doing the work sits inside
that paper's proof rather than in its escape list, namely that the conventions be mutually
exclusive under the metric. It carries the escape now, with the qualification that matters: it
removes the \emph{penalty} and not the disagreement, so it is available exactly when the
conventions are interchangeable for whoever consumes the output. Numeric form is such a case; the
conventions in Table~\ref{tab:conflict} are not, which is why they still fire. The
production-mixture reading remains a hypothesis with a measured base rate behind it, not a result
this paper contains.

\emph{The firing rows are no longer one pretraining family, and the replication costs the ladder
its generality.} \S\ref{sec:capacity} shows sign-level conclusions reversing between Qwen2.5 and
Qwen3 on these very skills, so by this paper's own standard a one-family switch would not
extrapolate, and the replication was registered against that objection before it ran
(the registration). On Qwen3-8B base, five seeds, same corpus and
budget, separability checked first and passing at infinity: guard $0.0520$, $D=-0.0875$ at
$\mathbf{4.58}$ floors on $[-0.1038,-0.0712]$, all five seeds negative, control $+0.0178$ ($0.93$
floors), conflict minus control $-0.1053$. That is the registered \textbf{S1} branch, and the
switch now spans four scales in two families.

\emph{And the row costs us something we published.} Amendment A4, frozen before any
\texttt{A\_then\_B} file existed, recorded that Qwen3-8B's guard sits at Qwen2.5-3B's rung rather
than $7$B's. Our own ladder has switch magnitude tracking the guard, on the reading that the more
learnable the minority convention the more there is to commit to, so it predicted a marginal
result here, and A4 bound us to report the outcome either way without upgrading the ladder from
one point. The prediction failed in the direction that costs us:

\begin{center}\small
\begin{tabular}{lccc}
\toprule
model & guard $\mathrm{acc}_B(\texttt{B\_only})$ & $D$ & floors \\
\midrule
Qwen2.5-$3$B & $0.0459$ & $-0.0263$ & $1.38$ \\
\textbf{Qwen3-$8$B} & $\mathbf{0.0520}$ & $\mathbf{-0.0875}$ & $\mathbf{4.58}$ \\
Qwen2.5-$7$B & $0.2091$ & $-0.0985$ & $5.16$ \\
Qwen2.5-$14$B & $0.2579$ & $-0.2037$ & $10.67$ \\
\bottomrule
\end{tabular}
\end{center}

\noindent At the $3$B rung's loading the effect is $3.3\times$ larger, and nearly the size of the
$7$B row at a quarter of its guard. The ladder is monotone \emph{within} Qwen2.5 and this point is
a counterexample to reading it as a law across families; guard is not the only axis, and we do not
name the second one from a single point. Two statements survive: within a family, magnitude and
guard move together, and the switch does not need a large guard to appear. Llama-3 sits below the
power gate.

\emph{The boundary control has $n{=}1$.} The firing rows carry two convention families and two
scales, while the equally load-bearing ``heterogeneity is not disagreement'' null rests on one
skill pair at one scale. A second cross-domain pair owes it the same replication standard, and
until then that control is one clean instance, not a replicated result.

\section{What Recipe Search Cannot Buy: Patience and Room}\label{sec:budget}

Equation~\eqref{eq:expansion} says an order effect is a competition between a symmetric $O(T)$
term and an antisymmetric $O(T^2)$ term, so its sign must move along the budget axis. We
measured the curve directly: checkpoints every $13$ optimiser steps through both directions of
algebra--combinatorics, three seeds, $\ASYM$ evaluated at each (Figure~\ref{fig:traj}). The
effect crosses zero \emph{three times inside a single run}, and the three are not equal
evidence. The first two bracket excursions of $+0.112$ and $-0.116$, about $6\sigma$ and larger
than any endpoint value we measured, while the third joins $-0.013$ to $+0.010$ with both
endpoints inside the $\pm0.0191$ band, crossing the axis but not the noise. Past step $65$ the
curve decays into that band and stays there, so the honest count is two evidential sign changes
plus a late wander. That is still more than one, which is the whole of the argument: a single
endpoint cannot represent this curve. All four pairs measured this way change sign within a run,
and over the last four checkpoints none holds a stable sign.

\begin{figure}[t]
\centering
\begin{tikzpicture}[font=\small,>=Stealth]
\draw[cGrey,thin] (0,0.20) -- (0,3.60);
\foreach \v in {-0.10,-0.05,0.05,0.10}
 {\draw[cGrey,thin] (0,{1.9+\v*13}) -- (-0.08,{1.9+\v*13});
 \node[cGrey,font=\scriptsize,anchor=east] at (-0.12,{1.9+\v*13}) {$\v$};}
\node[cGrey,font=\scriptsize,anchor=east] at (-0.12,1.9) {$0$};
\node[cGrey,font=\scriptsize,rotate=90,anchor=south] at (-1.05,1.9) {$\ASYM$};
\draw[cGrey,densely dashed] (0.000,1.900) -- (9.828,1.900);
\fill[cGrey,opacity=0.18] (0.000,1.652) rectangle (9.828,2.148);
\draw[cA,line width=1.2pt] (1.014,1.204) -- (2.028,3.353) -- (3.042,2.631) -- (4.056,0.392) -- (5.070,1.692) -- (6.084,1.827) -- (7.098,1.674) -- (8.112,1.728) -- (9.126,2.025);
\fill[cA] (1.014,1.204) circle (2.0pt);
\fill[cA] (2.028,3.353) circle (2.0pt);
\fill[cA] (3.042,2.631) circle (2.0pt);
\fill[cA] (4.056,0.392) circle (2.0pt);
\fill[cGrey] (5.070,1.692) circle (2.0pt);
\fill[cGrey] (6.084,1.827) circle (2.0pt);
\fill[cGrey] (7.098,1.674) circle (2.0pt);
\fill[cGrey] (8.112,1.728) circle (2.0pt);
\fill[cGrey] (9.126,2.025) circle (2.0pt);
\draw[cNO,thin,densely dotted] (1.342,0.180) -- (1.342,3.620);
\draw[cNO,thin,densely dotted] (3.370,0.180) -- (3.370,3.620);
\draw[cNO,thin,densely dotted] (8.697,0.180) -- (8.697,3.620);
\node[cGrey,font=\scriptsize,anchor=north] at (1.014,0.100) {13};
\node[cGrey,font=\scriptsize,anchor=north] at (3.042,0.100) {39};
\node[cGrey,font=\scriptsize,anchor=north] at (5.070,0.100) {65};
\node[cGrey,font=\scriptsize,anchor=north] at (7.098,0.100) {91};
\node[cGrey,font=\scriptsize,anchor=north] at (9.126,0.100) {117};
\node[cGrey,font=\scriptsize,anchor=north] at (4.9,-0.30) {optimiser step (checkpoint every $13$)};
\node[cNO,font=\scriptsize,anchor=west] at (10.1,3.30) {dotted: zero crossings};
\node[cGrey,font=\scriptsize,anchor=west,text width=3.9cm] at (10.1,2.70)
 {shaded: $\pm1$ s.e.\ of $\ASYM$ ($0.0191$); grey markers lie inside it};
\node[cA!85!black,font=\scriptsize,anchor=west,text width=3.9cm] at (10.1,1.05)
 {an endpoint study reports one point on a curve like this one, which has already crossed
 zero three times before its own last point};
\end{tikzpicture}
\caption{\textbf{The order effect is a function of the budget, not a scalar.} $\ASYM$ for
algebra--combinatorics at every $13$th optimiser step of both stage-2 directions (Qwen2.5-7B,
$622$ rows per stage, $3$ epochs, lr $3{\times}10^{-5}$; the axis spans the entire stage-2 run),
with zero crossings interpolated at steps $17$, $43$ and $112$. This pair is an example, not the
evidence base: all four pairs measured this way change sign inside a single run, and in every
pair at least one crossing is made by each seed separately (Table~\ref{tab:seedcross}), here
those at $17$ and $43$, both $3/3$, while the in-band crossing at $112$ is $1/3$.}
\label{fig:traj}
\end{figure}

The endpoint version of the same fact: re-measuring all six volume-matched pairs at a budget
where training actually teaches (three epochs, lr $3{\times}10^{-5}$; at the original one-epoch
budget three of four skills end \emph{below} their zero-shot baseline, so contrasts there
compare two ways of damaging the model) moves every pair, four reversing sign and two falling
into noise (Table~\ref{tab:budgetflip}). The literature's inconsistency on curricula is what
this curve looks like when sampled once per paper: a study at one endpoint can find the effect
present, absent, or reversed without contradicting another
\citep{goodcurriculum2025,beyondrandom2025,interleaved2025}. \emph{An order effect reported
without its budget names one point on a curve.} Concurrent work finds the same shape at a
different grain, with order sensitivity existing on over a third of (model, task-pair)
combinations while its onset scale resists prediction \citep{pairwisefragile2026}, and influence
itself is reported to flip sign during training
\citep{pruthi2020tracin,influencedynamics2025}.

\begin{table}[t]
\centering\small
\caption{\textbf{The directional order effect at two budgets}, on the same skills, the same
$622$ rows per stage and the same seeds. Standard errors use the project floor. The only pair
significant at the learning budget is significant in the \emph{opposite} direction from the
published budget, and that endpoint carries a replication caveat (see text).}
\label{tab:budgetflip}
\begin{tabular}{lccccl}
\toprule
skill pair & \multicolumn{2}{c}{$1$ ep, lr $10^{-5}$} & \multicolumn{2}{c}{$3$ ep, lr $3{\times}10^{-5}$} & outcome \\
\cmidrule(lr){2-3}\cmidrule(lr){4-5}
 & $\ASYM$ & $|{\cdot}|/$s.e. & $\ASYM$ & $|{\cdot}|/$s.e. & \\
\midrule
algebra--calculus & $+0.0743$ & 3.9 & $+0.0020$ & 0.1 & falls into noise \\
algebra--geometry & $+0.0320$ & 1.7 & $-0.0333$ & 1.7 & \textbf{sign reversal} \\
algebra--combinatorics & $+0.0779$ & 4.1 & $-0.0619$ & 3.2 & \textbf{sign reversal} \\
calculus--geometry & $-0.0396$ & 2.1 & $+0.0041$ & 0.2 & \textbf{sign reversal} \\
calculus--combinatorics & $+0.0104$ & 0.5 & $-0.0125$ & 0.7 & \textbf{sign reversal} \\
geometry--combinatorics & $+0.0506$ & 2.7 & $+0.0091$ & 0.5 & falls into noise \\
\bottomrule
\end{tabular}

\end{table}

\paragraph{The price of one stable endpoint.} Because no arm is
normally trained twice, we measured what re-running costs. Three independent runs of the same
protocol on algebra--combinatorics at the learning budget give means $-0.0619$, $+0.0096$,
$-0.0146$: run-level standard deviation $0.0364$, of which seed noise explains about a quarter
of the variance. Stabilising that one endpoint to within a floor costs $\approx4$ independent
runs, twelve trainings, and at three seeds each that is $36$. At the low budget the same pair
\emph{does} replicate ($+0.0667$ against $+0.0779$, $0.6$ s.e.). Order measurements near the
transient are expensive by nature, and error bars pricing only seed noise understate them.

That constant, however, was estimated on the one pair re-run \emph{because} it failed to
replicate, and selecting a cell for having moved and then measuring how much cells move is a
biased estimator whose bias runs upward, in the direction that makes this paper's only biting
bound bite harder. The three pairs with no replicate were therefore run under a read-out frozen
beforehand, so all six now carry at least two
independent executions of the identical protocol (Table~\ref{tab:runvar}). The registered
outcome that fired sharpens the wall rather than softening it: run-level spread is a property of
the cell, not a constant of the protocol. The selected pair has the largest spread of the six
and the median of the other five is $0.0079$, a factor of $4.6$ smaller, so the \emph{median
cell} resolves $\delta=0.031$ while the \emph{selected cell} resolves $0.144$. A practitioner
running a search does not face the average cell's noise; they face the noise of the cell their
search selected, and selection lands preferentially on cells near a transient, which are exactly
the noisy ones. That is Proposition~\ref{prop:wall}'s winner's curse restated in variance rather
than in means, and it was invisible while the constant rested on one pair. Two of the six pairs
\emph{reverse sign} between independent runs, so the signature Proposition~\ref{prop:wall} names
is now observed on a second pair that no one selected. The number belonging in a recipe-search
decision is $\delta_{\text{run}}^{\text{sel}}=0.144$.

\paragraph{The same standard, turned on our own headline.} A paper that prices run-level noise
and then reports its own positive result at seed-level replication is claiming a licence it has
just denied everyone else. The switch was therefore re-executed too, and in the stricter sense:
\S\ref{sec:conflict}'s protocol fixes the pool, the halving, the conventions, the budget, the arms
and the evaluation, and leaves exactly one thing free, the split seed deciding which rows carry
which convention. Rebuilding under two further split seeds gave corpora overlapping the published
halves by $51.7\%$ and $49.2\%$, and $48$ trainings against a read-out frozen before any of them
started, with the demotion branch committed in advance. Guard and separability passed on both
replicates. The replicate means are $-0.0985$ (published, eight seeds), $-0.1258$ and $-0.0825$,
all six new seeds negative, both new controls inert at $0.26$ and $0.04$ floors, and the
conversion that turns $0.0364$ into $0.144$ turns the replicate spread into
$\delta_{\text{run}}^{\text{switch}}=0.087$.

The two constants belong side by side, because together they are this paper's argument in one
line. What a search selects is a cell whose diameter is $0.0619$ and whose re-execution noise is
$0.144$. What disagreement produces is a contrast of $0.10$ whose re-execution noise is $0.087$.
The wall is not that recipe effects are small; it is that their ratio to the resolution is wrong,
and the one departure with a systematic answer clears it on both terms at once.

\begin{table}[t]
\centering\small
\caption{\textbf{Run-to-run variation on all six pairs, at the learning budget.} Every
independent execution of the identical protocol that exists; $\delta$ converts each spread by
the project's MDE convention ($2.8\,s\sqrt{2}$, the same one that turns $0.0364$ into $0.144$).
The pair in bold is the one $\delta_{\text{run}}$ was originally estimated on; it was re-run
because it had failed to replicate, and it has the largest spread of the six. Ordering is by
spread, which is also the reading. Read out under a criterion frozen before job $2827$ ran.}
\label{tab:runvar}
\begin{tabular}{lccccc}
\toprule
skill pair & runs & $\ASYM$ per run & s.d. & $\delta$ & sign \\
\midrule
calculus--geometry & 2 & $+0.0041$, $+0.0035$ & $0.0004$ & $0.002$ & stable \\
algebra--calculus & 2 & $+0.0020$, $+0.0055$ & $0.0025$ & $0.010$ & stable \\
calculus--combinatorics & 2 & $-0.0125$, $-0.0014$ & $0.0078$ & $0.031$ & stable \\
algebra--geometry & 2 & $-0.0333$, $-0.0208$ & $0.0088$ & $0.035$ & stable \\
geometry--combinatorics & 2 & $+0.0091$, $-0.0083$ & $0.0123$ & $0.049$ & \textbf{flips} \\
\textbf{algebra--combinatorics} & 3 & $-0.0619$, $+0.0096$, $-0.0146$ & $\mathbf{0.0364}$ & $\mathbf{0.144}$ & \textbf{flips} \\
\bottomrule
\end{tabular}

\end{table}

\paragraph{The optimiser is not memoryless.} Everything
above was trained with AdamW, while Eq.~\eqref{eq:expansion} is derived for memoryless gradient
descent, and the gap is not pedantic. \citet{sweeney2026shuffle} shows that for a memoryless
optimiser, reordering an equal multiset has no first-order endpoint term, the leading local
contrast being exactly the $O(\eta^2)$ gradient bracket Eq.~\eqref{eq:expansion} carries, while
a stateful fixed-clock optimiser adds an $O(\eta)$ order-dependent \emph{noise} channel through
its moment buffers (measured order-variance slopes $1.83$ for AdamW against $4.00$ for SGD),
large enough for a single seed to flip a close comparison. An alternative reading of
Figure~\ref{fig:traj} therefore exists and must be faced: the crossings could be optimiser
artefact rather than the two-term competition. Three measurements bound it, and one control is
missing.

\emph{The channel is priced into our floor, and what it cannot produce we measured rather than
asserted.} $\hat\sigma$ is estimated across seeds differing in data order (the interleaved
corpus is re-drawn per seed), so first-order shuffle noise is inside the $0.0191$. More to the
point, the documented channel is a \emph{variance} channel, independent across seeds by
construction: it can move any one seed's curve but cannot put the same crossing in all of them.
So we asked of each seed's own trajectory, with no pooling anywhere in the test, whether it
changes sign in the same interval as its siblings (Table~\ref{tab:seedcross}). Six crossings are
unanimous and every one of the four pairs has at least one, while the remaining ten pooled
crossings are exactly those whose flanking checkpoints sit inside the noise band, which is where
a variance channel should and does show up. The published trajectory splits the same way, its
band-clearing crossings at steps $17$ and $43$ being $3/3$ and its in-band crossing at $112$
$1/3$. We note the temptation we avoided, of asking instead whether seeds agree in sign where
the \emph{pooled mean} clears the floor; that ratio is $18/22$, but the question is circular,
since a pooled mean only clears the floor when the seeds already agree.

\emph{The channel is visible exactly where it should be.} The run-to-run excess ($0.0364$
against $0.0191$) is the residue such a channel should leave, so we read
\citeauthor{sweeney2026shuffle}'s result as the likely mechanism behind our own replication
caveat: our four-runs-per-endpoint price is their single-seed warning converted into a budget.
\emph{And the memoryless mechanism is sufficient without optimiser state.} Plain gradient descent
reproduces the crossings in-model ($72\%$ of random non-commuting draws,
Figure~\ref{fig:transient}); both stage-2 arms of $\ASYM$ begin from freshly initialised
optimiser state, so the stage boundary's clock is symmetric across the contrast; and the
second-moment timescale $1/(1-\beta_2)=1000$ steps exceeds every stage-2 run here, leaving the
$10$-step momentum window as the only live memory.

The boundary is therefore narrow. \textbf{The existence of the transient is robust}: the sign
changes inside a single run, in all four pairs, with each pair's band-clearing crossing
reproduced by every seed on its own. \textbf{Its attribution to Eq.~\eqref{eq:expansion} is the
best-supported mechanism rather than an exclusive one}, and what is not excluded is specific: a
\emph{systematic} AdamW contribution, a bias rather than a variance, to \emph{where} the
crossings sit. Since the paper's claim is existence and budget-dependence rather than location,
that gap carries no conclusion, and nowhere do we recommend a budget because a crossing was
measured at a particular step. The experiment that would close it was pre-registered before any
of this was written, one pair's trajectory replicated under plain SGD with a guard reporting
\textsc{not measurable} if no calibrated SGD arm reaches the AdamW endpoints
(the registration). It has not been run, and the registration stands
rather than being withdrawn quietly, because the one documented alternative mechanism is the
variance channel, the unanimity test rules it out on our own data, and the residual possibility
is a bias channel no published account proposes and that would move only a quantity this paper
does not use.

\begin{table}[t]
\centering\small
\caption{\textbf{The crossings that clear the noise band are made by every seed separately; the
ones inside it are not.} Each seed's own $\ASYM(t)$ is tested for a sign change in each interval
between adjacent checkpoints, with no pooling anywhere in the column, so the test is not the
circular one of asking whether seeds agree where their mean is large. A crossing is
\emph{unanimous} when every seed of that pair changes sign in the same interval, which a
seed-independent variance channel cannot satisfy. Flanking values are the pooled $|\ASYM|$ at
the two bracketing checkpoints, in $0.0191$ floors. Qwen2.5-7B, learning budget; the
combinatorics--geometry pair has two seeds, the rest three. }
\label{tab:seedcross}
\begin{tabular}{lccll}
\toprule
pair & crossing at & flanking $|\ASYM|$, floors & \multicolumn{2}{c}{seeds crossing in that interval} \\
\midrule
algebra-combinatorics (3 seeds) & step $17$ & $2.8$, $5.9$ & $\mathbf{3/3}$ & \textbf{unanimous} \\
 & step $43$ & $2.9$, $6.1$ & $\mathbf{3/3}$ & \textbf{unanimous} \\
 & step $112$ & $0.7$, $0.5$ & $1/3$ &  \\
\midrule
algebra-calculus (3 seeds) & step $45$ & $1.9$, $2.2$ & $\mathbf{3/3}$ & \textbf{unanimous} \\
 & step $76$ & $3.4$, $0.8$ & $2/3$ &  \\
 & step $85$ & $0.8$, $0.7$ & $1/3$ &  \\
 & step $116$ & $3.4$, $0.3$ & $1/3$ &  \\
\midrule
calculus-geometry (3 seeds) & step $34$ & $6.1$, $4.2$ & $\mathbf{3/3}$ & \textbf{unanimous} \\
 & step $51$ & $4.2$, $0.4$ & $1/3$ &  \\
 & step $54$ & $0.4$, $1.9$ & $0/3$ &  \\
 & step $84$ & $1.0$, $1.0$ & $0/3$ &  \\
 & step $115$ & $1.1$, $0.2$ & $1/3$ &  \\
\midrule
combinatorics-geometry (2 seeds) & step $34$ & $3.8$, $2.6$ & $\mathbf{2/2}$ & \textbf{unanimous} \\
 & step $74$ & $2.0$, $0.9$ & $1/2$ &  \\
 & step $103$ & $1.4$, $0.1$ & $\mathbf{2/2}$ & \textbf{unanimous} \\
 & step $105$ & $0.1$, $2.1$ & $1/2$ &  \\
\bottomrule
\end{tabular}

\end{table}

\paragraph{A per-skill structure, reported as exploratory.} At one budget the six matched pairwise
effects are consistent with a per-skill potential, $\ASYM(X,Y)\approx\phi_X-\phi_Y$, which if real
would replace $n!$ order trials with $n-1$ measurements. We do not claim it. It is three free
parameters against six observations; four of those six pairs reverse sign at the other budget we
measured, so the quantity being fitted is itself budget-dependent; its out-of-sample ranking rests
on three triples ($\rho=0.657$, $n=3$); and its residual beats the marginal floor only because
same-seed arms share checkpoints. Appendix~\ref{app:potential} reports it in full and
Appendix~\ref{app:census} counts it, because it would be the cheapest result in this paper if it
held. Nothing else here depends on it, and the $\ge6$-skill out-of-sample test that would make it
a claim is registered with a withdrawal clause and unrun.

\subsection{Breaking Room: Motivated, Not Demonstrated}\label{sec:capacity}

Theorem~\ref{thm:limit}c makes observed negative co-training value at matched budget evidence of
a broken premise, and a capacity constraint is the natural candidate, so we imposed one
directly, varying only LoRA rank $r\in\{4,\dots,128\}$ at fixed model and data. The predicted
sign change of $V(r)$ appeared, monotone, in $11/12$ (pair, rank) cells across three seeds, and
did not survive its own controls. Decomposing which arm moves shows two thirds of the change in
$V$ is the \emph{single-skill baseline collapsing}, the arm with half the data, which is the
signature of overfitting rather than capacity; a rank$\times$learning-rate grid shows the
learning-rate axis moves $V$ about as much as rank does; and the volume-matched re-run (both
arms $622$ rows at every rank) removes the crossing entirely for the pair that carried it,
leaving the other pair one seed-consistent cell at $1.1\sigma$, below the $\mathrm{MDE}_{80}$
gating every other cell (Table~\ref{tab:capmatched}). The theory's own prediction for the
crossing's \emph{location} was excluded with the direction reversed (ratio $0.382$, $95\%$ CI
$[0.231,0.583]$ against predicted $1.0$--$1.27$). We report the sequence because the arithmetic
that caught it is reusable.

\begin{figure}[t]
\centering
\begin{tikzpicture}[font=\small,>=Stealth]
\draw[cGrey!60,densely dashed] (0.000,1.875) -- (6.400,1.875);
\draw[cGrey,thin] (-0.300,0.000) -- (6.700,0.000);
\draw[cGrey,thin] (-0.300,0.000) -- (-0.300,3.000);
\draw[cGrey,thin] (0.000,0.000) -- (0.000,-0.090);
\node[cGrey,font=\scriptsize,anchor=north] at (0.000,-0.120) {4};
\draw[cGrey,thin] (1.280,0.000) -- (1.280,-0.090);
\node[cGrey,font=\scriptsize,anchor=north] at (1.280,-0.120) {8};
\draw[cGrey,thin] (2.560,0.000) -- (2.560,-0.090);
\node[cGrey,font=\scriptsize,anchor=north] at (2.560,-0.120) {16};
\draw[cGrey,thin] (3.840,0.000) -- (3.840,-0.090);
\node[cGrey,font=\scriptsize,anchor=north] at (3.840,-0.120) {32};
\draw[cGrey,thin] (5.120,0.000) -- (5.120,-0.090);
\node[cGrey,font=\scriptsize,anchor=north] at (5.120,-0.120) {64};
\draw[cGrey,thin] (6.400,0.000) -- (6.400,-0.090);
\node[cGrey,font=\scriptsize,anchor=north] at (6.400,-0.120) {128};
\draw[cGrey,thin] (-0.300,0.000) -- (-0.220,0.000);
\node[cGrey,font=\scriptsize,anchor=east] at (-0.360,0.000) {$-0.2$};
\draw[cGrey,thin] (-0.300,0.469) -- (-0.220,0.469);
\node[cGrey,font=\scriptsize,anchor=east] at (-0.360,0.469) {$-0.15$};
\draw[cGrey,thin] (-0.300,0.938) -- (-0.220,0.938);
\node[cGrey,font=\scriptsize,anchor=east] at (-0.360,0.938) {$-0.1$};
\draw[cGrey,thin] (-0.300,1.406) -- (-0.220,1.406);
\node[cGrey,font=\scriptsize,anchor=east] at (-0.360,1.406) {$-0.05$};
\draw[cGrey,thin] (-0.300,1.875) -- (-0.220,1.875);
\node[cGrey,font=\scriptsize,anchor=east] at (-0.360,1.875) {$0$};
\draw[cGrey,thin] (-0.300,2.344) -- (-0.220,2.344);
\node[cGrey,font=\scriptsize,anchor=east] at (-0.360,2.344) {$0.05$};
\draw[cGrey,thin] (-0.300,2.813) -- (-0.220,2.813);
\node[cGrey,font=\scriptsize,anchor=east] at (-0.360,2.813) {$0.1$};
\node[cGrey,font=\scriptsize,anchor=north] at (3.200,-0.480) {LoRA rank $r$: the capacity knob ($\log_2$ spacing)};
\node[cGrey,font=\scriptsize,rotate=90,anchor=south] at (-1.050,1.500) {co-training value $V$};
\draw[cA,solid,line width=1.2pt] (0.000,0.332) -- (1.280,0.683) -- (2.560,1.959) -- (3.840,2.435) -- (5.120,2.727) -- (6.400,2.624);
\fill[cA] (0.000,0.332) circle (1.9pt);
\fill[cA] (1.280,0.683) circle (1.9pt);
\fill[cA] (2.560,1.959) circle (1.9pt);
\fill[cA] (3.840,2.435) circle (1.9pt);
\fill[cA] (5.120,2.727) circle (1.9pt);
\fill[cA] (6.400,2.624) circle (1.9pt);
\draw[cA,densely dashed,line width=1.0pt] (0.000,1.868) -- (1.280,1.614) -- (2.560,1.680) -- (3.840,1.705) -- (5.120,1.862) -- (6.400,1.771);
\draw[cA,line width=0.8pt] (-0.060,1.808) -- (0.060,1.928);
\draw[cA,line width=0.8pt] (-0.060,1.928) -- (0.060,1.808);
\draw[cA,line width=0.8pt] (1.220,1.554) -- (1.340,1.674);
\draw[cA,line width=0.8pt] (1.220,1.674) -- (1.340,1.554);
\draw[cA,line width=0.8pt] (2.500,1.620) -- (2.620,1.740);
\draw[cA,line width=0.8pt] (2.500,1.740) -- (2.620,1.620);
\draw[cA,line width=0.8pt] (3.780,1.645) -- (3.900,1.765);
\draw[cA,line width=0.8pt] (3.780,1.765) -- (3.900,1.645);
\draw[cA,line width=0.8pt] (5.060,1.802) -- (5.180,1.922);
\draw[cA,line width=0.8pt] (5.060,1.922) -- (5.180,1.802);
\draw[cA,line width=0.8pt] (6.340,1.711) -- (6.460,1.831);
\draw[cA,line width=0.8pt] (6.340,1.831) -- (6.460,1.711);
\draw[cV,solid,line width=1.2pt] (0.000,1.328) -- (1.280,2.299) -- (2.560,2.207) -- (3.840,2.311) -- (5.120,2.038) -- (6.400,2.168);
\fill[cV] (0.000,1.328) circle (1.9pt);
\fill[cV] (1.280,2.299) circle (1.9pt);
\fill[cV] (2.560,2.207) circle (1.9pt);
\fill[cV] (3.840,2.311) circle (1.9pt);
\fill[cV] (5.120,2.038) circle (1.9pt);
\fill[cV] (6.400,2.168) circle (1.9pt);
\draw[cV,densely dashed,line width=1.0pt] (0.000,1.660) -- (1.280,1.667) -- (2.560,1.791) -- (3.840,1.816) -- (5.120,1.693) -- (6.400,2.070);
\draw[cV,line width=0.8pt] (-0.060,1.600) -- (0.060,1.720);
\draw[cV,line width=0.8pt] (-0.060,1.720) -- (0.060,1.600);
\draw[cV,line width=0.8pt] (1.220,1.607) -- (1.340,1.727);
\draw[cV,line width=0.8pt] (1.220,1.727) -- (1.340,1.607);
\draw[cV,line width=0.8pt] (2.500,1.731) -- (2.620,1.851);
\draw[cV,line width=0.8pt] (2.500,1.851) -- (2.620,1.731);
\draw[cV,line width=0.8pt] (3.780,1.756) -- (3.900,1.876);
\draw[cV,line width=0.8pt] (3.780,1.876) -- (3.900,1.756);
\draw[cV,line width=0.8pt] (5.060,1.633) -- (5.180,1.753);
\draw[cV,line width=0.8pt] (5.060,1.753) -- (5.180,1.633);
\draw[cV,line width=0.8pt] (6.340,2.010) -- (6.460,2.130);
\draw[cV,line width=0.8pt] (6.340,2.130) -- (6.460,2.010);
\draw[cA,line width=1.2pt] (0.550,-1.000) -- (0.950,-1.000);
\fill[cA] (0.750,-1.000) circle (1.9pt);
\node[font=\scriptsize,anchor=west] at (1.030,-1.000) {algebra--calculus};
\draw[cV,line width=1.2pt] (3.550,-1.000) -- (3.950,-1.000);
\fill[cV] (3.750,-1.000) circle (1.9pt);
\node[font=\scriptsize,anchor=west] at (4.030,-1.000) {geometry--combinatorics};
\draw[cGrey!70,solid,line width=1.2pt] (0.550,-1.380) -- (0.950,-1.380);
\node[font=\scriptsize,anchor=west] at (1.030,-1.380) {\textbf{solid}: as published};
\draw[cGrey!70,densely dashed,line width=1.0pt] (3.550,-1.380) -- (3.950,-1.380);
\node[font=\scriptsize,anchor=west] at (4.030,-1.380) {\textbf{dashed}: volume-matched};
\end{tikzpicture}
\caption{\textbf{A sign change that its own control removes.} Co-training value $V$ against the
capacity knob, for the two skill pairs measured on both versions of the axis. \emph{Solid,} as
published, with arms differing by a factor of two in rows: $V$ crosses zero exactly as
Theorem~\ref{thm:limit}(c) predicts when capacity binds. \emph{Dashed,} the same grid re-run
with both arms at $622$ rows: the crossing does not survive for algebra--calculus at all, and
geometry--combinatorics keeps one by a single cell. The distance between a solid curve and its
dashed counterpart is what the volume confound was worth, and it is most of the published
effect. Same data as Table~
ef{tab:capmatched}.}
\label{fig:capacity}
\end{figure}

\begin{table}[t]
\centering\small
\caption{\textbf{The capacity axis against its own control} (Qwen2.5-7B, three seeds).
``Published'' varies LoRA rank with arms differing by a factor of two in rows; ``matched'' gives
both arms exactly $622$ rows at every rank. The pair carrying the published effect loses its
crossing entirely; the other keeps one by a single $1.1\sigma$ cell that sits below the
$\mathrm{MDE}_{80}$ gating every other cell in the grid. }
\label{tab:capmatched}
\begin{tabular}{ccccc}
\toprule
LoRA rank $r$ & \multicolumn{2}{c}{algebra--calculus} & \multicolumn{2}{c}{geometry--comb.} \\
\cmidrule(lr){2-3}\cmidrule(lr){4-5}
 & published & \textbf{matched} & published & \textbf{matched} \\
\midrule
4 & $-0.165$ & $\mathbf{-0.0007}$ & $-0.058$ & $\mathbf{-0.0229}$ \\
8 & $-0.127$ & $\mathbf{-0.0278}$ & $+0.045$ & $\mathbf{-0.0222}$ \\
16 & $+0.009$ & $\mathbf{-0.0208}$ & $+0.035$ & $\mathbf{-0.0090}$ \\
32 & $+0.060$ & $\mathbf{-0.0181}$ & $+0.047$ & $\mathbf{-0.0063}$ \\
64 & $+0.091$ & $\mathbf{-0.0014}$ & $+0.017$ & $\mathbf{-0.0194}$ \\
128 & $+0.080$ & $\mathbf{-0.0111}$ & $+0.031$ & $\mathbf{+0.0208}$ \\
\midrule
verdict & \multicolumn{2}{c}{crossing \textbf{vanishes}} & \multicolumn{2}{c}{crossing \textbf{survives}} \\
\bottomrule
\end{tabular}

\end{table}

Budget dilution alone manufactures $V<0$ in the linear model with no capacity constraint at all,
so the burden of proof here is heavy and our data do not meet it. The capacity departure keeps
its theorem and loses its demonstration: on the scorecard, existence of the rank-axis sign
change is \textsc{supported}, its attribution to capacity is \textsc{mixed}.
Figure~\ref{fig:gate} in Appendix~\ref{app:gate} plots the published axis with error bars and
bootstrapped crossing intervals; it is filed there because a full-width float is the wrong
amount of emphasis for an effect this section withdraws. The per-cell values behind both axes,
including each cell's $\mathrm{MDE}_{80}$, are in the released result base rather than printed here, for the same reason.

Relatedly, no scalar property of the base model orders the \emph{sign} of $V$ across families:
parameter count, post-SFT competence, measured committed capacity, and skill-encoding density
each reverse correlation between Qwen2.5 and Qwen3 (Simpson reversals; the two $14$B models from
different families take opposite signs), and matching cells across families on any axis leaves
$|\Delta V|$ at $2.9\times$ the floor. This is a two-family statement, Llama-3 sitting below our
power gate, but it is enough to retire per-model scalar reasoning about co-training sign in
this range \citep{kaplan2020scaling,schaeffer2023mirage}.

\section{What This Does Not License, Limitations, and Conclusion}\label{sec:conclusion}

\subsection{What the idealisation does not license}\label{sec:license}

\paragraph{The effect-size ordering is empirical, not derived.} The geometry suggests
composition above order above arrangement, matching the measured reliabilities
$0.803>0.524>0.197$, and supplies a sufficient condition for its load-bearing part,
$c\rho<\gamma$. Measured on the model itself the condition \emph{fails in all $12$ ordered pairs}
by factors of $3.9$--$13.5$, at three span dimensions and four probe depths. Blaming the skills
does not save it: substituting the \emph{random-subspace} $c$ while keeping the measured $\rho$
and $\gamma$ still fails by $1.5$--$1.8\times$. The
premise is unreachable for any pair of subspaces derived from one base model, so the suppression
mechanism is \emph{untestable} in a single-base-model experiment rather than merely unsatisfied
in ours. The hierarchy this project once led with is an empirical regularity the geometry
motivates and does not license.

\paragraph{The failure ledger.} Six quantitative predictions died and they cluster: existence and
sign claims survived, \emph{location} claims did not. The capacity crossing's location was
excluded with the direction reversed. The transient's crossing budget $T^\star$ fails to rank
crossings on $7$B activations (Spearman $-0.54$, $n{=}6$) \emph{and inside the linear model that
derived it} ($+0.031$, $n{=}278$), so the proxy is structurally wrong rather than mis-transferred.
Four scalar sign-axes for $V$ gave four Simpson reversals. And our own batch-composition reading
of the switch was refuted by the separating cell we built, pure batches in random order landing
with shuffled rather than blocked. The scorecard grades all $26$ pre-registered claims: $18$
supported, $5$ failed, $2$ untested, $1$ mixed.

\paragraph{What the two-regime result does not license.} \emph{The capability term under conflict
reads zero on both corpora, and getting there cost us a published claim.} At eight seeds the
synthetic corpus gives $-0.0052$ on $[-1.77,+1.22]$ floors and numeral-against-spelled $+0.0016$
on $[-0.61,+0.78]$, both containing zero. Neither read that way at three seeds. We twice held a
three-seed conflicted row whose interval excluded zero, and twice the row moved on extension: the
synthetic corpus was written up as spoiled by one outlying seed, and five more seeds refuted that
diagnosis; numeral-against-spelled was then extended under a rule frozen while the job ran
(the registration), its spread nearly tripled to $0.0158$, and the interval
that had excluded zero by $0.01$ floors now contains it. Two of two moved, both towards less
capability effect, and none that excluded zero survived. The claim we kept got stronger in the
same run, the switch row going from $3.86$ to $5.16$ floors with all eight seeds negative.

\emph{The coherent row is not a control for the conflicted ones.} The conflicted corpora and
their control are random halves of one pooled corpus, so their arms share skill composition,
while the coherent grid trains skill $X$ against skill $Y$. Moving between them changes coherence
\emph{and} whether the halves contain different content, and the second change is what forgetting
needs in order to have anything to overwrite. A capability term at the floor on same-distribution
halves is what \emph{both} accounts predict, so its flatness is evidence for neither. The cell
that would make this a genuine two-by-two, two different skills whose halves also disagree, is
not measured here and we know of no construction that makes it: contested outputs require shared
inputs and overwriting requires different ones. Whether those demands are jointly satisfiable is
the sharpest open question this paper leaves. The two regimes are also unequally powered, $18$
coherent cells against three per conflicted row.

\emph{The capability term is not the same object in the two regimes.} In a coherent corpus the
two targets are different skills and their sum is two capabilities added; in a conflicted corpus
they are one capability under two mutually exclusive conventions. We therefore give them two
names, $\mathrm{TOTAL}$ and $\mathrm{SOLVE}$, and no sentence compares them on one scale. What
survives the objection never leaves one regime: within the conflicted corpora alone, a write-time
key moves $\Delta\mathrm{SHARE}$ from $-0.2677$ to $-0.0008$ and $\Delta\mathrm{SOLVE}$ to $0.02$
floors on $[-0.63,+0.60]$, a controlled intervention on the data with the schedule fixed. The
cross-regime contrast is how we found the effect; the key is why we believe it.

\paragraph{Two units, named once.} Significance appears here in two currencies and we do not
silently mix them. A \emph{floor count} divides an effect by the fixed $0.0191$ contrast floor, a
constant estimated once and never refit per row, so it behaves as a Gaussian $z$ against an
assumed scale; every \emph{interval} is a $t$ on that row's own empirical spread. Floor counts are
comparable across tables and intervals are not; intervals carry the row's real uncertainty and
floor counts do not. Where the two disagree we report both, as at $3.86$ floors, which read as a
three-seed $t$ would not clear a $94$-way correction.

\paragraph{Pre-registration, in three tiers.} \emph{Tier 1, frozen before the deciding data
existed:} the volume read-out, the additivity criterion and threshold, the conflict direction
$D<0$, the three anticipated outcomes of the matched capacity axis, and the order predictions with
their abstention threshold, whose sign reversal is why \S\ref{sec:budget} reports a falsification
rather than an adjustment. \emph{Tier 2, redesigned after a diagnosis and re-frozen:} the conflict
instrument (the first build, $238$ rows at one epoch, never made the model adopt the convention,
was graded untested and rebuilt at $864$ rows and three epochs); the reliability estimator (the
first reported exactly $0$ where reliability was low but nonzero, and an independent
recomputation caught it, $0.197$ being the corrected value); and the capacity control. No
conclusion rests on a Tier-2 number whose redesign was aimed at that conclusion.

\emph{Tier 3, and the reason it had to exist.} The tiers above describe our past conduct, and a
defence of a design history is still made by people who have already seen the data. Only a
criterion frozen before data that does not yet exist removes the possibility that iteration shaped
the narrative. Eight are on file, six read out and two registered and unrun (the $\ge6$-skill
additivity test, and the plain-SGD trajectory control,
the registration). Of the six: run-to-run variation over the six skill
pairs read out as \S\ref{sec:budget}'s two
resolutions, and the branch that fired cost us the constant we had been quoting; the seed
extension of the natural row fired R2 and cost us a
capability claim; the synthetic control's extension
(the registration) is complete on one row and stopped early on the
other, reported at five seeds with the deviation and its date in that file; the second-family
switch returned S1, and its Amendment A4, frozen
while the job ran, named the loading confound in advance and is why \S\ref{sec:conflict} prints
the ladder that predicted wrongly beside the row that replicated; the switch on an unconstructed
corpus returned W3, an instrument result rather than a row,
and a defect of our own construction; and the protocol-level replication of the switch
(the registration) returned P1, its P3 branch having been committed in advance to demoting the
switch in this abstract had any replicate mean fallen within a floor of zero. We report the release, the runs
and the defects beside the registrations that fired, because a pre-registration whose inconvenient
instances go unmentioned is not one.

\paragraph{Multiplicity.} This paper reports $94$ quantitative read-outs
(Appendix~\ref{app:census}, which states the counting rule) and makes no family-wise correction
across them. The discovery claims sit at $3.9$--$12.3\sigma$ with unanimous seed signs (nineteen
of nineteen conflict seeds negative, across four scales in two pretraining families, and six of
six more in two independent re-executions of the protocol) and survive
Bonferroni over every read-out in the paper: at $\alpha=0.05$ over $94$ the two-sided threshold is
$z\approx3.5$ and the smallest three-seed firing row is $3.86\sigma$. Null claims are reported as
bounds with their MDE, where the exposure is under-power rather than multiplicity, and everything
near a threshold is treated as unresolved, with no conclusion changing if any of them flips.

\subsection{Limitations and conclusion}

\paragraph{Scope, and where the evidence is thin.} Everything here is measured on competition
mathematics sub-skills, base models, exact-match scoring at $k{=}4$ over $120$ problems per skill,
and a single stateful optimiser. Instruction-tuned checkpoints, preference optimisation, and code,
dialogue and multilingual mixtures are untested, which is to say the post-training settings
industry most cares about are outside what we measured.

\emph{The three legs of the framing are not equally supported, and a reader should weight them
accordingly.} \textbf{Patience} is a transient whose sign crosses zero within a run: a negative
result, well replicated. \textbf{Room} is graded \textsc{mixed} on the scorecard and its effect is
largely explained by a volume confound: motivated, not demonstrated. \textbf{Coherence} is the one
positive finding, and it is the one measured only on conflicts we constructed. The volume-matched
null is one model at one scale for two of its three rows; the domain-boundary control has $n{=}1$;
the cross-family replication is one skill pair on one model. \emph{One mechanism inside our own
coherent result is also undisambiguated}: we did not run blocked training with a replay fraction
of stage-one data in stage two, so ``interleaving'' and ``any schedule that keeps revisiting skill
$A$'' are not separated, and if replay recovers the $+0.0483$ the name changes while the numbers
and the coherence-versus-conflict contrast do not. Finally the theory is a first-order lazy
account whose role is signs and scope boundaries, and full SFT is measurably not lazy
(activation-subspace overlap $0.88$--$0.90$ against LoRA-$r8$'s $0.985$); its quantitative
predictions are the six that died.

\paragraph{Missing cells.} The second-family cell is \emph{closed}: registered before it ran, it
returned S1 (\S\ref{sec:conflict}). Four remain, named where they bind. \textbf{First, a conflict
between two \emph{real} public data sources}, still the one whose failure would change conclusions
rather than narrow them. That cell now carries a specification bought by the attempt that failed:
the two conventions must not be \emph{numerically} equal, since a metric that canonicalises scores
both forms and no arrangement can move an axis the objective cannot see. This excludes
exact-form-against-decimal, $1{,}284$ of the $1{,}284$ pipeline disagreements our survey found.
\textbf{Second, a second domain-boundary pair} behind the ``heterogeneity is not disagreement''
control. \textbf{Third, a $\ge6$-skill additivity test} of the per-skill potential, pre-registered
with an out-of-sample criterion and a withdrawal clause, since the in-sample test the appendix
reports passes too easily. \textbf{Fourth}, registered and unrun rather than forthcoming, the
plain-SGD trajectory control, which would separate a systematic optimiser contribution from
Eq.~\eqref{eq:expansion}. A fifth was open when a referee named it, was run rather than promised,
and is no longer a missing cell: the protocol-level replication of the switch, reported next.

\paragraph{The switch has been held to this paper's own standard.} \S\ref{sec:budget} measures
run-to-run variation for \emph{order} by executing the whole protocol independently, and that
measurement cost us a constant. The switch had not had the same treatment: its headline row
rested on eight seeds inside one execution, while this paper elsewhere shows run-level noise
reaching four to eight times seed-level. Two replicate corpora were therefore rebuilt from the
same pool under different split seeds and $48$ trainings run against a read-out frozen before any
of them started, with the demotion branch committed in advance: any replicate mean within one
floor of zero, or reversing sign, would have put the switch in the class of the $0.0619$ order
winner we use to argue that recipe search cannot be trusted, and demoted it in the abstract. It
did not happen. All three replicate means are negative beyond four floors, all six new seeds are
negative, and the re-execution spread is $0.087$ against the selected order cell's $0.144$. The
one result this paper leans on is the one result it tried hardest to break.

\paragraph{What to take away.} Recipe search is bounded by two measured numbers: the span of the
reachable set inside a coherent domain at fixed volume, whose largest single contrast is $0.0619$,
and the resolution at which anyone can tell two of its points apart, $0.054$ for a three-seed
design and $\approx0.14$ for the single runs a practitioner actually spends. The failure of recipe
search here is not that effects are absent but that they are \emph{unresolvable}, with a specific
signature: the grid's best cell clears the optimistic threshold and then reverses sign when the
identical protocol is run again. A ranking that is its own noise cannot be repaired by more
searching, only by enlarging the set or buying resolution, and we price both.

That is the story for \emph{order} and not for \emph{arrangement}, where the difference is a
coordinate. Scored on the skill a contrast targets, arrangement sits at $0.0140$ against a
$0.0191$ floor; scored on the two skills the arms were trained for it moves $+0.0483$ at
$t=8.06$, about $+0.024$ per skill, still roughly half of what doubling one skill's data buys, so
volume remains the better purchase. What arrangement has that volume does not is a \emph{switch}:
internal disagreement, not difficulty and not domain distance, changes which quantity arrangement
moves at all, from capability to allocation, and moves the second by two orders of magnitude;
writing the convention into the input at training time turns it off.

\paragraph{The screening question, and the reason to expect less of it than we first did.} The
question the switch isolates is cheap to ask: \emph{would the two corpora ever label the same
input differently?} \textbf{If no}, do not spend runs on arrangement; at these scales they will
measure their own noise, and the budget belongs to volume. \textbf{If yes and the conflict is
resolvable at write time}, resolve it there, by keying the convention into the input or cleaning
the disagreement out; one intervention on the data returned both terms to zero here and no
schedule matched it. \textbf{If yes and it cannot be resolved}, then and only then is arrangement
the operative lever, acting on allocation rather than capability.

\emph{Our own base rates argue against expecting much of this in practice, and we would rather say
so than let a reader assemble the paradox.} Convention conflict is $9$ in $92{,}359$ between two
independently labelled public corpora and $1{,}284$ in $65{,}456$ between two derivation
pipelines, and \emph{every one of those $1{,}284$} is exact form against decimal, which our own
scorer treats as equal and which therefore cannot fire the switch at all. The commonest real
convention conflict in public mathematics data is invisible to the metric this literature uses.
Detecting it needs a scorer that distinguishes $0.75$ from $\tfrac{3}{4}$, at which point the
disagreement becomes visible to the objective and the switch becomes available; whether that trade
is worth making is a question about what a user of the model wants, not one our measurements
answer. The screening question is therefore validated on constructed conflicts in mathematics SFT,
and we advance it as a procedure worth testing elsewhere rather than as advice that transfers.

\looseness=-1
Three things can still be said. \emph{The literal test is cheap where it applies}: group by
normalised input and look for distinct normalised outputs. That finds contested \emph{examples},
what the static-analysis literature catalogues
\citep{dsouza2025disagreement,incompletelearning2026}, but misses what fires the switch here,
since a convention is a property of a \emph{source} and two sources can disagree about every
answer's form while sharing no input. \emph{The axis to inspect is output format conditional on
matched inputs}: units, numeral form, answer wrapper, code style, citation shape, since that is
what $\kappa_A\neq\kappa_B$ is, while heterogeneity of \emph{content} is inert here. \emph{And one
test needs no prior knowledge of the conflict.} Score the blocked and interleaved arms twice, once
committed to one convention and once at the better candidate per problem
(Appendix~\ref{app:agnostic}): if the two rules disagree about which arm won, the corpus contains
a convention conflict; if they agree, the difference is capability. We report this as the
operational form of the result rather than a validated tool.

\enlargethispage{4\baselineskip}
The idealisation is honest about its reach: it predicted signs and existences that survived
measurement, every location it ventured was wrong, and the condition that would have given it the
effect-size ordering fails on our own instrument. A thought experiment is not there to be right.
It is there to make the departures measurable.

\clearpage
\bibliographystyle{unsrtnat}
\ifanon\bibliography{references_anon}\else\bibliography{references}\fi

\clearpage
\appendix

\section{The Two Ledgers: Registered Claims and Machine Checks}\label{app:scorecard}

This project keeps two itemised ledgers and neither is a summary of the other. The first grades
every claim we registered against the data that decided it. The second checks every theoretical
statement numerically before it is used, so that a claim holding only at convergence or only in
exact arithmetic fails visibly: the checks run gradient descent step by step against the closed
forms with hard-coded tolerances, and four of the twenty-five rejected drafts of our own
statements, so every statement in the main text is the post-rejection version.

\subsection*{The public scorecard}

Every pre-registered claim, its verdict, and the evidence that decided it, generated from the result files by a ledger script and reproduced verbatim so
that no row can be silently dropped between versions. Tier-1/Tier-2
status per \S\ref{sec:license}: rows whose instrument was rebuilt after a diagnosis say so in
their evidence column.

\newcommand{\scorecardcaption}{\textbf{Public scorecard of every registered claim} ($18$
supported, $5$ failed, $2$ untested, $1$ mixed). Four \textsc{failed} rows are quantitative
localisation claims; the fifth is a mechanism reading this paper published and then refuted
(\S\ref{sec:license}); one \textsc{untested} row needs a design not built and the other is a
registered experiment whose control fired, so its registration permits no read-out; the \textsc{mixed}
row is a claim whose controlled re-run split between its two skill pairs
(\S\ref{sec:capacity}).}
{\centering\scriptsize
\begin{longtable}{l>{\raggedright\arraybackslash}p{0.27\linewidth}>{\raggedright\arraybackslash}p{0.46\linewidth}}
\caption{\scorecardcaption}\label{tab:scorecard}\\
\toprule
status & claim & evidence \\
\midrule
\endfirsthead
\multicolumn{3}{l}{\emph{\tablename~\thetable\ (continued)}}\\
\toprule
status & claim & evidence \\
\midrule
\endhead
\midrule\multicolumn{3}{r}{\emph{continued on the next page}}\\
\endfoot
\endlastfoot
\textsc{failed} & within-batch composition carries the conflict switch (batch purity, not stage order, is what blocking buys) & job 1725, 3 seeds, pre-registered share rule: the missing cell (PURE batches, RANDOM order) lands at acc\_B 0.2242 vs shuf 0.2179 and blocked 0.4454, so purity share = 0.03 -- between-batch order carries ~97\% of the span. Control span $+0.0037$, unresolvable at kappa~0 as predicted. The whole conflict effect sits at the STAGE timescale the theory models \\
\addlinespace[0.3pt]
\textsc{failed} & r* identical for both pairs (factorised account) & 3 seeds, r* 15.2 [12.2,24.4] vs 5.9 [4.7,8.4]; ratio CI [0.231,0.583] excludes 1.0 \\
\addlinespace[0.3pt]
\textsc{failed} & r* ratio = m\_hat ratio = 1.271 (CCH-style load accounting) & 3 seeds, ratio 0.382, 95\% CI [0.231, 0.583] excludes 1.271; direction also reversed (higher-demand pair crosses EARLIER) \\
\addlinespace[0.3pt]
\textsc{failed} & V collapses on measured committed capacity across families & matched cross-family pairs differ by 2.9x the noise floor; families separate more at higher committed capacity \\
\addlinespace[0.3pt]
\textsc{failed} & V collapses on skill-encoding DENSITY across families & the favourable read-out was a floor artefact: matched-pair median |dV| 0.0429 (1.83x) rises to 0.0680 (2.9x) once cells below acc\_only 0.10 are gated out -- indistinguishable from committed capacity (0.0688); within-family sign still reverses on all three density statistics \\
\addlinespace[0.3pt]
\textsc{mixed} & the rank-axis sign change is ATTRIBUTABLE TO A CAPACITY CONSTRAINT & job 1691, volume-matched axis (both arms 622 rows at every rank, 3 seeds). algebra-calculus -- the pair carrying the published swing from $-0.165$ to $+0.080$ -- keeps ONE sign across the whole grid once volume is matched (pooled V $-0.013$, crossing VANISHES). geometry-combinatorics fires the registered survives-rule on a seed-consistent $+0.021$ at r=128 ($+0.006$/$+0.035$/$+0.021$), but that cell is 1.1 sigma, under the MDE80 = 0.038 every capacity cell is gated by, and it sits >10x from the published r* = 5.9. SETTLED: the published magnitudes were mostly the volume confound. OPEN: whether a small real gate survives at matched volume, below this design's power \\
\addlinespace[0.3pt]
\textsc{untested} & joint training taxes identification of the partner's PRIVATE structure & fell out of the machine-caught correction to Theorem 2(b) \\
\addlinespace[0.3pt]
\textsc{supported} & a directional order effect exists, and prerequisite-first wins once data volume is matched & balanced level-2: 6/6 pairs measurable, all prerequisite-first (alg-calc $+0.0743$ = 3.9 sigma); 5-seed ICC 0.524 (F=4.62); unbalanced sign was a volume artefact \\
\addlinespace[0.3pt]
\textsc{supported} & the arrangement contrast is ~4x less reliable than the composition contrast & 5 seeds: ICC ORD/D 0.197 (F=1.82 < F.05=2.00, n.s.) vs V 0.803 and FWT 0.820 (56 cells, kbar 3.34, 187 obs); the ordering also holds on MAGNITUDE (mean |V| 0.0416 > |ASYM| 0.0269 > |D| 0.0170, SEs 0.003/0.003/0.001); sign split 84/71, p=0.335 -- no consistent direction \\
\addlinespace[0.3pt]
\textsc{supported} & V/FWT are the reliable observables & 5 seeds: ICC 0.803 / 0.820, F 14.6 / 16.2 vs F.05 = 2.00 \\
\addlinespace[0.3pt]
\textsc{supported} & parameter count is NOT the control parameter & opposite V signs at identical 14B across families ($-0.117$ qwen3 vs $+0.092$ qwen2.5, alg$\to$calc); pooled spearman $+0.084$ on 187 triples is slice-dependent ($+0.559$ qwen2.5 vs $+0.008$ qwen3; $+0.39$ restricted to one direction) and is NOT what the claim rests on \\
\addlinespace[0.3pt]
\textsc{supported} & sign and magnitude load on DIFFERENT factors & sign scale 82.2\%/pair 12.1\% vs |V| 37.6\%/21.1\% \\
\addlinespace[0.3pt]
\textsc{supported} & |V| runs opposite to overlap (overlap = little new information) & 4/4 cells \\
\addlinespace[0.3pt]
\textsc{supported} & effective rank cannot serve as C (Johnson-Lindenstrauss) & C\_eff(f)/C(0)=0.979..1.027 with masks asserted; synthetic reproduction \\
\addlinespace[0.3pt]
\textsc{supported} & linear capacity map C(1-f) quantitatively falsified & no y* places peaks both close and in-window \\
\addlinespace[0.3pt]
\textsc{supported} & the sign gate replicates across seeds & THREE seeds: 11 of 12 (pair, rank) cells agree in sign; the one that does not is alg-calc r=16, the grid point adjacent to that pair's crossing (per-seed $-0.010$/$+0.006$/$+0.031$, mean $+0.009$ inside the noise floor) -- where the crossing is \\
\addlinespace[0.3pt]
\textsc{supported} & Theorem 1 arrangement invariance: exact identity, machine-verified & 13/13 checks; residual 1e-15 incl. on real GD solutions \\
\addlinespace[0.3pt]
\textsc{supported} & Theorem 2 capacity sign gate EXISTS at fixed model size & 824: V flips sign in BOTH pairs, 11/12 cells across three seeds, on the published volume-UNmatched arms; job 1691 then reattributes the magnitudes -- see the mechanism row \\
\addlinespace[0.3pt]
\textsc{supported} & V(r) monotone, no interior peak & no significant decrease survives the paired SEs \\
\addlinespace[0.3pt]
\textsc{supported} & Theorem 3 retention formula (contested-subspace localisation) & closed-form residual 1e-15; free data test in section B \\
\addlinespace[0.3pt]
\textsc{supported} & |D| grows when laziness is broken (higher lr) & |D| 0.0236 $\to$ 0.0507 $\to$ 0.0653 monotone across lr 1e-5/3e-5/1e-4; SE ~0.0191; all three points at n=3 \\
\addlinespace[0.3pt]
\textsc{supported} & RECENCY transfers per-skill (hub geometry) & volume-matched additivity 0.073 vs threshold 1.0 (contaminated gave 1.109); per-skill potential algebra > geometry > calculus ~ comb fits all 6 ASYMs; seed-permutation self-check passed \\
\addlinespace[0.3pt]
\textsc{supported} & label conflict revives the arrangement effect (D<0) & redo with a pooled 864-row conflict, both budgets: conflict D = $-0.1425$ (7.5 sigma) and $-0.2275$ (11.9 sigma) against control $+0.0017$ and $-0.0037$; difference $-0.2238$ (8.3 sigma). First attempt floored out and is kept as an apparatus failure \\
\addlinespace[0.3pt]
\addlinespace[0.3pt]
\textsc{supported} & the conflict switch survives an independent execution of its own protocol & two replicate corpora rebuilt from the same pool under different split seeds (51.7\% and 49.2\% overlap with the published halves), 48 trainings. Guard and separability pass on both. Replicate means $-0.0985$ (published, 8 seeds), $-0.1258$, $-0.0825$; all six new seeds negative; controls inert at 0.26 and 0.04 floors. $\delta_{\text{run}}^{\text{switch}}=0.087$ against the 0.144 the search-selected order cell carries, so P1 fires and the P3 demotion branch, committed in advance, does not \\
\addlinespace[0.3pt]
\textsc{supported} & the switch is not one pretraining family & Qwen3-8B base, five seeds, separability checked first and passing at infinity: guard 0.0520, $D=-0.0875$ at 4.58 floors, all five seeds negative against an inert control. Amendment A4, frozen while the job ran, named the guard-ladder counterexample in advance and it duly appeared \\
\addlinespace[0.3pt]
\textsc{untested} & the switch fires on a conflict drawn from real public data & W3 fired, the control moving at 4.45 floors, so the registration permits NO read-out on the wild row and the cell is not closed. Diagnosed afterwards: the two conventions the wild corpus pairs are ones the scorer treats as equal, so the contrast was unmeasurable by construction. Reported as a result about exact-match evaluation, not about arrangement \\
\midrule
\multicolumn{3}{l}{FAILED 5, MIXED 1, SUPPORTED 18, UNTESTED 2} \\
\bottomrule
\end{longtable}

\par}

\subsection*{The machine checks}\label{app:machine}

Table~\ref{tab:appmachine} lists them per statement.

\begin{table}[ht]
\centering\footnotesize
\caption{\textbf{Per-statement numerical checks.} }
\label{tab:appmachine}
\begin{tabular}{clcl}
\toprule
\# & statement checked & criterion & outcome \\
\midrule
1 & convergence map: GD limit $=$ closed form & $\|\theta_{\mathrm{GD}}-\theta_{\text{closed}}\|<10^{-6}$ & pass \\
2 & arrangement identity on real GD solutions & residual $<10^{-8}$ & pass \\
3 & $D=0$ for orthogonal skills & $<10^{-10}$ & pass \\
4 & $|D|$ grows with the principal angle & monotone in $c$ & pass \\
5 & capacity gate (b) as first drafted: selection alone gives $V>0$ & Monte-Carlo sign & \textbf{rejected} \\
6 & per-skill transferability as first argued & split vs collapse & \textbf{rejected} \\
7 & retention closed form & residual $<10^{-8}$ & pass \\
8 & conflict drives $D$ negative & monotone in $\kappa$ & pass \\
9 & $\Delta(T)\to P_W(I-P_A)$ & residual $2\times10^{-11}$ & pass \\
10 & blocked never worse when $\Sigma$'s commute & zero AM--GM violations & pass \\
11 & interior maximum of $|D(T)|$ & peak strictly inside & pass \\
12 & two-term expansion, Eq.~\eqref{eq:expansion} & rel.\ residual $<0.3\%$ & pass \\
13 & commutator/dissimilarity ratio linear in $T$ & $0.63\to20.3$ over $T{=}1$--$32$ & pass \\
14 & $\|g_D\|\le c\rho\|\delta\|$, constant sharp & $0/400$; ratio $1.000$ & pass \\
15 & $c\rho<\gamma\Rightarrow\|g_D\|<\|g_V\|$ & $1355/1355$; bites & pass \\
16 & commutator norm $=\max_i\cos\theta_i\sin\theta_i$ & $0/200$; $0$ if nested & pass \\
17 & $|V|,|D|\propto\varepsilon_B$, $|\ASYM|$ affine & $300/300$, res.\ $<10^{-9}$ & pass \\
18 & the $\varepsilon^\star$ crossing lands where predicted & $0/193$ mis-sided & pass \\
19 & driver norms alone imply the three-way ordering & $3.4\%$ of draws & \textbf{rejected} \\
20 & $\ASYM(T)$ changes sign, $\eta<1/\lambda_{\max}$ & $216/300$ draws & pass \\
21 & $D(T)$ changes sign & $211/300$ draws & pass \\
22 & $T^\star$ ranks the crossing budget & Spearman $+0.03$, $n{=}278$ & \textbf{rejected} \\
23 & the crossing is a rotation of both bilinear factors & $11.2^\circ$ vs $11.7^\circ$ & pass \\
24 & a base-state score inverts rather than fades & $28.7\%$, below chance & pass \\
25 & the joint arm restores the inverted read-out & $+62.7$ points; oracle gap $26$ & partial \\
\bottomrule
\end{tabular}
\end{table}

\section{The Room Axis: Its Premise Measured and Its Result Plotted}\label{app:premise}

\subsection*{The sufficient condition, measured on the model}

\S\ref{sec:license} states that $c\rho<\gamma$ fails in $36$ of $36$ ordered pairs; this
appendix gives the measurement. $\rho$ and $\gamma$ are ratios of projections of the
fine-tuning displacement (Frobenius norms of $\Delta W$ at the probed layer's input space),
$c$ the largest principal-angle cosine between skill activation spans; $k$ is the span
dimension and the one free parameter, so the measurement is reported across a range. The projection algebra is verified against the definitions to $10^{-7}$. Table~\ref{tab:apppremise} reports it.

\begin{table}[ht]
\centering\small
\caption{\textbf{The premise fails at every span dimension and every depth we can measure}
(Qwen2.5-7B, learning budget, averaged over the twelve ordered skill pairs). Left: span
dimension $k$ at layer fraction $0.75$. Right: depth at $k{=}1024$. The ratio $c\rho/\gamma$
must fall below $1$ for the theorem to bite; it never falls below $3.9$, and $c\ge0.999$
everywhere, the four skills' representation spans being all but identical, which is the
coherent-domain scope condition written in subspace terms.}
\label{tab:apppremise}
\begin{tabular}{rrrrr}
\toprule
$k$ & $c$ & $\rho$ & $\gamma$ & $c\rho/\gamma$ \\
\midrule
$64$ & $0.9994$ & $0.9958$ & $0.0738$ & $13.5$ \\
$256$ & $0.9998$ & $0.9876$ & $0.1337$ & $7.4$ \\
$1024$ & $0.9999$ & $0.9616$ & $0.2467$ & $3.9$ \\
\bottomrule
\end{tabular}
\hspace{1.5em}
\begin{tabular}{rrrrr}
\toprule
layer & $c$ & $\rho$ & $\gamma$ & fails \\
\midrule
$7$ & $1.0000$ & $0.9568$ & $0.2504$ & $36/36$ \\
$14$ & $0.9991$ & $0.9607$ & $0.2462$ & $36/36$ \\
$20$ & $0.9999$ & $0.9616$ & $0.2467$ & $36/36$ \\
$24$ & $0.9999$ & $0.9584$ & $0.2493$ & $36/36$ \\
\bottomrule
\end{tabular}
\end{table}

The margin shrinks with $k$ roughly as $k^{-0.45}$, and the tempting extrapolation to ratio
$\approx1$ at full rank is meaningless: at $k=d$ both $\rho$ and $\gamma$ are identically zero
and the ratio is $0/0$. The premise question is only posed where a span carries information,
and at every such cut it fails by $3.9\times$ to $13.5\times$.

\subsection*{The axis and its control, plotted}\label{app:gate}

Figure~\ref{fig:capacity} plots it.

The sign change Theorem~\ref{thm:limit}(c) requires, and the volume-matched control that removes
it for the pair that carried it. Kept because the sequence of effect, decomposition and control is
the reusable part, and out of \S\ref{sec:capacity} because a full-width figure is the wrong amount
of emphasis for a result that section withdraws.

\begin{figure}[t]
\centering
\begin{tikzpicture}[font=\small,>=Stealth,scale=1.0]
\def\xo{0.6}\def\yo{2.2}\def\sx{1.55}\def\sy{11.0}
\draw[cGrey,thin] (\xo-0.25,0) -- (\xo+8.6,0);
\draw[cGrey,thin] (\xo-0.25,0) -- (\xo-0.25,3.7);
\foreach \i/\r in {0/4,1/8,2/16,3/32,4/64,5/128}
 {\draw[cGrey,thin] (\xo+\i*\sx,0) -- (\xo+\i*\sx,-0.09);
 \node[cGrey,font=\scriptsize,anchor=north] at (\xo+\i*\sx,-0.12) {\r};}
\node[cGrey,font=\scriptsize,anchor=north] at (\xo+3.9,-0.52)
 {LoRA rank $r$: the capacity knob ($\log_2$ spacing)};
\foreach \v in {-0.15,-0.10,-0.05,0,0.05,0.10}
 {\draw[cGrey,thin] (\xo-0.25,\yo+\v*\sy) -- (\xo-0.16,\yo+\v*\sy);
 \node[cGrey,font=\scriptsize,anchor=east] at (\xo-0.3,\yo+\v*\sy) {$\v$};}
\node[cGrey,font=\scriptsize,rotate=90,anchor=south] at (\xo-1.15,1.85)
 {$V=\mathrm{acc}_B(\shuf)-\mathrm{acc}_B(\only)$};
\draw[cGrey,densely dashed] (\xo-0.25,\yo) -- (\xo+8.6,\yo);
\node[cGrey,font=\scriptsize,anchor=north east] at (\xo+8.55,\yo-0.06) {$V=0$};
\draw[cV,line width=1.2pt,solid] (0.600,0.390) -- (2.150,0.802) -- (3.700,2.299) -- (5.250,2.857) -- (6.800,3.201) -- (8.350,3.078);
\fill[cV] (0.600,0.390) circle (2.1pt);
\draw[cV,thin] (0.600,0.179) -- (0.600,0.600);
\fill[cV] (2.150,0.802) circle (2.1pt);
\draw[cV,thin] (2.150,0.592) -- (2.150,1.012);
\fill[cV] (3.700,2.299) circle (2.1pt);
\draw[cV,thin] (3.700,2.089) -- (3.700,2.509);
\fill[cV] (5.250,2.857) circle (2.1pt);
\draw[cV,thin] (5.250,2.647) -- (5.250,3.067);
\fill[cV] (6.800,3.201) circle (2.1pt);
\draw[cV,thin] (6.800,2.991) -- (6.800,3.411);
\fill[cV] (8.350,3.078) circle (2.1pt);
\draw[cV,thin] (8.350,2.868) -- (8.350,3.289);
\draw[cA,line width=1.2pt,densely dashed] (0.600,1.558) -- (2.150,2.697) -- (3.700,2.590) -- (5.250,2.712) -- (6.800,2.391) -- (8.350,2.544);
\fill[cA] (0.600,1.558) circle (2.1pt);
\draw[cA,thin] (0.600,1.348) -- (0.600,1.768);
\fill[cA] (2.150,2.697) circle (2.1pt);
\draw[cA,thin] (2.150,2.486) -- (2.150,2.907);
\fill[cA] (3.700,2.590) circle (2.1pt);
\draw[cA,thin] (3.700,2.379) -- (3.700,2.800);
\fill[cA] (5.250,2.712) circle (2.1pt);
\draw[cA,thin] (5.250,2.502) -- (5.250,2.922);
\fill[cA] (6.800,2.391) circle (2.1pt);
\draw[cA,thin] (6.800,2.181) -- (6.800,2.601);
\fill[cA] (8.350,2.544) circle (2.1pt);
\draw[cA,thin] (8.350,2.334) -- (8.350,2.754);
\node[cV!85!black,font=\scriptsize,anchor=west] at (8.490,3.078) {algebra$\to$calculus};
\node[cA!85!black,font=\scriptsize,anchor=west] at (8.490,2.444) {geometry$\to$comb.};
\fill[cV!85!black] (3.587,2.200) circle (2.6pt);
\draw[cV!85!black,line width=1.1pt] (3.098,2.200) -- (4.647,2.200);
\draw[cV!85!black] (3.098,2.100) -- (3.098,2.300);
\draw[cV!85!black] (4.647,2.100) -- (4.647,2.300);
\node[cV!85!black,font=\scriptsize,anchor=north] at (3.887,2.000) {$r^\star=15.2$};
\fill[cA!85!black] (1.476,2.200) circle (2.6pt);
\draw[cA!85!black,line width=1.1pt] (0.976,2.200) -- (2.247,2.200);
\draw[cA!85!black] (0.976,2.100) -- (0.976,2.300);
\draw[cA!85!black] (2.247,2.100) -- (2.247,2.300);
\node[cA!85!black,font=\scriptsize,anchor=south] at (1.176,2.400) {$r^\star=5.9$};
\node[cGrey,font=\scriptsize,anchor=north] at (4.35,1.62) {bars: $95\%$ bootstrap CI};
\node[anchor=north west,font=\scriptsize,align=left,text width=9.4cm] (vex) at (\xo-0.35,-0.85)
 {\textbf{\textcolor{cOK}{Existence: confirmed.}} Both pairs cross zero, monotonically and with
 no interior peak; $11$ of $12$ (pair, rank) cells agree in sign across three seeds, the
 exception being the rank adjacent to the crossing.};
\node[anchor=north west,font=\scriptsize,align=left,text width=9.4cm]
 at ([yshift=-3pt]vex.south west)
 {\textbf{\textcolor{cNO}{Location: falsified, and inverted.}} Part (c) predicts the
 \emph{higher-demand} pair (geometry--comb., $\hat y{=}2.65$ vs.\ $2.08$) crosses \emph{later}:
 the predicted ratio $r^\star_{\text{geo,comb}}/r^\star_{\text{alg,calc}}$ is $1.271$ under
 load accounting, $1.0$ if pair-independent. Measured: $0.382$, $95\%$ CI $[0.231,0.583]$,
 both excluded, and the direction reversed.};
\end{tikzpicture}
\caption{\textbf{The capacity departure: the predicted sign change appears, and its own control
withdraws most of it.} $V(r)$ at fixed model (Qwen2.5-7B) and fixed data, varying only the LoRA
rank. Points are three-seed means with problem-level standard errors; filled markers on the axis
are bootstrapped zero-crossings with $95\%$ intervals ($4000$ draws resampling both problems and
seeds). The sign change Theorem~\ref{thm:limit}(c) requires is reproduced for both pairs. Read
with Table~\ref{tab:capmatched}, which matches the arms' data volume and removes the crossing
entirely for the pair that carried it.}
\label{fig:gate}
\end{figure}

\section{The Conflict Rows Under Convention-Agnostic Scoring}\label{app:agnostic}

\paragraph{The objection.} In every conflicted corpus the blocked arm trains convention $A$
first and convention $B$ last, and the committed read-out $D=\mathrm{acc}_B(\text{shuf}) -
\mathrm{acc}_B(A{\to}B)$ scores exactly the convention the blocked arm wrote in its final stage.
Blocking's win could therefore be the evaluation siding with the last stage, a constructed
alignment rather than a property of the arms.

\paragraph{The instrument.} Both conventions are evaluated on the \emph{same} problems in the
\emph{same} order (the analyzer re-asserts this on the eval files before reading any number), so
a scoring rule with no favourite convention exists without new inference: count each problem at
the rate of its better-scoring convention, $\mathrm{agn} = \tfrac1n\sum_i \max(p^A_i, p^B_i)$
with $p^A_i,p^B_i$ the $k{=}4$ pass fractions under each convention, and read
$D_{\mathrm{agn}} = \mathrm{agn}(\text{shuf}) - \mathrm{agn}(A{\to}B)$ on the very runs that
produced the committed $D$. Three read-outs then separate three stories. $D$ committed answers
\emph{which convention} wins; $\Delta$SOLVE answers \emph{how much is solved}, and is conserved
when samples merely reallocate between conventions on one problem; $D_{\mathrm{agn}}$ moves if
and only if commitment itself has a price. A row with SOLVE at the floor and $D_{\mathrm{agn}}$
firing is a shuffled arm splitting its four samples across conventions \emph{on the same
problem}, not one solving fewer problems.

\begin{table}[H]
\centering\small
\caption{\textbf{The committed win is allocation where both conventions are correct, and
per-problem mixing where they are not.} $D$ committed is the main text's contrast,
$\Delta\mathrm{SOLVE}$ its capability sum, $D$ agnostic the best-convention-per-problem
contrast. Parentheses give distance from zero in $0.0191$ floors; bold marks $\ge2$ floors. The
numeral rows at $7$B and the synthetic row carry eight seeds; the two scale rows were never
extended and carry three. The three-seed synthetic subset previously printed, $-0.2275$ at
$11.91$ floors, had a $t$ of $-88.8$ against the eight-seed $-33.4$, which is why the count is
now stated beside every row.}
\label{tab:agnostic}
\begin{tabular}{lcccc}
\toprule
conflicted corpus & $n$ & $D$ committed & $\Delta\mathrm{SOLVE}$ & $D$ agnostic \\
\midrule
numeral against spelled, 7B & $8$ & $\mathbf{-0.0985}$ ($\mathbf{5.16}$) & $+0.0016$ ($0.09$) & $-0.0084$ ($0.44$) \\
Arabic against Roman, 7B & $3$ & $\mathbf{-0.1046}$ ($\mathbf{5.48}$) & $\mathbf{+0.0500}$ ($\mathbf{2.62}$) & $+0.0308$ ($1.61$) \\
synthetic +1, 7B & $8$ & $\mathbf{-0.2346}$ ($\mathbf{12.29}$) & $-0.0052$ ($0.27$) & $\mathbf{-0.0887}$ ($\mathbf{4.65}$) \\
numeral against spelled, 3B & $3$ & $-0.0263$ ($1.38$) & $-0.0071$ ($0.37$) & $+0.0004$ ($0.02$) \\
numeral against spelled, 14B & $3$ & $\mathbf{-0.2037}$ ($\mathbf{10.67}$) & $-0.0312$ ($1.64$) & $\mathbf{-0.0496}$ ($\mathbf{2.60}$) \\
\bottomrule
\end{tabular}

\end{table}

\paragraph{Reading, row by row.} At $7$B, where both conventions are correct, the committed wins
of $-0.0985$ and $-0.1180$ become $-0.0084$ ($0.44$ floors) and $+0.0308$ ($1.61$ floors) under
agnostic scoring: an order of magnitude smaller and inside two floors, so blocking solves
\emph{nothing} the shuffled arm cannot solve and only commits, which is \S\ref{sec:conflict}'s
claim re-derived on a metric that cannot be accused of siding with the last stage. (The Roman
row's $\Delta$SOLVE, $+0.0465$ at $2.43$ floors with five seeds, is previously unreported and is
given as measured: if anything the \emph{shuffled} arm solves more there, opposite to a
capability story for blocking. Its controls read null with $|t|<0.7$ and per-seed spreads up to
$\pm0.11$, so we quote it and decline to lean on it.) The two rows where $D_{\mathrm{agn}}$ does
fire, synthetic $+1$ at $-0.0887$ and $14$B at $-0.0496$, are precisely those where SOLVE stays
low ($0.27$ and $1.64$ floors): the shuffled arm loses its best convention per problem while its
sum holds, which is mixing within a problem, the compressive-write signature the path-level
the path-level account predicts \citep{dpd2026}. Where the conflict is between two \emph{correct} formats,
commitment is free and the whole committed effect is allocation; where it contests answer
\emph{values}, refusing to commit costs the shuffled arm its per-problem best while costing it
nothing in sum. No row supports ``blocking teaches the model more mathematics''.

\section{The Per-Skill Order Potential}\label{app:potential}

Held out of \S\ref{sec:budget} because its evidential weight does not match its visual weight:
it is a three-parameter fit to six observations at one budget, and four of those six pairs
reverse sign at the other budget we measured. It is reported in full because it would be the
cheapest thing in this paper if it replicated at $\ge6$ skills.

At one stated budget the six matched pairwise effects are consistent with a per-skill potential,
$\ASYM(X,Y)\approx\phi_X-\phi_Y$ ($\phi$: algebra $0>$ geometry $-0.032>$ calculus
$-0.072\approx$ combinatorics $-0.081$), and the pre-registered additivity-over-triples
criterion passes at $0.073$ against a threshold of $1.0$. On unmatched data the same test reads
$1.109$ and the recovered ordering inverts, which is the sharpest argument we have that volume
matching is mandatory.

\emph{The structure predicts out of sample, which is the part worth having.} Additivity over
triples is a goodness-of-fit statistic on the data $\phi$ was fitted to, and on its own weak
evidence: three free parameters against six observations. So we froze $\phi$, fitted on
\emph{pairs} only, wrote down the implied ranking of all six three-skill orderings, and trained
them. The prediction file is timestamped $1$h$46$m before the first three-stage run existed, and
the scoring script fits nothing. Both read-outs were fixed in that file: P1, Spearman
$\rho \ge 0.60$ between the predicted $\phi_{\text{first}}-\phi_{\text{last}}$ and the measured
$\mathrm{TOTAL}$; P2, the predicted-best and predicted-worst orderings separated by more than
$2$ s.e. Table~\ref{tab:orderperm} gives $\rho=0.657$ and a best-to-worst gap of
$0.1215\pm0.0331$, $3.7$ s.e., so both pass: a quantity fitted on pairs ranked sequences it had
never seen. The reduction is therefore measured rather than conditional, and \textbf{order
search costs $n$ per-skill measurements rather than $n!$ trials}. That is the one constructive
result this paper sets against \S\ref{sec:budget}'s wall, which says a \emph{search} over
recipes cannot pay for itself; the potential says the search is unnecessary where it would have
been run.

Three reasons still keep the structure held lightly, none repaired by
Table~\ref{tab:orderperm}. The ranking is not monotone, the predicted-third ordering outscoring
the predicted-second by $0.036$, which is why $\rho$ is $0.657$ and not $1$, and only the
extremes are separated by more than noise. Four of the six pairs reverse sign between our two
budgets, so $\phi$ is a snapshot at its stated budget, to be refit at the budget where it will
be used. And the additivity residual beats the marginal noise floor only because same-seed arms
share checkpoints, a permutation check anyone reporting a suspiciously good fit should run. A
$\ge6$-skill replication is now registered with the criterion that matters
: the fit sees ten pairs, five are held out entirely, the
verdict turns on out-of-sample sign and error rather than the in-sample residual, and a clause
commits to \emph{withdrawing} this appendix rather than softening it if the held-out test fails.
What is established here is narrower than the additive form: the fitted potential has
out-of-sample ordering power at $n=3$, at one budget, for four skills.

\begin{table}[H]
\centering\small
\caption{\textbf{A potential fitted on pairs ranks orderings it never saw.} All six orderings of
three skills, trained to the same budget, scored by $\mathrm{TOTAL}$ at the final checkpoint, so
an ordering is rewarded for what it \emph{retains}, not for what it learned last. The
predicted column is $\phi_{\text{first}}-\phi_{\text{last}}$ from a file frozen before any three-stage sequence was trained;
the middle position cancels at $n=3$). Qwen2.5-7B, three seeds, per-cell s.e.\ $0.0234$.
The analysis measures and compares and fits
nothing.}
\label{tab:orderperm}
\begin{tabular}{clccc}
\toprule
predicted & ordering & $\phi_{\text{first}}-\phi_{\text{last}}$ & measured & seed \\
rank & (three stages, final checkpoint) & (frozen) & $\mathrm{TOTAL}$ & spread \\
\midrule
$1$ & algebra $\to$ geometry $\to$ combinatorics & $+0.0808$ & $\mathbf{0.7284}$ & $0.0834$ \\
$2$ & geometry $\to$ algebra $\to$ combinatorics & $+0.0493$ & $0.6555$ & $0.0313$ \\
$3$ & algebra $\to$ combinatorics $\to$ geometry & $+0.0315$ & $0.6917$ & $0.0438$ \\
$4$ & geometry $\to$ combinatorics $\to$ algebra & $-0.0315$ & $0.6889$ & $0.0395$ \\
$5$ & combinatorics $\to$ algebra $\to$ geometry & $-0.0493$ & $0.6681$ & $0.0667$ \\
$6$ & combinatorics $\to$ geometry $\to$ algebra & $-0.0808$ & $\mathbf{0.6069}$ & $0.0437$ \\
\midrule
\multicolumn{2}{l}{Spearman $\rho$ (P1 threshold $0.60$)}
  & \multicolumn{3}{c}{$\mathbf{0.657}$ \quad\textbf{pass}} \\
\multicolumn{2}{l}{best $-$ worst (P2 threshold $2\,\mathrm{s.e.}$)}
  & \multicolumn{3}{c}{$0.1215 \pm 0.0331 = \mathbf{3.7\,\mathrm{s.e.}}$ \quad\textbf{pass}} \\
\bottomrule
\end{tabular}

\end{table}

\begin{figure}[t]
\centering
\begin{tikzpicture}[font=\small,>=Stealth]
\begin{scope}
 \node[anchor=south west,font=\bfseries\scriptsize] at (-0.2,3.75)
 {(a) the fitted potential \emph{is} the training order};
 \def\sc{29} 
 \draw[cGrey,thin] (0.55,0.25) -- (0.55,3.45);
 \foreach \phiv/\name in {0/algebra, -0.032/geometry, -0.072/calculus, -0.081/combinatorics}
 {\fill[cA] (0.55,{0.45+(\phiv+0.081)*\sc}) circle (2.3pt);
 \draw[cGrey,thin] (0.45,{0.45+(\phiv+0.081)*\sc}) -- (0.65,{0.45+(\phiv+0.081)*\sc});
 \node[font=\scriptsize,anchor=west] at (0.78,{0.45+(\phiv+0.081)*\sc})
 {\name\ \ $\phi=\phiv$};}
 \draw[cGrey,->,thick] (0.05,3.15) -- (0.05,0.55);
 \node[cGrey,font=\scriptsize,rotate=90,anchor=south] at (-0.25,1.85) {train in this order};
 \draw[cA!80!black,line width=0.9pt] (3.85,2.79) -- (4.05,2.79) -- (4.05,1.83) -- (3.85,1.83);
 \node[cA!85!black,font=\scriptsize,anchor=west,align=left,text width=2.4cm] at (4.12,2.31)
 {gap $=0.032$;\\ measured $\ASYM$\\ $=+0.0320$};
 \node[font=\scriptsize,anchor=north west,text width=6.1cm,align=left] at (-0.35,0.15)
 {Four numbers replace $4!=24$ candidate orders; for $n$ skills, $n-1$ measurements replace
 $n!$ trials. Prerequisite-first falls out, matching human curricula.};
\end{scope}
\begin{scope}[shift={(8.05,0)}]
 \node[anchor=south west,font=\bfseries\scriptsize] at (-0.55,3.75)
 {(b) six measured $\ASYM$ vs.\ $\phi_X-\phi_Y$};
 \def\ox{0.0}\def\oy{1.55}\def\sc{19}
 \draw[cGrey,thin] (\ox-0.95,\oy) -- (\ox+2.0,\oy);
 \draw[cGrey,thin] (\ox,\oy-1.15) -- (\ox,\oy+2.0);
 \foreach \v in {-0.04,0.04,0.08}
 {\draw[cGrey,thin] (\ox+\v*\sc,\oy-0.07) -- (\ox+\v*\sc,\oy+0.07);
 \node[cGrey,font=\scriptsize,anchor=north] at (\ox+\v*\sc,\oy-0.13) {$\v$};}
 \foreach \v in {0.04,0.08}
 {\draw[cGrey,thin] (\ox-0.07,\oy+\v*\sc) -- (\ox+0.07,\oy+\v*\sc);
 \node[cGrey,font=\scriptsize,anchor=east] at (\ox-0.09,\oy+\v*\sc) {$\v$};}
 \draw[cGrey,thin] (\ox-0.07,\oy-0.04*\sc) -- (\ox+0.07,\oy-0.04*\sc);
 \node[cGrey,font=\scriptsize,anchor=west] at (\ox+0.09,\oy-0.04*\sc) {$-0.04$};
 \node[cGrey,font=\scriptsize,anchor=north] at (\ox+1.05,\oy-0.95) {$\phi_X-\phi_Y$ (predicted)};
 \node[cGrey,font=\scriptsize,rotate=90,anchor=south] at (\ox-1.32,\oy+0.45) {$\ASYM$ (measured)};
 \draw[cGrey,densely dashed] (\ox-0.055*\sc,\oy-0.055*\sc) -- (\ox+0.095*\sc,\oy+0.095*\sc);
 \node[cGrey,font=\scriptsize,anchor=north west] at (\ox+0.088*\sc,\oy+0.083*\sc) {$y=x$};
 \foreach \x/\y in {0.0719/0.0743, 0.0808/0.0779, 0.0493/0.0506, 0.0315/0.0320,
 0.0089/0.0104, -0.0404/-0.0396}
 \fill[cA] (\ox+\x*\sc,\oy+\y*\sc) circle (2.2pt);
 \node[font=\scriptsize,anchor=north west,text width=5.6cm,align=left] at (-1.0,0.15)
 {r.m.s.\ residual $0.0018$; pre-registered triangle additivity $0.073$ against a
 threshold of $1.0$. The residual beats the marginal noise floor because same-seed arms
 share stage-1 checkpoints (\S\ref{sec:budget}).};
\end{scope}
\end{tikzpicture}
\caption{\textbf{At one stated budget, order collapses onto one number per skill.} (a) The
potential fitted to six volume-matched pairwise measurements; training in decreasing $\phi$ is
prerequisite-first. (b) Each pair's measured directional effect against the potential difference
the fit predicts. Six measurements are summarised by three free numbers and the pre-registered
additivity criterion passes by more than an order of magnitude, but three parameters against six
observations is weak evidence, and the sharpest of the three reasons to hold this lightly is
that four of these six pairs reverse sign at the budget that teaches.}
\label{fig:potential}
\end{figure}

\section{Census of Every Quantitative Read-Out}\label{app:census}

The multiplicity paragraph of \S\ref{sec:license} corrects over the total number of
quantitative read-outs this paper reports. \emph{Counting rule:} one read-out per contrast,
gate, or test statistic for which the paper states a value against a floor, a $\sigma$, or a
pass/fail criterion, including read-outs reported only in summarised form (the capacity
grids), and excluding guards used solely as admission gates and the pre-registered machine
checks of Appendix~\ref{app:machine}, which test the mathematics rather than the models.

\begin{table}[ht]
\centering\footnotesize
\caption{\textbf{All $94$ read-outs, grouped.} At $\alpha=0.05$, Bonferroni over $94$ puts the
two-sided threshold at $z\approx3.5$; every firing conflict row with three seeds clears it.}
\label{tab:appcensus}
\begin{tabular}{lrl}
\toprule
family & count & where \\
\midrule
reliability $F$-tests, one per observable & $6$ & Table~\ref{tab:reliability} \\
fixed-volume contrasts, published and matched & $6$ & Table~\ref{tab:matchedthree} \\
volume doubling, four skills and their mean & $5$ & Table~\ref{tab:volume} \\
directional order, six pairs at two budgets & $12$ & Table~\ref{tab:budgetflip} \\
in-run sign change, one per trajectory pair & $4$ & \S\ref{sec:budget} \\
replication contrasts (endpoint re-runs) & $3$ & \S\ref{sec:budget} \\
switch protocol replication, two independent executions & $2$ & \S\ref{sec:budget} \\
additivity tests, matched and unmatched & $2$ & \S\ref{sec:budget} \\
conflict-program rows ($13$ with a $D$, one excluded by its guard) & $14$ & Table~\ref{tab:conflict} \\
conflict-minus-control differences & $2$ & \S\ref{sec:conflict} \\
allocation totals, three conflict corpora & $3$ & \S\ref{sec:conflict} \\
block-length sweeps, two corpora & $2$ & \S\ref{sec:conflict} \\
capacity cells, published and volume-matched & $24$ & \S\ref{sec:capacity} \\
crossing-location ratio & $1$ & \S\ref{sec:capacity} \\
scalar sign-axes, matched cross-family $|\Delta V|$ & $4$ & \S\ref{sec:capacity} \\
premise $c\rho<\gamma$ (36 pairs $\times$ 4 depths, summarised) & $1$ & Appendix~\ref{app:premise} \\
$T^\star$ ranking tests, in-model and on-model & $2$ & \S\ref{sec:license} \\
learning-rate slope of $|D|$ & $1$ & \S\ref{sec:conclusion} \\
\midrule
total & $94$ & \\
\bottomrule
\end{tabular}
\end{table}

\end{document}